\newtheorem{prop}{Proposition}
\newtheorem{thm}{Theorem}
\newtheorem{remark}{Remark}
\newtheorem{lemma}{Lemma}
\newtheorem{defin}{Definition}
\newtheorem{assumption}{Assumption}
\newcommand{\Ncal}{\mathcal{N}}
\def\R{\mathbb{R}}
\def\E{\mathbb{E}} 
\def\diag{\textup{diag}}
\def\R{\mathbb{R}}
\def\E{\mathbb{E}}
\DeclareMathOperator*{\cov}{Cov}
\DeclareMathOperator*{\Cov}{Cov}
\DeclareMathOperator*{\Cor}{Cor}
\DeclareMathOperator*{\rank}{rank}
\DeclareMathOperator{\Tr}{Tr}
\DeclareMathOperator*{\card}{card}
\DeclareMathOperator*{\minimize}{minimize}
\DeclareMathOperator*{\argmax}{argmax}
\DeclareMathOperator*{\Pen}{Pen}
\DeclareMathOperator*{\Ind}{\mathds{1}}
\newcommand{\bpm}{\begin{pmatrix}}
\newcommand{\epm}{\end{pmatrix}}
\newcommand{\be}{\begin{equation}}
\newcommand{\ee}{\end{equation}}
\newcommand{\bes}{\begin{equation*}}
\newcommand{\ees}{\end{equation*}}
\def\boxit#1{\vbox{\hrule\hbox{\vrule\kern6pt  \vbox{\kern6pt#1\kern6pt}\kern6pt\vrule}\hrule}}
\newcolumntype{L}{>{\centering\arraybackslash}m{1.6cm}}
\newcommand\undermat[2]{%
  \makebox[0pt][l]{$\smash{\underbrace{\phantom{%
    \begin{matrix}#2\end{matrix}}}_{\text{$#1$}}}$}#2}
\def\boxit#1{\vbox{\hrule\hbox{\vrule\kern6pt  \vbox{\kern6pt#1\kern6pt}\kern6pt\vrule}\hrule}}
\def\be{\boldsymbol{e}}
\def\bs{\boldsymbol{s}}
\def\bu{\boldsymbol{u}}
\def\bv{\boldsymbol{v}}
\def\bw{\boldsymbol{w}}
\def\bx{\boldsymbol{x}}
\def\by{\boldsymbol{y}}
\def\bA{\boldsymbol{A}}
\def\bB{\boldsymbol{B}}
\def\bD{\boldsymbol{D}}
\def\bG{\boldsymbol{G}}
\def\bH{\boldsymbol{H}}
\def\bI{\boldsymbol{I}}
\def\bM{\boldsymbol{M}}
\def\bN{\boldsymbol{N}}
\def\bP{\boldsymbol{P}}
\def\bQ{\boldsymbol{Q}}
\def\bR{\boldsymbol{R}}
\def\bU{\boldsymbol{U}}
\def\bV{\boldsymbol{V}}
\def\bW{\boldsymbol{W}}
\def\bX{\boldsymbol{X}}
\def\bY{\boldsymbol{Y}}
\def\bZ{\boldsymbol{Z}}
\def\bmu{\boldsymbol{\mu}}
\def\btheta{\boldsymbol{\theta}}
\def\blambda{\boldsymbol{\lambda}}
\def\bpi{\boldsymbol{\pi}}
\def\bphi{\boldsymbol{\phi}}
\def\bDelta{\boldsymbol{\Delta}}
\def\bLambda{\boldsymbol{\Lambda}}
\def\bPsi{\boldsymbol{\Psi}}
\def\bSigma{\boldsymbol{\Sigma}}
\def\bTheta{\boldsymbol{\Theta}}
\DeclareMathOperator*{\pr}{pr}
\begin{document}

\def\spacingset#1{\renewcommand{\baselinestretch}%
{#1}\small\normalsize} \spacingset{1}

  \title{\bf Joint association and classification analysis of multi-view data}
  \author{Yunfeng Zhang\thanks{
    Yunfeng Zhang is PhD Student, Department of Statistics, Texas A\&M University, email: yfzhang@stat.tamu.edu}\hspace{.2cm}
    and 
   Irina Gaynanova\thanks{Irina Gaynanova is Assistant Professor, Department of Statistics, Texas A\&M University, email: irinag@stat.tamu.edu}}
    \date{} 
  \maketitle

\bigskip
\begin{abstract}
Multi-view data, that is matched sets of measurements on the same subjects, have become increasingly common with advances in multi-omics technology. Often, it is of interest to find associations between the views that are related to the intrinsic class memberships. Existing association methods cannot directly incorporate class information, while existing classification methods do not take into account between-views associations. In this work, we propose a framework for Joint Association and Classification Analysis of multi-view data (JACA).  Our goal is not to merely improve the misclassification rates, but to provide a latent representation of high-dimensional data that is both relevant for the subtype discrimination and coherent across the views. We motivate the methodology by establishing a connection between canonical correlation analysis and discriminant analysis. We also establish the estimation consistency of JACA in high-dimensional settings. A distinct advantage of JACA is that it can be applied to the multi-view data with block-missing structure, that is to cases where a subset of views or class labels is missing for some subjects. The application of JACA to quantify the associations between RNAseq and miRNA views with respect to consensus molecular subtypes in colorectal cancer data from The Cancer Genome Atlas project leads to improved misclassification rates and stronger found associations compared to existing methods. 

\end{abstract}

%
\noindent
{\it Keywords:} 
Canonical correlation analysis; data integration; discriminant analysis; semi-supervised learning; sparsity; variable selection.
\vfill

\section{Introduction}
\label{sec:intro}

Multi-view data, that is matched sets of measurements on the same subjects, have become increasingly common with advances in multi-omics technology. For example, The Cancer Genome Atlas Project \citep{weinstein2013cancer} contains multiple views for the same set of subjects: gene expression, methylation, etc. At the same time, the subjects are often separated into subtypes (classes).
Our motivating example is the colorectal cancer (COAD) data with two views: RNASeq data of normalized counts and miRNA expression. The Colorectal Cancer Consortium determined four consensus molecular subtypes (CMS) of colorectal cancer based on gene expression information \citep{Guinney:2015dm}, and these subtypes were shown to have distinct survival prognosis. Since each view presents complementary information regarding the subject's biological system, our goal is to identify co-varying patterns between RNA-Seq and miRNA views that are relevant for discrimination of these subtypes.

A line of research has focused on finding co-varying patterns between the views based on canonical correlation analysis (CCA) \citep{Chen:2013uk, Gao:2014uz, Witten:2009vx}. These methods, however, do not use subtype information. Thus, while they find associations between the views, these associations are not necessarily related to subtypes of interest. \citet{Witten:2009wa} propose supervised CCA, where the relevant variables from each view are pre-selected before CCA based on the strength of their marginal association with the response. 
\citet{Li:2017uj} consider factor model, where each view is decomposed into shared and individual structures informed by the covariats. However, both~\citet{Witten:2009wa} and~\citet{Li:2017uj} use subtype information indirectly, and the methods are not tailored for subtype discrimination.




On the other hand, the subtype discrimination can be achieved using one of the many classification methods such as multinomial regression, multi-class support vector machines, discriminant analysis, etc. However, one has to either apply the chosen method separately to each view, or apply the method to the concatenated matrix of views. The separate approach may lead to inconsistent results across the views. The concatenation approach, however, ignores differences in signal strength across the views. When one view has a much stronger subtype-specific signal, the concatenation masks the less-dominant signals in other views. Our numerical results in Section~\ref{sec:COAD} demonstrate this phenomenon for COAD data: the subtype signal is much stronger in RNAseq view, and the discriminant analysis on concatenated dataset leads to almost no selected variables in miRNA view. Hence, the classification approaches do not allow to answer our primary question: what are the co-varying patterns between RNA-Seq and miRNA views that are relevant for subtype discrimination. These approaches also require known class assignments, and thus can not borrow strength from samples for which multiple views are available, but class information is missing.  

In this work, we develop a framework for Joint Association and Classification Analysis (JACA) of multi-view data by connecting discriminant analysis with canonical correlation analysis. Our goal is not to merely improve the misclassification rates, but to provide a latent representation of high-dimensional data that is both relevant for the subtype discrimination and coherent across the views. A distinct advantage of the proposed method is that it can be applied to the multi-view data with block-missing structure, that is to cases where a subset of views or class labels is missing for some subjects. For COAD data example, out of 282 subjects with RNAseq data, only 167 subjects have corresponding miRNA and cancer subtype information. While most methods can only use data from these 167 subjects with complete information, JACA can also use data from 78 extra subjects for which at least two types of information are available (two views with no class labels, or class labels with only one view). This extra information leads to improved classification accuracy, as shown in Section~\ref{sec:COAD} and in 
Section~\ref{sec:simu} of the Appendix.

In summary, our work makes the following contributions. First, we establish a connection between CCA and discriminant analysis using the factor model. Secondly, we use this connection to develop the JACA method for Joint Classification and Association Analysis, the method's formulation via convex optimization problem leads to efficient computations. Third, we establish estimation consistency of JACA in high-dimensional settings. Finally, we extend JACA to the settings with missing subsets of views or classes.

The rest of the paper is organized as follows. Section~\ref{sec:methods} establishes the connection between CCA and linear discriminant analysis, and describes the proposed JACA method. Section~\ref{sec:theory} provides the estimation error bound in high-dimensional settings. Section~\ref{sec:implementation} describes the method's implementation. Section~\ref{sec:missing} describes extension of JACA to block-missing data. 
Section~\ref{sec:COAD} provides the analysis of COAD data. Section~\ref{sec:discussion} concludes with discussion. The technical proofs of the main results, analysis of the breast cancer data from The Cancer Genome Atlas project, and additional numerical studies are in the Appendix.

\subsection{Relation to prior work}

Several method combine the task of finding associations between the views with the task of learning the regression coefficients. \citet{Gross:2014ux} propose to combine canonical correlation analysis with linear regression. The method, however, is restricted to univariate continuous response and can only be applied to two views. \citet{Luo:2016tb} propose to combine canonical correlation analysis objective with a general class of loss functions. Unlike \citet{Gross:2014ux}, the method could be applied to more than two views, and binary response. Nevertheless, the method is not suited for multi-group classification .
Finally, neither~\citet{Gross:2014ux} nor~\citet{Luo:2016tb} discuss the underlying population model. In contrast, the established connection between CCA and discriminant analysis in Section~\ref{sec:methods} allows us to both establish underlying JACA population model, as well as establish finite-sample estimation error bounds in Section~\ref{sec:theory}. To our knowledge, this is the first result that shows consistency of a joint learning method from theoretical perspective, the methods of \citet{Luo:2016tb, Gross:2014ux} come with no theoretical guarantees.

Since the method of \citet{Luo:2016tb} allows to perform joint association and classification in the two-class case, we further contrast it with JACA. First, we use discriminant analysis rather than the regression framework, which allows us to fix the rank for model fitting to be $K-1$, where $K$ is the number of classes. In~\citet{Luo:2016tb}, the rank of the model has to be chosen by the user. Secondly, we are able to formulate JACA as a convex optimization problem by using the optimal scoring formulation of multi-class discriminant analysis \citep{Hastie:1994cx} and fixing the scores to be orthogonally invariant \citep{Gaynanova:2019pe}. We add group-lasso type penalty to the optimization objective to allow for variable selection, and use block-coordinate descent algorithm to solve the corresponding convex problem. In contrast, the method of \citet{Luo:2016tb} is nonconvex, and requires the use of variable splitting and augmented Lagrangian. 

While estimation consistency has been established separately for discriminant analysis \citep{Li:2017kb, Gaynanova:2019pe} and canonical correlation analysis \citep{Gao:2014uz}, providing similar guarantees for JACA is not straightforward. We use the augmented data approach to rewrite our method as a penalized linear regression problem, and 
 use sub-exponential concentration bounds to control the inner-product between the augmented random design matrix and the random matrix of residuals. Despite the dependency between corresponding design matrix and the matrix of residuals, we obtain the estimation error bound that is of the same order as the known bounds for group-lasso linear regression \citep{Lounici:2011fl,Nardi:2008cf}. 
 
 \subsection{Notation}
 For two scalars $a, b \in \R$, we let $a \vee b = \max(a,b)$. 
For a vector $\bv\in \R^p$, we let $\|\bv\|_2 = (\sum_{j=1}^pv_j^2)^{1/2}$, $\|\bv\|_1 =\sum_{j=1}^p|v_j|$ and $\|\bv\|_{\infty}=\max_j|v_j|$. 
For matrices $\bM,\bN \in \R^{n \times p}$, we let $\|\bM\|_F = (\sum_{i=1}^n\sum_{j=1}^pm_{ij}^2)^{1/2}$, $\|\bM\|_{\infty,2}=\max_{1\leq i\leq n}(\sum_{j=1}^pm_{ij}^2)^{1/2}$, $\|\bM\|_{1,2}=\sum_{i=1}^ n(\sum_{j=1}^pm_{ij}^2)^{1/2}$ and $\langle \bM,\bN \rangle = \Tr(\bM^\top \bN) $.
We use $\bI=\bI_p$ to denote $p\times p$ identity matrix, and ${\bf{0}}$ to denote zero matrix. For two  sequences of scalars $a_1,\dots, a_n,\dots$ and $b_1,\dots, b_n,\dots$, we use $b_n = o(a_n)$ if $\lim_{n\to \infty}(b_n/a_n)=0$ and $b_n = O(a_n)$ if $\lim_{n \to \infty}(b_n/a_n) < C$ for some finite constant $C$.
For two sequences of random variables $x_1, \dots, x_n, \dots$ and $y_1, \dots, y_n, \dots$, we use $y_n = o_p(x_n)$ if for any $\varepsilon>0$ $P(|y_n/x_n|<\varepsilon)\to 0$ as $n \to \infty$, and $y_n = O_p(x_n)$ if for any $\varepsilon > 0$ there exists $M_{\varepsilon}$ such that $P(|y_n/x_n| > M_{\varepsilon})<\varepsilon$ for all $n$.

\section{Proposed methodology}\label{sec:methods}

\subsection{Connection between canonical correlation and linear discriminant analysis}\label{sec:sCCALDA}

In this section, we review the canonical correlation analysis (CCA) and the linear discriminant analysis (LDA). We demonstrate that discriminant vectors in LDA coincide with the subset of canonical vectors in CCA, and use this connection to motivate the proposed method.

We consider $n$ independent realizations of a random vector $(\bx_{1},\dots,\bx_{D}, y)\in \R^{p_1}\times \dots \times \R^{p_D}\times \{1,\dots, K\}$, where $\bx_{d}$ is the vector of measurements from view $d\in\{1,\dots, D\}$, and $y$ is the class assignment, with $P(y=k) = \pi_{k}$, $k=1,\dots, K.$ 



\begin{assumption}\label{a:xmeancov}
The marginal means $\E(\bx_d)=\bf{0}$ with marginal covariances $\bSigma_d = \E(\bx_d\bx_d^{\top})$ and marginal cross-covariances $\bSigma_{ld} = \E(\bx_l\bx_d^{\top})$, $l\neq d$, $l, d \in \{1, \dots, D\}$. Further,
\begin{equation}\label{eq:condX12}
\E \left[\bpm
\bx_1 \\
\vdots\\
\bx_D
\epm \Big | y = k \right]
= \bpm
\bmu_{1k}\\
\vdots\\
\bmu_{Dk}
\epm,\quad \Cov\left[ \bpm \bx_1 \\\vdots\\ \bx_D \epm \Big | y = k\right] = \bSigma_y =  \bpm
\bSigma_{1y} &...&\bSigma_{1Dy}\\
&\vdots&\\
\bSigma_{1Dy}^{\top}&...&\bSigma_{Dy}
\epm.
\end{equation}
\end{assumption}
 Thus, we assume that the marginal means are zero (this simplifies the notation, in practice we column-center the data), and the conditional class-covariance matrices are equal across $k$ (the key assumption in LDA). 
 
The population CCA for the given two views $d$ and $l$ seeks linear combinations $(\btheta_d, \btheta_l)$ that maximize $\Cor(\btheta_d^{\top}\bx_d, \btheta_l^{\top}\bx_l)$, that is it seeks at most $r$ pairs $(\btheta_{d}^{(k)}, \btheta_{l}^{(k)})$ that satisfy
\begin{align*}
(\btheta_{d}^{(l)}, \btheta_{l}^{(k)}) =\argmax_{\bw_d^{(k)}, \bw_l^{(k)}}&\left\{ \bw_d^{(k)\top}\bSigma_{dl}\bw_l^{(k)}\right\}\\
\mbox{subject to}&\quad \bw_{d}^{(k)\top}\bSigma_d\bw_{d}^{(k)} = 1,\bw_{l}^{(k)\top}\bSigma_l\bw_{l}^{(k)} = 1,\\
&\quad \bw_{d}^{(k)\top}\bSigma_d\bw_{d}^{(j)} = 0,\  \bw_{l}^{(k)\top}\bSigma_l\bw_{l}^{(j)} = 0\quad \mbox{for}\quad j<k.
\end{align*}
The pairs $(\btheta_d^{(k)}, \btheta_l^{(k)})$ are called canonical vectors, and the values $\rho_k = \btheta_d^{(k)\top}\bSigma_{dl}\btheta_l^{(k)}$ are canonical correlations. 
Furthermore, given the above constraints, the r pairs $(\btheta_{l}^{(k)}, \btheta_{d}^{(k)})$ solve the above problem if and only if \citep{Chen:2013uk}
\begin{equation}\label{eq:sigma12CCA}
\bSigma_{dl} = \bSigma_d \left(\sum_{k=1}^r \rho_k \btheta_{d}^{(k)}\btheta_{l}^{(k)\top}\right) \bSigma_l.
\end{equation}
That is, the population CCA problem is equivalent to the matrix decomposition problem of $\bSigma_{ld}$
We will use this alternative formulation to draw the connections with LDA. 

The population Fisher's LDA for the given view $d$ seeks matrix of discriminant vectors $\bTheta_d\in \R^{p_d \times (K-1)}$ that maximizes the between-class variability with respect to within-class variability \citep{Mardia:1979vm}. 
\citet{Gaynanova:2016wk} show that $\bTheta_d =  \bSigma_{d}^{-1}\bDelta_d$, where $\bDelta_d$ is the matrix of orthogonal mean contrasts between $K$ classes~\citep{Searle:2006ww}. Furthemore, the matrix of discriminant vectors $\bTheta_d$ is only unique up to orthogonal transformation and scaling, since for any $(K-1)\times (K-1)$ orthogonal matrix $\bR$ and a full rank $(K-1)\times (K-1)$ diagonal matrix $\bD$, the matrix $\bTheta_d \bR \bD$ leads to equivalent discrimination as $\bTheta_d$.

To connect the canonical vectors in CCA with discriminant vectors in LDA, we consider two cases. In the first case, we assume that the views are uncorrelated conditional on the class membership, that is $\Cov(\bx_d, \bx_l | y) = {\bf{0}}$, or equivalently $\bSigma_{dly} = {\bf{0}}$ for $d\neq l$. 
In the second case, we assume that there exist other  factors independent from class membership that drive associations between the views.

Consider the first case - the views are only related due to shared class membership.
\begin{thm}\label{p:factor} Let random $\by\in\{1,\dots, K\}$, $\bx_d\in \R^{p_d}$ be as in Assumption~\ref{a:xmeancov}, and let $\Cov(\bx_d, \bx_l | y) = \bSigma_{dly} = {\bf{0}}$ for all $l\neq d\in \{1,\dots, D\}$. Then\\
1. The following factor model holds:
\begin{equation}\label{eq:factor1}
\bx_d = \bDelta_d \bu_y  + \bSigma_{dy}^{1/2}\be_d,
\end{equation}
where $\bu_y = f(y) \in \R^{K-1}$ satisfies $\E(\bu_y)={\bf0}$, $\Cov(\bu_y)=\bI$; $\bDelta_d\in \R^{p_d\times (K-1)}$ is the matrix of orthogonal contrasts between class means such that $\bLambda_d = \bDelta_d^{\top} \bSigma_{dy}^{-1}\bDelta_d$ is diagonal; and $\be_d\in \R^{p_d}$ is independent from $y$ with $\E(\be_d)={\bf0}$, $\cov(\be_d)=\bI$. 

2. The marginal cross-covariance matrices satisfy
$$
\bSigma_{dl} = \bSigma_d \left(\sum_{k=1}^{K-1}\rho_k\btheta_{d}^{(k)}\btheta_{l}^{(k)\top}\right)\bSigma_l,
$$
where $\bTheta_d = [\btheta_{d}^{(1)}\dots \btheta_{d}^{(K-1)}]$ is equal to $\bSigma_d^{-1}\bDelta_d$ up to column-scaling and is orthonormal with respect to $\bSigma_d$, and $\rho_k$ are diagonal elements of matrix $(\bI+\bLambda_l)^{-1/2}\bLambda_l^{1/2}\bLambda_d^{1/2}(\bI+\bLambda_d)^{-1/2}$.
\end{thm}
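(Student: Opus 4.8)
The plan is to prove Part~1 by an explicit construction of the factor model, and then to deduce Part~2 from it using the Woodbury identity together with the characterization~\eqref{eq:sigma12CCA}.

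For Part~1 I would start from the decomposition $\bx_d = \E(\bx_d\mid y) + \{\bx_d - \E(\bx_d\mid y)\}$. Collect the class means in $\bM_d = [\bmu_{d1},\dots,\bmu_{dK}]$ and write $\bPi = \diag(\pi_1,\dots,\pi_K)$; Assumption~\ref{a:xmeancov} gives $\bM_d\bPi\boldsymbol{1} = \E(\bx_d) = {\bf 0}$, so the class means span a subspace of dimension at most $K-1$. Take $\bW\in\R^{K\times(K-1)}$ whose columns form an orthonormal basis of the orthogonal complement of $(\sqrt{\pi_1},\dots,\sqrt{\pi_K})^\top$, so $\bW\bW^\top = \bI_K - \bPi^{1/2}\boldsymbol{1}\boldsymbol{1}^\top\bPi^{1/2}$, and check $\bM_d = (\bM_d\bPi^{1/2}\bW)(\bPi^{-1/2}\bW)^\top$. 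Setting $\bDelta_d := \bM_d\bPi^{1/2}\bW$ and letting $\bu_y$ equal the $k$-th column of $\bW^\top\bPi^{-1/2}$ on $\{y=k\}$ yields $\E(\bx_d\mid y) = \bDelta_d\bu_y$, with $\E(\bu_y)={\bf 0}$, $\Cov(\bu_y)=\bW^\top\bW=\bI$, and $\bDelta_d\bDelta_d^\top = \bM_d\bPi\bM_d^\top$ the between-class covariance of view $d$, whence $\bSigma_d = \bSigma_{dy} + \bDelta_d\bDelta_d^\top$. Since $\bW$ is unique only up to right multiplication by a $(K-1)\times(K-1)$ orthogonal matrix, I would use this freedom to rotate $\bW$ so that $\bLambda_d = \bDelta_d^\top\bSigma_{dy}^{-1}\bDelta_d = \bW^\top(\bPi^{1/2}\bM_d^\top\bSigma_{dy}^{-1}\bM_d\bPi^{1/2})\bW$ is diagonal. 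Finally, put $\be_d := \bSigma_{dy}^{-1/2}\{\bx_d - \E(\bx_d\mid y)\}$; the equal-within-class-covariance assumption gives $\E(\be_d\mid y)={\bf 0}$ and $\Cov(\be_d\mid y)=\bI$ for every class, hence $\E(\be_d)={\bf 0}$, $\Cov(\be_d)=\bI$ and uncorrelatedness of $\be_d$ with every function of $y$, and $\bSigma_{dly}={\bf 0}$ translates into $\Cov(\be_d,\be_l\mid y)={\bf 0}$, so $\E(\be_d\be_l^\top)={\bf 0}$. This gives~\eqref{eq:factor1}.

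For Part~2 I would expand $\bSigma_{dl}=\E(\bx_d\bx_l^\top)$ using~\eqref{eq:factor1} for both views. The cross terms vanish: $\E(\be_d\bu_y^\top)=\E\{\E(\be_d\mid y)\bu_y^\top\}={\bf 0}$ and symmetrically, and $\E(\be_d\be_l^\top)={\bf 0}$ from Part~1, leaving $\bSigma_{dl}=\bDelta_d\Cov(\bu_y)\bDelta_l^\top = \bDelta_d\bDelta_l^\top$. From $\bSigma_d = \bSigma_{dy}+\bDelta_d\bDelta_d^\top$ and the Woodbury identity, $\bSigma_d^{-1}\bDelta_d = \bSigma_{dy}^{-1}\bDelta_d(\bI+\bLambda_d)^{-1}$, so $\bDelta_d^\top\bSigma_d^{-1}\bDelta_d = \bLambda_d(\bI+\bLambda_d)^{-1}$ is diagonal; rescaling columns, $\bTheta_d := \bSigma_d^{-1}\bDelta_d\,\bLambda_d^{-1/2}(\bI+\bLambda_d)^{1/2}$ satisfies $\bTheta_d^\top\bSigma_d\bTheta_d = \bI$ and equals $\bSigma_d^{-1}\bDelta_d$ up to column scaling, hence $\bDelta_d = \bSigma_d\bTheta_d\,\bLambda_d^{1/2}(\bI+\bLambda_d)^{-1/2}$. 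Substituting this and the analogue for view $l$ into $\bSigma_{dl}=\bDelta_d\bDelta_l^\top$ gives $\bSigma_{dl} = \bSigma_d\bTheta_d\,\bG\,\bTheta_l^\top\bSigma_l$ with $\bG = \bLambda_d^{1/2}(\bI+\bLambda_d)^{-1/2}(\bI+\bLambda_l)^{-1/2}\bLambda_l^{1/2}$; since all factors are diagonal, $\bG = \diag(\rho_1,\dots,\rho_{K-1})$ with $\rho_k^2 = \{\lambda_{d,k}/(1+\lambda_{d,k})\}\{\lambda_{l,k}/(1+\lambda_{l,k})\}\in[0,1)$, equal to the diagonal of $(\bI+\bLambda_l)^{-1/2}\bLambda_l^{1/2}\bLambda_d^{1/2}(\bI+\bLambda_d)^{-1/2}$. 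After reindexing so $\rho_1\ge\dots\ge\rho_{K-1}$, this is exactly the form~\eqref{eq:sigma12CCA}, so by \citet{Chen:2013uk} the columns of $\bTheta_d,\bTheta_l$ are canonical vectors and the $\rho_k$ canonical correlations; in particular the rescaled LDA directions $\bSigma_d^{-1}\bDelta_d$ coincide with a subset of the canonical vectors, as claimed.

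The step I expect to require the most care is making $\bG$ genuinely diagonal: with a single latent $\bu_y$ shared across the two views, this forces the contrast matrices to be built from the \emph{same} rotation of $\bW$, i.e.\ it requires $\bLambda_d$ and $\bLambda_l$ to be simultaneously diagonalizable, and I would pin down precisely which part of the hypotheses guarantees this (or flag it as an additional structural condition), while also tracking the sign freedoms in $\bW$ and in the column rescalings and the ordering of the $\rho_k$. A secondary, minor point is that the argument only uses $\E(\be_d\mid y)={\bf 0}$ and $\Cov(\be_d\mid y)=\bI$ rather than full independence of $\be_d$ and $y$, so the literal independence asserted in~\eqref{eq:factor1} would need a separate justification, e.g.\ under the Gaussian model implicit in LDA.
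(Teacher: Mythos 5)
Your proposal is correct and follows essentially the same route as the paper's proof: an explicit contrast-based construction of $\bDelta_d$ and $\bu_y$ (your orthonormal basis $\bW$ of the complement of $\bPi^{1/2}\boldsymbol{1}$ is a coordinate-free version of the paper's explicit probability-weighted Helmert-type contrasts), a rotation to make $\bLambda_d$ diagonal, and then the Woodbury identity with the rescaling $\bTheta_d = \bSigma_d^{-1}\bDelta_d\bLambda_d^{-1/2}(\bI+\bLambda_d)^{1/2}$ for Part 2. The two caveats you flag are genuine but are equally glossed over in the paper's own proof, which asserts independence of $\be_d$ from $y$ without deriving it from the moment assumptions and writes $\bSigma_{ld}=\bDelta_l\bDelta_d^{\top}$ in Part 2, tacitly using the same rotation (hence the same $\bu_y$) for both views --- exactly the simultaneous-diagonalization point you raise --- so relative to the paper your argument has no additional gap.
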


Theorem~\ref{p:factor} states that when $\bSigma_{dly} = {\bf{0}}$, the canonical vectors in CCA coincide with discriminant vectors in LDA up to column-scaling. Since the scaling affects neither the canonical correlations nor the classification rule, the population CCA and LDA are equivalent in this case. Their sample counterparts, however, will not be equivalent due to (i) the contamination by noise, and (ii) distinct sparsity regularization in high-dimensional settings.

\begin{remark}
$\bu_y$ represents a transformed class indicator vector. 
When $K=2$, $\bu_y = f(y) = \sqrt{\pi_2/\pi_1}\Ind\{y=1\} - \sqrt{\pi_1/\pi_2}\Ind\{y=2\}$, case $K>2$ is in the Appendix.
\end{remark}
\begin{remark}
If $\rank(\bDelta_d^{\top}\bSigma_{dy}^{-1}\bDelta_d) < K-1$, then~\eqref{eq:factor1} is not identifiable. For clarity, we assume full rank $K-1$, but the results can be generalized at the expense of a more technical proof. When $K=2$, this is equivalent to requiring the class-conditional means to be distinct, which is a minor condition.
\end{remark}

Consider now the second case, that is there exists other factors independent from class membership that drive associations between the views. This scenario is illustrated in Figure~\ref{eq:factor2}, and the corresponding extension of the factor model~\eqref{eq:factor1} is
\begin{equation}\label{eq:factor2}
\bx_d  =  \bDelta_d \bu_y + \bA_d \bu + \widetilde \bSigma_d^{1/2}\be_d,
\end{equation}
where $\bDelta_d$, $\bu_y$ are as in Proposition~\ref{p:factor}, $\bu\in \R^q$ represents $q$ extra common factors between the $D$ views, and $\be_d \in \R^{p_d}$ is an independent noise vector. Following standard identifiability conditions for factor models \citep[Chapter~9.2]{Mardia:1979vm}, we assume that $\bu\in \R^q$ is independent from $y$ with $\E(\bu)={\bf0}$, $\Cov(\bu)=\bI$; and the loadings matrix $\bV_d = [\widetilde \bSigma^{-1/2}_d\bDelta_d\ \widetilde \bSigma_d^{-1/2}\bA_d]\in \R^{p_d \times (K-1+q)}$ is orthogonal. Here $\widetilde \bSigma_d$ is no longer class-conditional covariance matrix, but rather covariance matrix after accounting for both class membership ($\bu_y$) and other factors ($\bu$). It follows that $\bSigma_{dy} = \widetilde \bSigma_{d} + \bA_d\bA_d^{\top}$. When $\bA_d = \bf{0}$, the model reduces to~\eqref{eq:factor1}. We assume $\bA_d$ is full rank given $q$ (with $\bA_d = \bf{0}$ for $q=0$).

\begin{figure}[!t]
    \centering
    \includegraphics[width = \textwidth]{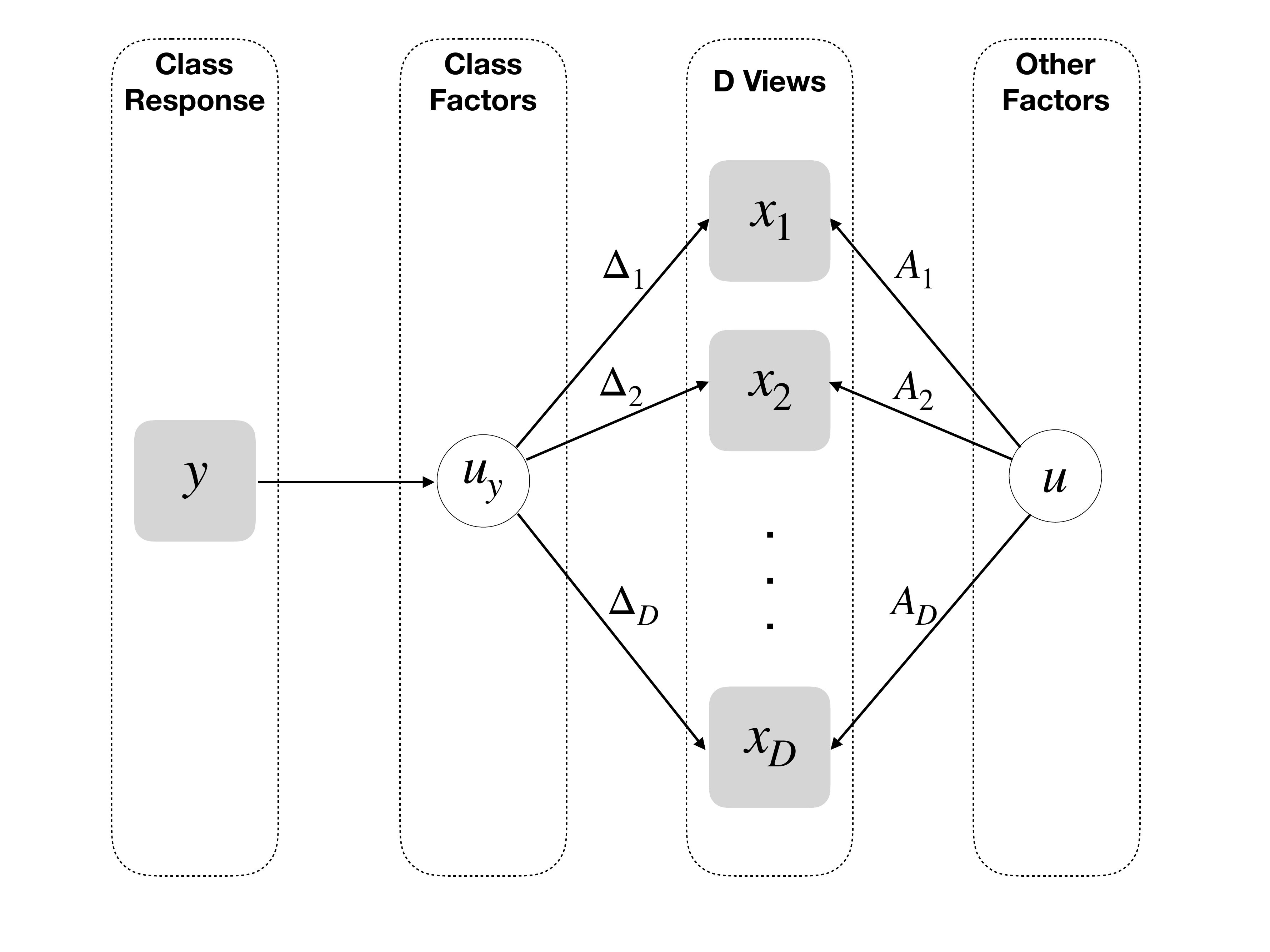}
    \caption{Illustration of factor model~\eqref{eq:factor2}, the $D$ views are connected due to both shared class factors $u_y$ and other independent factors $u$.}
    \label{fig:factor_model}
\end{figure}



$$
$$


\begin{thm}\label{t:sigma12}
Consider model~\eqref{eq:factor2} with corresponding identifiability conditions . Let $\bSigma_{ld} = \E (\bx_l\bx_d^{\top})$ be the marginal cross-covariance matrix between $\bx_l$ and $\bx_d$. Then
\begin{equation*}
    \bSigma_{ld} = \bSigma_l\left(\sum_{k=1}^{q+K-1} \rho_k \btheta_{l}^{(k)}\btheta_{d}^{(k)\top}\right)\bSigma_d,
\end{equation*}
where $\Big\{\btheta_{d}^{(k)}\Big\}_{k=1}^{q+K-1}$ are orthonormal with respect to $\bSigma_d$, $ \bSigma_l\left(\sum_{k=1}^{q} \rho_k \btheta_{l}^{(k)}\btheta_{d}^{(k)\top}\right)\bSigma_d = \bA_l\bA_d^{\top}$ and $\bSigma_l\left(\sum_{k=q+1}^{q+K-1} \rho_k \btheta_{l}^{(k)}\btheta_{d}^{(k)\top}\right)\bSigma_d = \bDelta_l\bDelta_d^{\top}$. That is, $\Big\{\btheta_{d}^{(k)}\Big\}_{k=q+1}^{q+K-1}$ are equal to $\bSigma_d^{-1}\bDelta_d$ up to column-scaling.
\end{thm}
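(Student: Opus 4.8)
The plan is to extract the relevant population second moments from the factor model~\eqref{eq:factor2}, then exploit the orthogonality of the loadings matrix $\bV_d=[\widetilde\bSigma_d^{-1/2}\bDelta_d\ \widetilde\bSigma_d^{-1/2}\bA_d]$ to obtain closed forms for $\bSigma_d^{-1}\bDelta_d$ and $\bSigma_d^{-1}\bA_d$, and finally to read off the claimed decomposition of $\bSigma_{ld}$.

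First I would compute the second moments. Expanding $\bSigma_{ld}=\E(\bx_l\bx_d^\top)$ and $\bSigma_d=\E(\bx_d\bx_d^\top)$ using~\eqref{eq:factor2}, the mutual independence of $\bu_y$, $\bu$, $\be_l$, $\be_d$ — so that $\E(\bu_y\bu^\top)={\bf 0}$ and $\E(\be_l\be_d^\top)={\bf 0}$ for $l\neq d$ — and the normalizations $\Cov(\bu_y)=\Cov(\bu)=\bI$, $\Cov(\be_d)=\bI$, all cross terms drop, giving $\bSigma_{ld}=\bDelta_l\bDelta_d^\top+\bA_l\bA_d^\top$ and $\bSigma_d=\bDelta_d\bDelta_d^\top+\bA_d\bA_d^\top+\widetilde\bSigma_d$. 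Thus $\bSigma_{ld}$ already splits into a ``$\bDelta$-part'' and an ``$\bA$-part'', and what remains is to recognize each as $\bSigma_l(\cdot)\bSigma_d$ with vectors orthonormal in the relevant $\bSigma$-inner product.

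The crux is inverting $\bSigma_d$. Writing $\bDelta_d=\widetilde\bSigma_d^{1/2}(\widetilde\bSigma_d^{-1/2}\bDelta_d)$ and $\bA_d=\widetilde\bSigma_d^{1/2}(\widetilde\bSigma_d^{-1/2}\bA_d)$ and using that the columns of $\bV_d$ are orthonormal (the identifiability condition), the display above becomes $\bSigma_d=\widetilde\bSigma_d^{1/2}(\bI+\bV_d\bV_d^\top)\widetilde\bSigma_d^{1/2}$. Here $\bV_d\bV_d^\top$ is the orthogonal projector onto the range of $\bV_d$, so $(\bI+\bV_d\bV_d^\top)^{-1}=\bI-\tfrac12\bV_d\bV_d^\top$, whence $\bSigma_d^{-1}=\widetilde\bSigma_d^{-1/2}(\bI-\tfrac12\bV_d\bV_d^\top)\widetilde\bSigma_d^{-1/2}$. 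Since the columns of $\widetilde\bSigma_d^{-1/2}\bDelta_d$ and of $\widetilde\bSigma_d^{-1/2}\bA_d$ are themselves columns of $\bV_d$, the projector fixes them, which yields the key identities $\bSigma_d^{-1}\bDelta_d=\tfrac12\widetilde\bSigma_d^{-1}\bDelta_d$, $\bSigma_d^{-1}\bA_d=\tfrac12\widetilde\bSigma_d^{-1}\bA_d$, and — reading off the block structure of $\bV_d^\top\bV_d=\bI$ — the Gram relations $\bDelta_d^\top\bSigma_d^{-1}\bDelta_d=\tfrac12\bI_{K-1}$, $\bA_d^\top\bSigma_d^{-1}\bA_d=\tfrac12\bI_q$, $\bDelta_d^\top\bSigma_d^{-1}\bA_d={\bf 0}$, and identically with $d$ replaced by $l$.

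Finally I would assemble the decomposition. Take $\btheta_l^{(k)}$ for $k=1,\dots,q$ to be the columns of $\sqrt2\,\bSigma_l^{-1}\bA_l$ and $\btheta_l^{(q+k)}$ for $k=1,\dots,K-1$ the columns of $\sqrt2\,\bSigma_l^{-1}\bDelta_l$ (and analogously for index $d$), and set $\rho_k=1/2$ for all $k$. The Gram relations make $\{\btheta_l^{(k)}\}_{k=1}^{q+K-1}$ orthonormal with respect to $\bSigma_l$ (and $\{\btheta_d^{(k)}\}$ with respect to $\bSigma_d$); substituting into $\bSigma_l\big(\sum_{k=1}^{q}\rho_k\btheta_l^{(k)}\btheta_d^{(k)\top}\big)\bSigma_d$ and $\bSigma_l\big(\sum_{k=q+1}^{q+K-1}\rho_k\btheta_l^{(k)}\btheta_d^{(k)\top}\big)\bSigma_d$ and using $\bA_l\bA_d^\top=\bSigma_l(\bSigma_l^{-1}\bA_l)(\bSigma_d^{-1}\bA_d)^\top\bSigma_d$ together with the analogous identity for $\bDelta$ recovers $\bA_l\bA_d^\top$ and $\bDelta_l\bDelta_d^\top$ respectively, whose sum is $\bSigma_{ld}$ by the first step; moreover $[\btheta_d^{(q+1)}\ \cdots\ \btheta_d^{(q+K-1)}]=\sqrt2\,\bSigma_d^{-1}\bDelta_d$ is $\bSigma_d^{-1}\bDelta_d$ up to column scaling. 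The only genuinely nontrivial step is the inversion of $\bSigma_d$, which rests entirely on the orthogonality of $\bV_d$; everything else is moment arithmetic and bookkeeping. (A byproduct of this normalization is that all canonical correlations here equal $1/2$; one could alternatively deduce the statement by applying Theorem~\ref{p:factor} to the augmented factor $(\bu_y^\top,\bu^\top)^\top$ with loadings $[\bDelta_d\ \bA_d]$, the present formulation merely making the split of the canonical subspace into its $\bA$- and $\bDelta$-parts explicit.)
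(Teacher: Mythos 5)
Your moment computation and the final assembly are fine, and inverting $\bSigma_d$ directly (rather than via SVDs) would be a legitimate alternative route; the problem is that your central step is pinned to a reading of the identifiability condition that is stronger than what the model actually imposes. You take ``$\bV_d$ is orthogonal'' to mean $\bV_d^\top\bV_d=\bI_{K-1+q}$, which is precisely what makes $\bV_d\bV_d^\top$ a projector and gives $(\bI+\bV_d\bV_d^\top)^{-1}=\bI-\tfrac12\bV_d\bV_d^\top$, the factors $\tfrac12$, and $\rho_k\equiv 1/2$. But the condition the paper intends and uses everywhere (it cites Mardia's standard factor-model identifiability, which is diagonality of the loadings' Gram matrix) is only that the columns of $\bV_d$ are mutually orthogonal, i.e.\ $\bV_d^\top\bV_d$ is diagonal with $\bLambda_d=\bDelta_d^\top\widetilde\bSigma_d^{-1}\bDelta_d$ an arbitrary diagonal matrix: see Theorem~\ref{p:factor}, whose $\rho_k$ are the diagonal entries of $(\bI+\bLambda_l)^{-1/2}\bLambda_l^{1/2}\bLambda_d^{1/2}(\bI+\bLambda_d)^{-1/2}$, the discussion following Theorem~\ref{t:sigma12} about whether the LDA directions carry the maximal $\rho_k$ (vacuous if every $\rho_k=1/2$), and the simulation design, where $\bDelta_d^\top\widetilde\bSigma_d^{-1}\bDelta_d=\diag(c_d^2)\neq\bI$. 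You noticed the ``all canonical correlations equal $1/2$'' byproduct yourself; that is the signal that the unit-norm normalization you assumed is not part of the model, so as written your argument proves only the special case $\bLambda_d=\bI$ (and likewise for the $\bA$-block).

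Under the intended condition the projector identity fails ($\bV_d\bV_d^\top$ is not idempotent), so $\bSigma_d^{-1}\bDelta_d\neq\tfrac12\widetilde\bSigma_d^{-1}\bDelta_d$. The repair is mechanical but changes all your constants: by Woodbury, as in the proof of Theorem~\ref{p:factor}, $\bDelta_d^\top\bSigma_d^{-1}\bDelta_d=\bLambda_d^{1/2}(\bI+\bLambda_d)^{-1}\bLambda_d^{1/2}$, the analogous diagonal identity holds for $\bA_d^\top\bSigma_d^{-1}\bA_d$, and $\bA_d^\top\bSigma_d^{-1}\bDelta_d={\bf 0}$; rescaling the columns of $\bSigma_d^{-1}\bDelta_d$ and $\bSigma_d^{-1}\bA_d$ by these diagonal Grams yields $\bSigma_d$-orthonormal $\btheta_d^{(k)}$, the $\rho_k$ become products of the two per-view diagonal factors (so they genuinely differ across components), and your assembly step then goes through verbatim. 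The paper's own proof avoids tracking these normalizations by instead taking SVDs of $\bSigma_l^{-1/2}\bA_l\bA_d^\top\bSigma_d^{-1/2}$ and $\bSigma_l^{-1/2}\bDelta_l\bDelta_d^\top\bSigma_d^{-1/2}$ and using only the cross-block relation $\bA_d^\top\bSigma_d^{-1}\bDelta_d={\bf 0}$ (again via Woodbury) to conclude that the two singular systems are mutually orthogonal, setting $\bTheta_d=\bSigma_d^{-1/2}\bP_{q+K-1}$.
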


When $q=0$, by Theorem~\ref{p:factor} the canonical vectors in CCA coincide with discriminant vectors in LDA. When $q>0$, there exist $q$ extra pairs of canonical vectors in CCA. If the LDA directions correspond to the maximal $\rho_k$, then the first $K-1$ canonical pairs coincide with discriminant vectors. If the LDA directions do not correspond to the maximal $\rho_k$, then the first $K-1$ canonical pairs include other shared factors that are independent of class membership.

\subsection{Joint association and classification analysis}\label{sec:proposal}

In light of correspondence between CCA and LDA explored in Theorems~\ref{p:factor}--~\ref{t:sigma12}, our proposal is based on combining the strengths of both approaches. Specifically, our goal is to estimate the view-specific matrices of canonical vectors that are relevant for class discrimination, that is to estimate the discriminant vectors $ \bSigma_d^{-1}\bDelta_d$ (up to orthogonal transformation and scaling). In comparison to separate LDA, we want to improve the estimation accuracy by jointly analyzing multiple views aided by CCA formulation interpretation. In comparison to CCA, we want the proposed model not to be fooled by leading canonical correlations that are independent of shared class memberships (other factors in Figure~\ref{fig:factor_model}). Our approach thus combines the canonical correlation objective with the classification objective of LDA.


For the correlation between the views, we consider the sample CCA criterion for column-centered views $\bX_d$ and $\bX_l$ as 
\begin{equation}\label{eq:CCA}
\minimize_{\bW_d, \bW_l}\|\bX_d\bW_d - \bX_l\bW_l\|_F^2\quad\mbox{subject to}\quad \frac1n\bW_d^{\top}\bX_d^{\top}\bX_d\bW_d = \bI, \quad \frac1n\bW_l^{\top}\bX_l^{\top}\bX_l\bW_l = \bI.
\end{equation}

For the classification, we consider the optimal scoring formulation \citep{Hastie:1994cx} of multi-class discriminant analysis \citep{Gaynanova:2016wk, Gaynanova:2019pe} for view $d$: 
\begin{equation}\label{eq:Vcan}
\minimize_{\bW_d \in \R^{p \times (K-1)}} \Big\{\frac1{2n}\|\widetilde \bY - \bX_d\bW_d\|^2_F + \lambda \Pen(\bW_d)\Big\},
\end{equation}
where $\Pen(\bW_d)$ is an optional penalty used to put structural assumptions such as sparsity, and $\widetilde \bY \in \R^{n \times (K-1)}$ is the transformed class response.  Let $\bZ\in \R^{n \times K}$ be the class-indicator matrix, $n_k$ be the number of samples in class $k$ and $s_k = \sum_{i=1}^k n_i$. Then $\widetilde \bY = \bZ\bH$, where $\bH \in \R^{K \times (K-1)}$ has columns $\bH_l\in \R^{K}$ defined as
$$
\bH_l=\Big(\Big\{(nn_{l+1})^{1/2}(s_ls_{l+1})^{-1/2}\Big\}_{l},\quad-(ns_l)^{1/2}(n_{l+1}s_{l+1})^{-1/2},\quad{\bf0}_{K-1-l}\Big)^{\top}.
$$


 To estimate the discriminant vectors $ \bSigma_d^{-1}\bDelta_d$ (up to orthogonal transformation and scaling), 
we propose to combine canonical correlation objective~\eqref{eq:CCA} with classification objective~\eqref{eq:Vcan}:
\begin{equation}\label{eq:colP_old}
\begin{split}
\minimize_{\bW_1,...,\bW_D}\Big\{&\frac{\alpha}{2nD} \sum_{d=1}^D \|\widetilde \bY - \bX_d \bW_d\|^2_F +\frac{1-\alpha}{2nD(D-1)} \sum_{d=1}^{D-1}\sum_{l = d+1}^D \|\bX_d\bW_d - \bX_l \bW_l\|^2_F\\& +\sum_{d=1}^D\lambda_d\Pen(\bW_d)\Big\}\quad\mbox{subject to}\quad \frac1n\bW_d^{\top}\bX_d^{\top}\bX_d\bW_d = \bI,\quad\mbox{}d=1,\dots, D.
\end{split}
\end{equation}
Here $\alpha \in [0,1]$ controls the relative weights between LDA and CCA criteria. When $\alpha = 0$, \eqref{eq:colP_old} reduces to sparse CCA. When $\alpha=1$, \eqref{eq:colP_old} reduces to sparse LDA with additional orthogonality constraints. While the orthogonality constraints are required for CCA criterion~\eqref{eq:CCA} to avoid trivial zero solution, they are  not necessary in~\eqref{eq:colP_old} as long as $\alpha>0$ due to the addition of classification objective. Moreover, it is sufficient to estimate the discriminant vectors up to orthogonal transformation and scaling as the classification rule is invariant to these transformations~\citep{Gaynanova:2016wk}. Therefore, we only consider $\alpha>0$, and drop the orthogonality constraints in~\eqref{eq:colP_old} leading to 
\begin{equation}\label{eq:colP}
\begin{split}
\minimize_{\bW_1,...,\bW_D}\Big\{&\frac{\alpha}{2nD} \sum_{d=1}^D \|\widetilde \bY - \bX_d \bW_d\|^2_F \\&+\frac{1-\alpha}{2nD(D-1)} \sum_{d=1}^{D-1}\sum_{l = d+1}^D \|\bX_d\bW_d - \bX_l \bW_l\|^2_F +\sum_{d=1}^D\lambda_d\Pen(\bW_d)\Big\}.
\end{split}
\end{equation}
We call~\eqref{eq:colP} JACA for Joint Association and Classification Analysis, and choose convex $\Pen(\bW_d)=\sum_{i=1}^{p_d}\|\bw_{di}\|_2$ to encourage row-wise sparsity in $\bW_d$. With this choice of penalty, problem~\eqref{eq:colP} is jointly convex in $\bW_1,\dots, \bW_D$. We do not consider $\ell_1$ penalty since it induces element-wise rather than row-wise sparsity in $\bW_d$, hence the variables are not completely eliminated from the model and the sparsity pattern is distorted by orthogonal transformation. Other row-wise sparse penalties that are nonconvex are discussed in \citet{Huang:2012wg}. 

JACA problem~\eqref{eq:colP} can be rewritten as a multi-response linear regression problem using the augmented data approach. For simplicity, we illustrate the case $D=2$, the more general case is described in the Appendix. 
Let $ \bW = ( \bW_1^\top, \bW_2^\top)^\top$,
$$
\bY' = \frac{\sqrt{\alpha}}{\sqrt{nD}}\bpm
\widetilde \bY\\
\widetilde \bY\\
\bf{0}\\
\epm,\quad
\bX' = \frac1{\sqrt{nD}}\bpm
\sqrt{\alpha}\bX_1 & \bf{0}\\
\bf{0} & \sqrt{\alpha}\bX_2\\
\sqrt{(1-\alpha)/(D-1)}\bX_1 & -\sqrt{(1-\alpha)/(D-1)}\bX_2 
\epm.
$$

Then~\eqref{eq:colP} is equivalent to
\begin{equation}\label{eq:augmented}
\minimize_{\bW}\Big\{2^{-1}\|\bY'-\bX'\bW\|^2_F  +\sum_{d=1}^D\lambda_d\Pen(\bW_d)\Big\}.
\end{equation}

\section{Estimation consistency}\label{sec:theory}
In this section, we derive the finite sample bound on the estimation error of the minimizer of~\eqref{eq:colP} with $\Pen(\bW_d)=\sum_{i=1}^{p_d}\|\bw_{di}\|_2$. Recall that our goal is to estimate discriminant vectors $\bTheta_d = \bSigma_d^{-1}\bDelta_d$ (up to orthogonal rotation and column scaling). 

First, consider the population objective function of~\eqref{eq:colP} with $\lambda_d=0$. To simplify the notation, we will work with equivalent augmented formulation~\eqref{eq:augmented}. Using the definition of augmented $\bX'$, $\bY'$ and Lemma~8 in \citet{Gaynanova:2015km}, 
\begin{align*}
\E(\bX'^\top \bX') =:\bG,\quad \E(\bX'^\top  \bY')= \widetilde\bDelta + o(1).
\end{align*}
Here $o(1)$ term captures the differences between empirical class proportions $n_k/n$ and prior class probabilities $\pi_k$, and $\widetilde\bDelta\in \R^{\left(\sum_{i=1}^Dp_i\right)\times(K-1)}$ has $r$th column defined as
\begin{equation*}
\widetilde\bDelta_r = \frac\alpha D\frac{\sqrt{\pi_{r+1}}\sum_{k=1}^r\pi_k(\bmu_k - \bmu_{r+1})}{\sqrt{\sum_{k=1}^r\pi_k\sum_{k=1}^{r+1}\pi_{k}}}.
\end{equation*}
Therefore
\begin{equation}\label{eq:augmentTrace}
\begin{split}
\E(2^{-1}\|\bY'-\bX'\bW\|_F^2) = 2^{-1}\Tr\{\bW^\top\bG\bW\} -\Tr\{\bW^{\top}\widetilde\bDelta\} + o(1) + C,
\end{split}
\end{equation}
where $C$ does not depend on $\bW$. Let $\bW^* = \bG^{-1}\widetilde \bDelta$. Then $\bW^*=( \bW_1^{*\top}, \bW_2^{*\top}, \dots, \bW_D^{*\top})^{\top}$ is the minimizer of population loss~\eqref{eq:augmentTrace} up to $o(1$) term. We next show that $\bW^*_d$ corresponds to discriminant vectors $\bTheta_d$ up to orthogonal transformation and column-scaling.

\begin{lemma} \label{l:theoreticalResults}
Consider model~\eqref{eq:factor2} with corresponding identifiability conditions. For any $\alpha \in (0,1]$, there exists orthogonal matrices $\bR_d$ such that $\bW_d^*\bR_d^\top$ is equal to $\bTheta_d = \bSigma_d^{-1}\bDelta_d$ up to column scaling.
\end{lemma}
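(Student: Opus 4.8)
The plan is to compute $\bW^*_d = (\bG^{-1}\widetilde\bDelta)_d$ explicitly by block-inverting $\bG = \E(\bX'^\top\bX')$ and then matching the result to $\bTheta_d = \bSigma_d^{-1}\bDelta_d$ up to right multiplication by an orthogonal matrix and column scaling. First I would write out $\bG$ in block form: from the definition of $\bX'$, the $(d,d)$ block of $n D\, \bG$ is $\alpha\,\E(\bX_d^\top\bX_d) + (1-\alpha)/(D-1)\sum_{l\neq d}\E(\bX_d^\top\bX_d)$ up to $o(1)$, i.e.\ a scalar multiple of $n\bSigma_d$, while the $(d,l)$ off-diagonal block for $d\neq l$ is $-\,(1-\alpha)/(D-1)\,\E(\bX_d^\top\bX_l) = -\,(1-\alpha)/(D-1)\, n\bSigma_{dl}$ up to $o(1)$. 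So $\bG = \bB - \bM$ where $\bB$ is block-diagonal with $d$th block $D^{-1}\bSigma_d$ and $\bM$ has off-diagonal blocks $D^{-1}(1-\alpha)/(D-1)\,\bSigma_{dl}$ and appropriately adjusted diagonal. The key structural input is Theorem~\ref{t:sigma12}, which tells us $\bSigma_{dl} = \bSigma_d\big(\sum_{k=1}^{q+K-1}\rho_k\btheta_d^{(k)}\btheta_l^{(k)\top}\big)\bSigma_l$, so every cross-covariance block, when sandwiched appropriately, lives in the span of the canonical vectors $\btheta_d^{(k)}$, which themselves are $\bSigma_d$-orthonormal.

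The main idea is a change of variables that diagonalizes the whole system. For each $d$, let $\bPhi_d = [\btheta_d^{(1)} \cdots \btheta_d^{(q+K-1)}]$ be the full set of canonical vectors (orthonormal w.r.t.\ $\bSigma_d$ by Theorem~\ref{t:sigma12}), completed if necessary. Substitute $\bW_d = \bSigma_d^{-1}\bPhi_d \bC_d$ — or more cleanly, work in the coordinates $\bV_d$ where $\bW_d = \bSigma_d^{-1/2}\bSigma_d^{-1/2}(\cdots)$; the cleanest route is to precondition by $\bSigma_d^{1/2}$. In these coordinates $\bB$ becomes block-diagonal with identity blocks (scaled by $1/D$), and each cross-term $\bSigma_d^{-1/2}\bSigma_{dl}\bSigma_l^{-1/2}$ becomes $\sum_k \rho_k (\bSigma_d^{1/2}\btheta_d^{(k)})(\bSigma_l^{1/2}\btheta_l^{(k)})^\top$, a sum of rank-one terms with orthonormal left/right factors. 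Thus $\bG$, in preconditioned coordinates, decomposes as a direct sum over the canonical indices $k$ (plus a complementary block on which $\widetilde\bDelta$ vanishes), and on the $k$th one-dimensional-per-view summand $\bG$ acts as a fixed $D\times D$ matrix depending only on $\rho_k$ and $\alpha$. Crucially, $\widetilde\bDelta$ also decomposes along these same canonical directions: since $\widetilde\bDelta_r$ is built from the mean contrasts $\bmu_k-\bmu_{r+1}$, i.e.\ columns of $\bDelta_d$, and $\bDelta_d = \bSigma_d\bPhi_d\bLambda^{1/2}$-type expression (from the $q+1,\dots,q+K-1$ part of Theorem~\ref{t:sigma12}), the projection of $\widetilde\bDelta$ onto the non-LDA canonical directions $k\le q$ is zero. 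Solving the small $D\times D$ systems then gives $\bW^*_d = \bSigma_d^{-1}\bDelta_d \bS$ for a common nonsingular matrix $\bS$ acting on the right, where $\bS$ is built from the per-$k$ solutions; since $\bLambda_d$ and the $\rho_k$ enter, $\bS$ need not be orthogonal, but it is shared across $d$ only up to the diagonal scaling $\bLambda_d^{1/2}$, which is exactly the "column scaling" allowance. Polar-decomposing the common factor into orthogonal times positive-definite, and absorbing the positive-definite and the $\bLambda_d^{1/2}$ pieces into the per-view column scaling, yields the claimed $\bW_d^*\bR_d^\top \propto_{\text{col}} \bSigma_d^{-1}\bDelta_d$.

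The steps in order: (1) compute the block structure of $\bG$ and $\widetilde\bDelta$ from the definitions of $\bX'$, $\bY'$ and Lemma~8 of \citet{Gaynanova:2015km}; (2) invoke Theorem~\ref{t:sigma12} to write all cross-blocks in terms of $\bSigma_d$-orthonormal canonical vectors, and identify $\bDelta_d$ with the LDA-relevant ($k>q$) canonical directions; (3) precondition by $\bSigma_d^{1/2}$ and perform the change of variables that block-diagonalizes $\bG$ along canonical indices, reducing $\bG^{-1}\widetilde\bDelta$ to a family of explicit low-dimensional linear solves; (4) observe $\widetilde\bDelta$ has no component on the $k\le q$ block, so $\bW^*_d$ lands in $\mathrm{colspan}(\bSigma_d^{-1}\bDelta_d)$ with the same right-multiplier across $d$ up to the diagonal $\bLambda_d^{1/2}$; (5) polar-decompose to peel off $\bR_d$ and absorb the rest into column scaling. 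I expect the main obstacle to be step (3)–(4): verifying that the change of variables really decouples $\bG$ across the $K-1$ discriminant directions (this uses that $\bLambda_d = \bDelta_d^\top\bSigma_{dy}^{-1}\bDelta_d$ is diagonal, so the LDA canonical directions for a single view are mutually $\bSigma_d$-orthogonal) \emph{and} that the residual coupling among views on a fixed canonical index is a genuinely invertible $D\times D$ matrix for all $\alpha\in(0,1]$ — one must check the relevant $D\times D$ matrix (roughly $\alpha\bI + \frac{1-\alpha}{D-1}(\mathbf{1}\mathbf{1}^\top-\bI)$ with $\rho_k$-weighted off-diagonals) is nonsingular, with the $\alpha>0$ hypothesis ensuring the diagonal dominance needed, and also that the resulting common factor $\bS$ is compatible across views so that a single column-scaling per view suffices. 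Tracking the $o(1)$ terms through the inversion is routine but needs a remark that $\bG$ is invertible in the limit so the perturbation is harmless.
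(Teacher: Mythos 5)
Your route is genuinely different from the paper's. The paper never inverts $\bG$: it applies $\bG$ to the candidate $\diag(\bSigma_1^{-1},\dots,\bSigma_D^{-1})\bDelta$ and uses the single factor-model identity $\bSigma_{ld}\bSigma_d^{-1}\bDelta_d=\bDelta_l\bLambda_d(\bLambda_d+\bI)^{-1}$ (a Woodbury consequence of $\bA_d^{\top}\bSigma_d^{-1}\bDelta_d={\bf0}$) to see that each block of $\bG\,\diag(\bSigma)^{-1}\bDelta$ equals $\bDelta_l$ times a diagonal matrix, and then rotates by the orthogonal matrix linking $\widetilde\bDelta_d$ and $\bDelta_d$; Theorem~\ref{t:sigma12} is not invoked at all. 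Your plan instead whitens by $\bSigma_d^{1/2}$, decomposes the whitened $\bG$ along canonical indices, and solves explicit $D\times D$ systems. This is considerably more bookkeeping, but it does make explicit something the paper's verification leaves implicit: the per-view column scalings solve, for each discriminant index, a strictly diagonally dominant $D\times D$ system (your $\alpha\bI+\frac{1-\alpha}{D-1}(\mathbf{1}\mathbf{1}^\top-\bI)$-type check), which also gives nonvanishing, in fact positive, scalings for every $\alpha\in(0,1]$.

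Two steps need tightening before your argument is complete. First, step (5) as written is not a legitimate move: a polar factor that is positive definite but not diagonal cannot be ``absorbed into column scaling,'' since column scaling means right-multiplication by a diagonal matrix. You do not need polar decomposition at all: because $\widetilde\bDelta_d=\bDelta_d\bR_0^{\top}$ with the \emph{same} orthogonal $\bR_0$ for every view (the class factor $\bu_y$ and the probabilities $\pi_k$ are view-independent), and because the $D\times D$ solves act separately on each discriminant index, the right factor of $\bW_d^*$ comes out exactly as a diagonal matrix times $\bR_0^{\top}$. If the orthogonal factor linking $\widetilde\bDelta_l$ to $\bDelta_l$ were allowed to vary with $l$, the cross-view mixing produced by inverting your $D\times D$ matrices would in general destroy the diagonal-times-orthogonal structure, so the commonality of $\bR_0$ must be proved, not merely flagged as ``compatibility.'' Second, the claimed decoupling of the $k\le q$ (non-LDA) block over a common canonical index set is not justified when $D\ge 3$: the decompositions of $\bSigma_d^{-1/2}\bA_d\bA_l^{\top}\bSigma_l^{-1/2}$ for different pairs need not share singular systems. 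Fortunately it is not needed: since $\bA_d^{\top}\bSigma_d^{-1}\bDelta_d={\bf0}$, the whitened $\bG$ maps the per-view subspace spanned by $\bSigma_d^{-1/2}\bDelta_d$, and its orthocomplement, each into themselves, and $\widetilde\bDelta$ is supported on the former; invariance plus invertibility of $\bG$ on that block is all your argument requires there. With these repairs your computation is correct and recovers the lemma, at the cost of a longer derivation than the paper's one-block verification.
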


The population loss~\eqref{eq:augmentTrace} can be viewed as the quadratic loss with respect to discriminant vectors $\bTheta_d$ with a particular choice of orthogonal transformation and scaling, which affect neither the classification rule nor the row-sparsity pattern.  The proof of Lemma~\ref{l:theoreticalResults} indicates that as long as $\alpha > 0$, the choice of $\alpha$ only affects the magnitude of the columns of $\bW^*$.

Next, we show that minimizer $\widehat \bW$ of penalized sample loss~\eqref{eq:colP} is consistent at estimating population loss minimizer $\bW^*$ under the following assumptions.
 
\begin{assumption}\label{a:sparsity} $\bTheta_d = \bSigma_d^{-1}\bDelta_d$ is row-sparse with the support $S_d= \{j:\|e_j^{\top}\bTheta_d\|_2\neq 0\}$ and $s_d=\card(S_d)$
. Hence $\bW^*_d$ is also row-sparse with the same support, and $\bW^*$ is row-sparse with the support $S=(S_1,\dots, S_D)$ and $s=\card(S)=\sum_{d=1}^Ds_d$.
\end{assumption}

\begin{assumption}\label{a:p}
The prior probabilities satisfy $0<\pi_{\min}\leq \pi_{k}\leq \pi_{\max}<1$, $k = 1, \dots, K$.
\end{assumption}


\begin{assumption}\label{a:norm}
\label{a:data}
 $\bx_d|y = k \sim \Ncal(\bmu_{dk}, \bSigma_{dy})$ for all $k=1,\ldots, K$.
\end{assumption}

\begin{assumption}\label{a:sample}
Let $p_{max}=\max_dp_d$ and $p_{min}=\min_dp_d$. Then for some constant $C>0$
$$
\frac{\log(p_{max})}{\log(p_{min})}\leq C  \mbox{ and } 
\log p_d = o(n),  \mbox{ for all } d=1,\dots,D.
$$
\end{assumption}
These assumptions are typical for multivariate analysis methods and high-dimensional settings.  Assumption~\ref{a:sparsity} states that population matrices of discriminant vectors are row-sparse. Assumption~\ref{a:p} states that the class proportions are not degenerate. Assumption~\ref{a:norm} states that the measurements are normally distributed conditionally on the class membership, it can be relaxed to sub-gaussianity without affecting the rates. Assumption~\ref{a:sample} allows to have a larger number of measurements than the number of samples, and states that the views have comparable numbers of measurements on the log scale. Because of the log scale, this assumption is mild, e.g. $p_{max}=1,000,000$ and $p_{min}=100$ leads to $C={\log(p_{max})}/{\log(p_{min})}=3$.

Similar to the assumptions required for estimation consistency in linear regression with group-lasso penalty~\citep{Nardi:2008cf,Lounici:2011fl}, we
also require restricted eigenvalue condition satisfied on the weighted cone. 
\begin{defin}[Weighted cone]
Let $\blambda = (\lambda_1,\dots, \lambda_d)$ and $S=(S_1,\dots, S_D)$. Then
$$
C(S, \blambda) = \Big\{\bM\in \R^{\sum_{d=1}^Dp_d \times (K-1)}: \sum_{d=1}^D\lambda_d \|\bM_{d,S_d^c}\|_{1,2}\leq 3\sum_{d=1}^D \blambda_d \|\bM_{d,S_d}\|_{1,2}\Big\}.
$$
\end{defin}
\begin{defin}\label{d:REgroup} A matrix $\bQ \in \R^{q\times p}$ satisfies restricted eigenvalue condition $\textrm{RE}(S,  \blambda)$ with parameter $\gamma_{Q} = \gamma(S,  \blambda, \bQ)$ if for some set $S$, and for all $\bA \in \mathcal{C}(S,\blambda)$ it holds that
$$
\|\bQ\bA\|_F^2 \geq {\gamma_{\bQ}}{\|\bA\|_F^2}.
$$
\end{defin}

We are now ready to state the main result. Let $\delta = \|\widetilde\bDelta\|_{\infty,2}$, let $g=\max_j\{\bG^{-1}\}_{jj}$ be the largest diagonal entry of $\bG^{-1}$, and let $\tau = \max_{j}\sqrt{\sigma_j^2+\max_k\mu_{k,j}^2}$, where $\sigma_j$ are  diagonal elements of $\bSigma_{y}$ and $\mu_{k,j}$ are elements of $\bmu_k$. 

\begin{thm}\label{t:fast_prob_p}  Under Assumptions~\ref{a:sparsity}--\ref{a:sample}, if $\lambda_d = C\left(\tau\vee \tau^2\delta g\right)D^{-1}\sqrt{{(K-1)\log[(K-1)p_d ]}/{n}}$ for some constant $C>0$, $s_d^2{\log[(K-1)p_{d}]}=o(n)$ and $\bG^{-1/2}$ satisfies  condition~$\textrm{RE}(S, \blambda)$ with parameter  $\gamma = \gamma(S, \blambda, \bG^{-1/2})$, then
\begin{align*}
\|\widehat \bW - \bW ^*\|_F &= O_p\left(\left(\tau\vee \tau^2\delta g\right)\frac1{D\gamma}\sqrt{\frac{K-1}{n}\sum_{d=1}^Ds_d\log[(K-1)p_{d}]}\right).
\end{align*}
\end{thm}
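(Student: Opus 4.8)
The plan is to exploit the augmented-data reformulation~\eqref{eq:augmented}, which presents JACA as a multi-response group-lasso regression with design $\bX'$, response $\bY'$, and (population) target $\bW^*=\bG^{-1}\widetilde\bDelta$, and then to run the standard oracle-inequality argument for the group lasso (following the template of~\citet{Lounici:2011fl,Nardi:2008cf}), taking care of two features specific to our setting: the ``residual'' $\bY'-\bX'\bW^*$ is not centered in-sample and is \emph{dependent} on $\bX'$ (since $\bX'$ mixes $\bX_1,\dots,\bX_D$ in its last block of rows and $\bW^*$ is the population, not the sample, minimizer), and the restricted eigenvalue condition is posited for $\bG^{-1/2}$ rather than for the random design.

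First I would write the basic inequality: since $\widehat\bW$ minimizes the penalized sample loss, comparing its value at $\widehat\bW$ and at $\bW^*$ and setting $\bDelta=\widehat\bW-\bW^*$ gives
\[
2^{-1}\|\bX'\bDelta\|_F^2 \;\le\; \langle\,\bX'^\top(\bY'-\bX'\bW^*),\,\bDelta\,\rangle \;+\; \sum_{d=1}^D\lambda_d\big(\|\bW^*_{d,S_d}\|_{1,2}-\|\widehat\bW_{d,S_d}\|_{1,2}-\|\widehat\bW_{d,S_d^c}\|_{1,2}\big).
\]
The cross term I would split into a deterministic bias part $\langle\,\E[\bX'^\top\bY']-\bG\bW^*,\,\bDelta\,\rangle$, which by the computation preceding~\eqref{eq:augmentTrace} equals $\langle o(1),\bDelta\rangle$ with the $o(1)$ governed by $\max_k|n_k/n-\pi_k|=O_p(n^{-1/2})$, hence is of smaller order than $\lambda_d$ and absorbed into the penalty; and the genuinely stochastic part, whose relevant quantity on block $d$ is $\Gamma_d:=\|\{\bX'^\top(\bY'-\bX'\bW^*)\}_{(d)}\|_{\infty,2}$, the $\|\cdot\|_{\infty,2}$-norm of the rows corresponding to view $d$. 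Conditioning on $y$ makes both $\bX_d$ and every block of the residual (which involves $\bX_l\bW^*_l$ for $l\ne d$ and the deterministic $\widetilde\bY$) jointly Gaussian under Assumption~\ref{a:norm}, so each entry of $\bX'^\top(\bY'-\bX'\bW^*)$ is sub-exponential; a Bernstein-type tail bound together with a union bound over its $(K-1)p_d$ rows yields, with probability tending to one, $\Gamma_d\le C(\tau\vee\tau^2\delta g)D^{-1}\sqrt{(K-1)\log[(K-1)p_d]/n}$, i.e.\ the event $\lambda_d\ge 2\Gamma_d$ holds for all $d$. Here $\tau$ controls the coordinate-wise scale of the design, and $\delta,g$ enter through $\|\bW^*\|_{\infty,2}=\|\bG^{-1}\widetilde\bDelta\|_{\infty,2}$.

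On that event the usual rearrangement of the basic inequality forces $\bDelta\in C(S,\blambda)$ and reduces it to $2^{-1}\|\bX'\bDelta\|_F^2\le 4\sum_d\lambda_d\|\bDelta_{d,S_d}\|_{1,2}\le 4\big(\sum_d\lambda_d^2 s_d\big)^{1/2}\|\bDelta\|_F$ by Cauchy--Schwarz. It then remains to lower-bound $\|\bX'\bDelta\|_F^2$ by a multiple of $\|\bDelta\|_F^2$ on the cone, which is where the assumed restricted eigenvalue condition $\textrm{RE}(S,\blambda)$ for $\bG^{-1/2}$ enters: one transfers the population curvature $\gamma$ to the sample quadratic form by controlling $\|\widehat\bG-\bG\|_{\max}=O_p(\sqrt{\log(\sum_d p_d)/n})$ (again a sub-exponential bound on the entries of $\widehat\bG=\bX'^\top\bX'$) and combining it with the cone bound $\sum_d\|\bDelta_d\|_{1,2}\le 4(\sum_d s_d)^{1/2}\|\bDelta\|_F$; under $s_d^2\log[(K-1)p_d]=o(n)$ the sample-to-population deviation is $o(\gamma)\|\bDelta\|_F^2$ on $C(S,\blambda)$, so the left side is $\ge (\gamma/2)\|\bDelta\|_F^2$. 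Plugging this into the previous display gives $\|\bDelta\|_F=O_p\big(\gamma^{-1}(\sum_d\lambda_d^2 s_d)^{1/2}\big)$, and substituting $\lambda_d=C(\tau\vee\tau^2\delta g)D^{-1}\sqrt{(K-1)\log[(K-1)p_d]/n}$ produces exactly the stated rate; combined with Lemma~\ref{l:theoreticalResults} this also gives consistency of each $\widehat\bW_d$ for the discriminant vectors $\bTheta_d$ up to orthogonal rotation and column scaling.

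I expect the main obstacle to be the stochastic block term $\Gamma_d$ with the sharp constant $\tau\vee\tau^2\delta g$: because $\bW^*$ is the population minimizer and $\bY'$ stacks $\widetilde\bY$ over zeros while $\bX'$ mixes the views in its last block, the residual is neither centered nor independent of the design, so one must condition on $y$ to linearize the Gaussian structure and then carefully bookkeep the cross-terms $\bX_d^\top\bX_l\bW^*_l$ before invoking the sub-exponential Bernstein inequality. The RE transfer from $\bG^{-1/2}$ to the random design is the secondary delicate point, but it becomes routine once $s_d^2\log[(K-1)p_d]=o(n)$ is imposed.
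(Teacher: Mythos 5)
Your proposal is correct and follows essentially the same route as the paper's proof: the basic inequality and cone argument (Lemma~\ref{l:cone}), the deterministic error bound under a restricted eigenvalue condition on the sample design (Theorem~\ref{t:determine}), the block-wise $\|\cdot\|_{\infty,2}$ control of $\bX'^{\top}(\bY'-\bX'\bW^*)$ by conditioning on $y$, tracking the cross-terms $\bX_d^{\top}\bX_l\bW_l^*$, and applying sub-exponential Bernstein bounds with a union bound (Theorem~\ref{t:Xtepsilon} with Lemmas~\ref{l:vwbound} and~\ref{l:crossProdIndBound}), and the RE transfer from $\bG$ to $\bX'^{\top}\bX'$ via an entrywise deviation bound under $s_d^2\log[(K-1)p_d]=o(n)$ (Lemma~\ref{l:REgroup}). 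The only differences are bookkeeping ones, e.g.\ you absorb the empirical-versus-population class-proportion bias as an $O_p(n^{-1/2})$ term while the paper centers $\bX_d^{\top}\widetilde\bY$ around $\widetilde\bDelta_d$ directly, so no substantive gap.
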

\begin{remark} 
If $p_d\geq K$ for all $d$, then $\log[(K-1)p_d]=\log(K-1)+\log p_d<2\log p_d$, and the rate could be simplified to 
\begin{align*}
\|\widehat \bW - \bW ^*\|_F &= O_p\left(\left(\tau\vee \tau^2\delta g\right)\frac1{D\gamma}\sqrt{\frac{K-1}{n}\sum_{d=1}^Ds_d\log(p_{d})}\right).
\end{align*}
\end{remark}
Our results allow both the number of variable $p_d$ and the number of classes $K$ to grow with $n$. The scaling requirement $s_d^2{\log[(K-1)p_{d}]}=o(n)$ is needed to ensure that restricted eigenvalue condition on $\bG$ implies restricted eigenvalue condition on random $\bX'^{\top}\bX'$ via the infinity norm bound. When $K=2$, $\widehat \bW$ and $\bW^*$ are vectors, and this condition can be dropped using the results of \citet{Rudelson:2013jw}.
 Nevertheless, the estimation error itself has the same rate as estimation error in linear regression with group-lasso \citep{Lounici:2011fl,Nardi:2008cf}. While our method can be viewed as multi-response linear regression due to formulation~\eqref{eq:augmented}, the group lasso results cannot be directly applied for several reasons. First, both $\bX'$ and $\bY'$ have dependencies across rows and contain fixed blocks of $0$ values. Second, the linear model assumption between $\bY'$ and $\bX'$ does not hold. Third, the residuals $\bPsi=\bY'-\bX'\bW^*$ do not have normal distribution and are dependent with $\bX'$. These challenges required the use of different proof techniques, and the full proof of Theorem~\ref{t:fast_prob_p} can be found in the Appendix.

\section{Missing data case - semi-supervised learning}\label{sec:missing}
In the joint analysis of multi-view data, it is typical to perform complete case analysis, that is only consider the subjects for which all the views and class labels are available. This is often not the case in practice. For the COAD data described in Section~\ref{sec:COAD}, out of 282 subjects with RNAseq data, only 218 have also available miRNA measurements. Moreover, 51 subjects out of these 218 have no class labels, and therefore can not be used to train supervised classification algorithms. Most of the available methods require either imputation of missing views/group labels, or perform complete case analysis (only use samples with complete information from all sources). A particular advantage of our framework is that we can also use the samples for which we have either a class-label or at least two views available without the need to impute the missing values. In other words, our proposal allows to perform semi-supervised learning, that is to use information from both labeled and unlabeled subjects to construct classification rules. In what follows, we assume that for each view and each subject, the measurements are rather completely missing, or not missing at all, that is we do not consider the case where a subset of measurements from one view is missing. 


Let $A_{dy}$ be the subset of samples (out of $n$) for which both class label and view $d$ are available, and let $B_{dl}$ be the subset of samples for which both views $d$ and $l$ are available. In case there are no missing labels/views, $A_{dy} = B_{dl} = \{1,\dots, n\}$ for all $d, l \in\{1, \dots, D\}$. We propose to adjust~\eqref{eq:colP} as
\begin{equation}\label{eq:colPmis2}
\begin{split}
    \minimize_{\bW_1,\dots,\bW_D}\Big\{&\frac{\alpha}{2nD} \sum_{d=1}^D\sum_{i\in A_{dy}} \|\widetilde \by_i - \bx_{id}^{\top}\bW_d\|_2^2 \\
    &+\frac{\alpha}{2nD(D-1)} \sum_{d=1}^{D-1}\sum_{l = d+1}^D \sum_{i \in B_{dl}}\|\bx_{id}^{\top}\bW_d - \bx_{il}^{\top} \bW_l\|^2_2 +\sum_{d=1}^D \lambda_d \Pen(\bW_d)\Big\},
\end{split}
\end{equation}
that is we use all samples with class labels and at least one view for the classification part, and all samples with at least two views for the canonical correlation part. 
Like~\eqref{eq:colP}, problem~\eqref{eq:colPmis2} is convex, and can be written as linear regression problem~\eqref{eq:augmented} with corresponding adjustments to $\bX'$ and $\bY'$. We refer to~\eqref{eq:colPmis2} as semi-supervised JACA (ssJACA).

\section{Implementation}\label{sec:implementation}
\subsection{Additional regularization via elastic net}\label{sec:shrinkPen}

It is well known that the lasso-type penalties can lead to erratic solution paths in the presence of highly-correlated variables \citep[Chapter~4.2]{Hastie:2015wu}. To overcome this drawback, \citet{Zou:2005ex} propose an elastic net penalty which combines ridge and lasso penalties, thus making highly correlated variables either being jointly selected or not selected in the model.  
\citet{Zou:2005ex} also advocate an extra scaling step which in regression context is equivalent to replacing the sample covariance matrix $\bX^{\top}\bX/n$ with the regularized version $(1-\rho) \bX^\top \bX/n+\rho \bI$ for $\rho\in[0,1]$. We adapt this idea to JACA, and replace $\bX'^{\top}\bX'$ in~\eqref{eq:augmented} with $(1-\rho){\bX'^{\top}\bX'} + \rho \bI$ for $\rho\in[0,1]$ leading to
\begin{equation}\label{eq:elasticobj}
\minimize_{\bW}\left\{\frac12\|\bY'-\bX'\bW\|^2_F -\frac\rho2\|\bX'\bW\|_F^2+\frac\rho2\|\bW\|_F^2 +\sum_{d=1}^D\lambda_d\Pen(\bW_d)\right\}.
\end{equation}
Problem~\eqref{eq:elasticobj} is convex, and the results of Section~\ref{sec:theory} can be extended to~\eqref{eq:elasticobj} with a more technical proof \citep{Hebiri:2011hg}. When $\rho = 0$, problems~\eqref{eq:elasticobj} and~\eqref{eq:augmented} coincide.

\subsection{Optimization algorithm}

We assume that each $\bX_d$ is standardized so that the diagonal entries of $n^{-1}\bX_d^\top \bX_d$ are equal to one. This standardization is common in the literature \citep{Zou:2005ex,  Witten:2009wa}, and effectively results in penalizing each variable proportionally to its standard deviation. Moreover, using $\rho>0$ with this standardization in~\eqref{eq:elasticobj} ensures the uniqueness of solution for any $\lambda_d$ due to strict convexity of the objective function.

We use a block-coordinate descent algorithm to solve~\eqref{eq:elasticobj} for fixed values of $\rho \in [0,1]$ and $\lambda_d \geq 0$. 
Let $\bw_{dj}$ be the $j$th row of $\bW_d$, and let $\Pen(\bW_d)=\sum_{j=1}^{p_d}\|\bw_{dj}\|_2$. Since~\eqref{eq:elasticobj} is convex, and the penalty is separable with respect to each $\bw_{dj}$, the algorithm is guaranteed to converge to the global optimum from any starting point \citep{Tseng:2001wm}. Consider solving~\eqref{eq:elasticobj} with respect to $\bw_{dj}$, and let $\bX'_{dj}$ be the corresponding column of $\bX'$. The KKT conditions \citep{Boyd:2004uz} correspond to a set of $\sum_{d=1}^Dp_d$ equations of the form
\begin{equation}\label{eq:kkt}
(1-{\rho})\bX'^{\top}_{dj}\bX'\bW +\rho \bw_{dj} - \bX_{dj}'^{\top}\bY'+ \lambda_d \bu_{dj}=0, 
\end{equation}
where $\bu_{dj}$ is the subgradient of $\|\bw_{dj}\|_2$, that is $\bu_{dj}= \bw_{dj}/\|\bw_{dj}\|_2$ when $\|\bw_{dj}\|_2\neq0$ and $\bu_{dj}\in\{\bu:\|\bu\|_2\leq1\}$ otherwise. Solving \eqref{eq:kkt} with respect to $\bw_{dj}$ leads to 
$$\bw_{dj}=\left\{\bX_{dj}'^{\top}(\bY' - (1-\rho)\bX'\bW + (1-\rho)\bX_{dj}'\bw_{dj})-\lambda_d \bu_{dj}\right\}\Big/\left\{\left(1-{\rho}\right)\|\bX_{dj}'\|_2^2+\rho\right\}.$$ 
For a vector $\bv\in \R^m$ and $\lambda >0$, let $S_{\lambda}(\bv) = \max(0, 1- \lambda/\|\bv\|_2)\bv$ be the vector soft-thresholding operator. Then iterating block updates leads to Algorithm~\ref{a:block}.

\begin{algorithm}[H]
 Given:  $\bW^{(0)}$, $\bX'$, $\bY'$, $k=0$, $\lambda_d\geq0$, $\rho \in [0, 1] $ and $\varepsilon >0$;\\
 $ \bR\gets \bY' -(1-\rho) \bX'\bW^{(0)}$;\\
\While{$k\neq k_{max}$ and $\bW^{(k)}$ satisfies
 $\left|\text{objective}\left(\bW^{(k)}\right)-\text{objective}\left(\bW^{(k-1)}\right)\right|\geq\varepsilon$}{
 
 $k\gets k+1$;\\
 \For{$d=1$ {\bf to} $D$ and $j=1$ {\bf to} $p_d$ }{
        $\bw_{dj}^{(k)}\gets S_{\lambda_d}(\bX_{dj}'^{\top}\bR + (1-\rho) \|\bX'_{dj}\|_2^2 \bw_{dj}^{(k-1)})/\{(1-\rho)\|\bX'_{dj}\|_2^2+\rho\};$\\
        $\bR \gets \bR + (1-\rho)\bX'_{dj} (\bw_{dj}^{(k-1)}- \bw_{dj}^{(k)})$
        
    }    
 }
 \caption{Block-coordinate descent algorithm for~\eqref{eq:elasticobj}}\label{a:block}
\end{algorithm}

\subsection{Selection of tuning parameters}\label{sec:tuning}

JACA requires the specification of several parameters: $\alpha\in (0,1]$ that controls the relative weights of LDA and CCA criteria, $\rho \in [0,1]$ that controls the shrinkage induced by elastic net, and $\lambda_d \geq 0$ that control the sparsity level of each $\bW_d$ respectively. While it is possible to perform cross-validation over all of the parameters, due to computational considerations we restrict the space as follows. 
 First, based on the empirical results in Section~\ref{sec:COAD}, we found that $\alpha \in [0.5,0.7]$  strikes a balance between classification and association analysis, with larger values corresponding to  better misclassification rate and slightly lower found associations. 
Secondly, we set $\lambda_d = \epsilon\lambda_{\max,d}$ with $\epsilon \in(0,1)$, where $\lambda_{\max,d}$ is defined as follows. 
\begin{prop}\label{p:lambda_max_d} Let $\lambda_{\max,d}= \frac{\alpha}{nD}\|\bX_d^\top\widetilde \bY\|_{\infty, 2}$.
Then $\widehat \bW_d = 0$ for all $\lambda \ge \lambda_{\max, d}$.
\end{prop}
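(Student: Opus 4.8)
The plan is to verify the claim directly from the KKT conditions for problem~\eqref{eq:elasticobj} (equivalently, to check that $\bW_d = 0$ is compatible with the subgradient optimality condition~\eqref{eq:kkt} when $\lambda_d \ge \lambda_{\max,d}$). Recall the objective is jointly convex, so a point is a global minimizer if and only if the KKT conditions hold at every block $\bw_{dj}$. The strategy is to exhibit a feasible choice of subgradients making $\bW = 0$ stationary.

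First I would recall the block KKT condition from~\eqref{eq:kkt}: for each $d$ and each row $j$,
\begin{equation*}
(1-\rho)\,\bX_{dj}'^{\top}\bX'\bW + \rho\,\bw_{dj} - \bX_{dj}'^{\top}\bY' + \lambda_d\,\bu_{dj} = 0,
\end{equation*}
where $\bu_{dj}$ is a subgradient of $\|\bw_{dj}\|_2$, so that $\|\bu_{dj}\|_2 \le 1$ is the only constraint when $\bw_{dj}=0$. Plugging in the candidate solution $\bW = 0$, the first two terms vanish, and the condition for the $d$-th view collapses to the requirement that $\bX_{dj}'^{\top}\bY' = \lambda_d\,\bu_{dj}$ admits a solution with $\|\bu_{dj}\|_2 \le 1$ for every $j=1,\dots,p_d$; equivalently, $\|\bX_{dj}'^{\top}\bY'\|_2 \le \lambda_d$ for every row $j$, i.e. $\|\bX_d'^{\top}\bY'\|_{\infty,2} \le \lambda_d$, where $\bX_d'$ denotes the block of columns of $\bX'$ corresponding to view $d$.

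Next I would translate $\|\bX_d'^{\top}\bY'\|_{\infty,2}$ back into the original data. From the definitions of the augmented matrices $\bX'$ and $\bY'$ given before~\eqref{eq:augmented}, the rows of $\bY'$ associated with the classification blocks equal $\sqrt{\alpha/(nD)}\,\widetilde\bY$ (twice, once for each view), and the block-$0$ rows of $\bY'$ contribute nothing; likewise the columns of $\bX'$ indexed by view $d$ are $\sqrt{\alpha/(nD)}\,\bX_d$ in the classification block for view $d$, $0$ in the classification block for the other views, and $\pm\sqrt{(1-\alpha)/(nD(D-1))}\,\bX_d$ in the CCA blocks. Multiplying out, the CCA rows of $\bY'$ are zero, so only the classification block for view $d$ survives, giving $\bX_d'^{\top}\bY' = \frac{\alpha}{nD}\,\bX_d^{\top}\widetilde\bY$. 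Hence $\|\bX_d'^{\top}\bY'\|_{\infty,2} = \frac{\alpha}{nD}\|\bX_d^{\top}\widetilde\bY\|_{\infty,2} = \lambda_{\max,d}$, and for $\lambda_d \ge \lambda_{\max,d}$ we can pick $\bu_{dj} = \bX_{dj}'^{\top}\bY'/\lambda_d$ with $\|\bu_{dj}\|_2 \le 1$, so the KKT conditions hold at $\bW_d = 0$. Since the problem is convex, $\widehat\bW_d = 0$; when $\rho>0$ with the assumed standardization, strict convexity makes this the unique minimizer, and one should note the argument for the block at view $d$ does not require the other blocks to vanish, so the claim is about the $d$-th block alone.

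The only mildly delicate point is bookkeeping the augmented-data algebra so that the CCA contributions genuinely cancel in $\bX_d'^{\top}\bY'$; this is routine once the block structure of $\bX'$ and $\bY'$ is written out, and for general $D$ one uses the analogous augmented construction described in the Appendix rather than the explicit $D=2$ display. I would also remark, for completeness, that the same computation with the elastic-net terms present changes nothing at $\bW=0$, since the $\rho$-terms in~\eqref{eq:kkt} are linear in $\bW$ and vanish there; thus Proposition~\ref{p:lambda_max_d} holds for all $\rho \in [0,1]$.
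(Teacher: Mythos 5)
Your proof is correct and is essentially the paper's own argument: plug the zero solution into the KKT conditions \eqref{eq:kkt}, use the subgradient constraint $\|\bu_{dj}\|_2\le 1$, and identify $\bX_{dj}'^{\top}\bY' = \big(\tfrac{\alpha}{nD}\bX_d^{\top}\widetilde\bY\big)_j$ because the CCA rows of $\bY'$ are zero; your explicit bookkeeping of the augmented blocks just spells out what the paper does in one line. One caveat: your parenthetical claim that the block-$d$ argument ``does not require the other blocks to vanish'' is not justified --- if $\widehat\bW_l\neq 0$ for some $l\neq d$, the term $(1-\rho)\bX_{dj}'^{\top}\bX'\widehat\bW$ picks up cross-view contributions of the form $-\tfrac{1-\alpha}{nD(D-1)}\bX_{dj}^{\top}\bX_l\widehat\bW_l$ from the CCA rows of $\bX'$, so the KKT check at block $d$ no longer reduces to $\|\bX_{dj}'^{\top}\bY'\|_2\le\lambda_d$; what you (and the paper) actually verify is that the all-zero solution satisfies the KKT conditions at block $d$ once $\lambda_d\ge\lambda_{\max,d}$, which is how $\lambda_{\max,d}$ is used for tuning.
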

This allows to control the sparsity of each $\widehat \bW_d$ at similar levels, similar strategy is used in \citet{Luo:2016tb}.
%

To select $\rho\in[0,1]$ and $\epsilon\in[10^{-4},1]$, we use $F$-fold cross-validation with a course grid for $\rho$ and a fine grid for $\epsilon$. It is typical to minimize the prediction error in cross-validation, for example the least squares error in the linear regression. In our context, however, both classification rules and correlation measures are invariant to the scale of $\bW_d$, hence we need a scale-invariant metric. We propose to consider
\begin{equation}\label{eq:CV}
\begin{split}
    CV(\rho,\varepsilon) = \frac1{F}\sum_{f = 1}^{F}\Bigg\{\alpha\sum_{d=1}^D&|\text{Cor}(\widetilde \bY^{(f)},\bX_d^{(f)}\widehat \bW_d^{(-f)})|\\
    &+\frac{(1-\alpha)}{D-1}\sum_{d=1}^{D-1}\sum_{l=d+1}^D|\text{Cor}(\bX_d^{(f)}\bW_d^{(-f)},\bX_l^{(f)}\bW_l^{(-f)})|\Bigg\},
\end{split}
\end{equation}
where $\widetilde \bY^{(f)}$, $\bX_d^{(f)}$ correspond to the samples in the $f$th fold; and $\widehat \bW_d^{(-f)}$ are solutions to~\eqref{eq:elasticobj} with given $\rho$ and $\varepsilon$ based on samples in all folds except the $f$th. We define the correlation between two centered matrices $\bX$ and $\bY$ as the square root of the RV-coefficient \citep{robert1976unifying}, where 
$$
\text{RV}(\bX,\bY) := \frac{\Tr(\bX \bX^\top \bY \bY^\top)}{\sqrt{\Tr(\bX \bX^\top)^2}\sqrt{\Tr(\bY \bY^\top)^2}}.
$$
By definition, $\sqrt{\text{RV}(\bX,\bY)}\in [0,1]$, and is invariant to scale and orthogonal transformation. If $\bX$ and $\bY$ are vectors, then $\sqrt{\text{RV}(\bX,\bY)} = |\text{Cor}(\bX,\bY)|$.


To adopt the proposed cross-validation scheme 
for ssJACA, we stratify the samples based on the patterns of ``missingness", and split each stratum into $F$ folds. We illustrate the case $D=3$. Let $H$ be the subset of samples (out of $n$) with no missing labels/views. Let $M_{y}$  and $M_{d}$ be the subset of samples for which only class labels or only view $d$ is missing. respectively. Let $M_{dy}$ be the subset of samples for which only class label and view $d$ are missing, and $M_{dl}$ be the subset of samples for which only views $d$ and $l$ are missing. We first randomly divide each of these strata into $F$ folds: $H^{(f)}, M_{y}^{(f)}, M_{d}^{(f)}, M_{dy}^{(f)}$ and  $M_{dl}^{(f)}$ where $f=1,2,\dots,F$. For each $f$, we then hold out the union of $H^{(f)}, M_{y}^{(f)}, M_{d}^{(f)}, M_{dy}^{(f)}$ and  $M_{dl}^{(f)}$ for testing, and use the remaining samples for training so that the criterion~\eqref{eq:CV} can be applied.


\section{Application to TCGA COAD data analysis}\label{sec:COAD}
We consider the colorectal cancer (COAD) data from The Cancer Genome Atlas project with two views: RNAseq data of normalized counts and miRNA expression. We extracted samples corresponding to primary tumor tissue using \textsf{TCGA2STAT} R package \citep{Wan:2015dm}. To account for data skewness and zero counts, we further log10-transformed both datasets with offset 1, and filtered the data to select 1572 variables for RNA-Seq and 375 for miRNA with  highest standard deviation across samples.  The Colorectal Cancer Consortium determined 4 consensus molecular subtypes (CMS) of colorectal cancer based on gene expression \citep{Guinney:2015dm}, and we have extracted the assigned subtypes for COAD data from the Synapse platform (Synapse ID syn2623706). The resulting data has 282 subjects in total, with Table~\ref{tab:COADdata} displaying the pattern of available information for each subject. Our primary goal is to identify covarying patterns between RNA-Seq and miRNA data that are relevant for subtype discrimination.   Although traditional CCA methods are also able to find concordance between RNA-Seq and miRNA data, such associations are not guaranteed to be closely related to subtypes.  We are also interested in selecting  miRNAs that are differentially expressed across different subtypes. The challenge is that miRNA is less informative to CMS. While \citet{Guinney:2015dm}  used two sample t-tests to achieve this goal, some samples were excluded since they lack CMS assignments. On the contrary, these information can be integrated by ssJACA.

\begin{table}
\center
    \caption{ \label{tab:COADdata}Number of available samples in COAD data with different missing patterns of CMS class/RNAseq/miRNA. Complete cases analysis will only be able to use 167 samples, whereas our semi-supervised approach allows to use 245 (all except the last row).}

    \begin{tabular}{cccc}
    \hline\hline
        CMS class & RNAseq & miRNA & Sample size  \\
        \hline
        yes & yes& yes & 167\\
        yes& yes & no & 27\\
        no & yes & yes & 51 \\
        no & yes & no & 37\\
        \hline
        & & & Total: 282\\
        \hline\hline
    \end{tabular}
\end{table}

We compare the performance of the following methods using the subset of subjects with complete views and subtype information ($n=167$): (i) \textsf{JACA}: Joint Association and Classification Analysis with $\alpha=0.7$, the proposed approach; (ii) \textsf{ssJACA}: semi-supervised JACA with $\alpha=0.7$;
(iii) Sparse Linear Discriminant Analysis of \citet{Gaynanova:2016wk} as implemented in the R package MGSDA \citep{MGSDA}, either applied separately to each dataset (\textsf{SLDA\_sep}), or jointly on concatenated dataset (\textsf{SLDA\_joint}); (iv) \textsf{Sparse~CCA:} Sparse Canonical Correlation Analysis of \citet{Witten:2009wa} as implemented in the R package PMA \citep{PMA}. We use cross-validation to choose the tuning parameters instead of the permutation method introduced in \citet{Witten:2009wa}, since the former one gives better results.  (v) \textsf{Sparse sCCA:} Sparse supervised CCA proposed in \citet{Witten:2009wa}. We first choose a set of variables with largest values of F-statistic from a one-way ANOVA, and then apply Sparse CCA with selected variables. We do not consider Canonical Variate Regression by \citet{Luo:2016tb} since it is only implemented for the binary classification problem. 

We randomly select 132 subjects for training and 35 for testing for the total of 100 random splits. For ssJACA, we add 78 subjects (at least  two  views  available) into the training set. We set $\alpha$ in JACA and ssJACA to be $0.7$. The average misclassification rates and the number of selected variables for each method are presented in~Table~\ref{tab:error_1}. We consider two prediction approaches for each method: prediction based on one view alone (either RNA-seq or miRNA) using the corresponding subset of canonical vectors, and prediction using the concatenated dataset. In general, the performance using miRNA data is worse, which is not surprising since the subtypes were determined using gene expression data alone \citep{Guinney:2015dm}.  Although  ssJACA selects more variables than SLDA\_sep, it performs the best in terms of misclassification rates, with JACA  being the second  best.  SLDA\_joint achieves a competitive misclassification rate using RNAseq data but not miRNA. We conjecture this is because RNAseq view has a much stronger class-specific signal that masks miRNA's signal when datasets are concatenated. This explanation is supported by the mean number of variables selected by SLDA\_joint from each view. Both supervised and unsupervised CCA methods perform poorly in classification. Based on results from Section~\ref{sec:simu} of the Appendix, this suggests that the subtype-specific association between the views is weak compared to association due to other common factors.

\begin{table}
 \caption{\label{tab:error_1} Mean misclassification rates in percentages and mean number of selected features over 100 random splits of 167 samples from COAD data with complete information, standard errors are given in brackets and the lowest values are highlighted in bold.}
\resizebox{\textwidth}{!}{
\begin{tabular}{lcccccc}
  \hline\hline
  & \multicolumn{3}{c}{Misclassification Rate  (\%)} & \multicolumn{3}{c}{Cardinality} \\
  \cmidrule(lr){2-4} \cmidrule(lr){5-7}
 Method & RNAseq & miRNA & Both  & RNAseq & miRNA & Both  \\
\hline
JACA & {\bf1.37} (0.20) & 6.26 (0.36) & 3.17 (0.28) & 375.4 (10.8) & 199.4 (4.1) & 574.8 (14.9) \\ 
  ssJACA & {\bf1.29} (0.16) & {\bf3.03} (0.26) & {\bf1.83} (0.21) & 338.2 (10.2) & 201.5 (4.1) & 539.7 (14.20) \\ 
  SLDA\_sep & 5.34 (0.47) & 9.31 (0.39) & 5.34 (0.30) & {\bf63.7} (2.8) & 58.6 (1.7) & 122.3 (3.2) \\ 
  SLDA\_joint & 2.71 (0.29) & 51.43 (1.64) & 2.91 (0.30) & {\bf63.4} (2.8) & {\bf4.5} (0.4) & {\bf67.9} (3.2) \\ 
  Sparse sCCA & 37.4 (0.08) & 42.26 (0.21) & 37.14 (0.00) & 930.9 (3.5) & 252.5 (0.9) & 1183.4 (3.9) \\ 
  Sparse CCA & 48.83 (0.3) & 50.6 (0.19) & 49.63 (0.15) & 1282.1 (6.5) & 368.9 (0.6) & 1651 (6.7) \\ 
   \hline\hline
\end{tabular}}
\end{table}

We also compare the out-of-sample correlation values, that is $\Cor(\bX_1\widehat \bW_1, \bX_2\widehat \bW_2)$, where $(\bX_1,\bX_2)$ are RNAseq and miRNA data from test samples, and $\widehat \bW_1$, $\widehat \bW_2$ are 
estimated on the training data. We do not consider CCA methods due to their poor classification performance. The results are presented in Table~\ref{tab:cortable}, with JACA and ssJACA achieving the strongest correlation value.

\begin{table}
\center
 \caption{Analysis based on 167 samples from COAD data with complete view and subtype information based on 100 random splits. Mean correlation between $\bX_1 \widehat \bW_1$ and $\bX_2 \widehat \bW_2$ where $\bX_1, \bX_2$ are samples from test data, and $\widehat \bW_1$, $\widehat \bW_2$ are estimated from the training data, standard errors are given in brackets and the highest value is highlighted in bold.}
  \label{tab:cortable}
\begin{tabular}{lcccc}
  \hline
 & JACA & ssJACA & SLDA\_sep & SLDA\_joint \\ 
  \hline
Correlation & {\bf 0.95} (0.001) & {\bf0.95} (0.001) & 0.90 (0.002) & 0.40 (0.021) \\ 
   \hline \hline
\end{tabular}
\end{table}

We further evaluate the effect of varying $\alpha$ parameter. We compare JACA fitted on $132$ subjects (all views and subtypes available) with semi-supervised JACA 
fitted on $210$ subjects (at least two views available), and validate the results on $35$ subjects. The average misclassification rates and the number of selected variables for each method are presented in Table~\ref{tab:error_2}, and the out-of-sample  correlation  values are shown in Figure~\ref{fig:COADcor2}. When increasing $\alpha$, the misclassification rates of both JACA and ssJACA are  decreasing while the variable selection results remain stable.  Although the  out-of-sample  correlation  values decreases as $\alpha$ increases, JACA and ssJACA have similar performances and the absolute changes are insignificant. This is perhaps not surprising since we put more weight on the classification part and less weight on the association part as $\alpha$ increases.

\begin{table}
 \caption{\label{tab:error_2} Mean misclassification rates in percentages and mean number of selected features of JACA and ssJACA with different $\alpha$ values over 100 replications from COAD data, standard errors are given in brackets and the lowest values are highlighted in bold.}
\resizebox{\textwidth}{!}{
\begin{tabular}{lcccccc}
  \hline\hline
  & \multicolumn{3}{c}{Misclassification Rate  (\%)} & \multicolumn{3}{c}{Cardinality} \\
  \cmidrule(lr){2-4} \cmidrule(lr){5-7}
 Method & RNAseq & miRNA & Both  & RNAseq & miRNA & Both  \\
\hline
JACA 0.1 & 4.43 (0.24) & 7.09 (0.33) & 5.94 (0.27) & 421.9 (16.4) & 214.0 (5.2) & 635.8 (21.5) \\ 
  ssJACA 0.1 & 2.14 (0.19) & 4.91 (0.27) & 2.69 (0.2) & {\bf 279.1} (7.1) & {\bf 179.2} (3.3) & {\bf 458.3} (10.3) \\ \hline
  JACA 0.3 & 1.63 (0.24) & 6.57 (0.33) & 5.23 (0.28) & 391.4 (9.9) & 205.2 (3.9) & 596.7 (13.7) \\ 
  ssJACA 0.3 & {\bf 1.31} (0.16) & 3.43 (0.29) & {\bf 1.69} (0.2) & 305.5 (6.8) & 189.8 (3) & 495.4 (9.8) \\ \hline
  JACA 0.5 & 1.49 (0.22) & 6.43 (0.34) & 4.03 (0.3) & 391.0 (9.7) & 205.4 (3.7) & 596.4 (13.3) \\ 
  ssJACA 0.5 & {\bf 1.31} (0.17) & {\bf 3.09} (0.27) & {\bf 1.74} (0.21) & 317.2 (7.8) & 194.6 (3.4) & 511.8 (11.1) \\ \hline
  JACA 0.7 & {\bf 1.37} (0.2) & 6.26 (0.36) & 3.17 (0.28) & 375.4 (10.8) & 199.4 (4.1) & 574.8 (14.9) \\ 
  ssJACA 0.7 & {\bf 1.29} (0.16) & {\bf 3.03} (0.26) & {\bf 1.83} (0.21) & 338.2 (10.2) & 201.5 (4.1) & 539.7 (14.2) \\ \hline
  JACA 0.9 & 1.46 (0.21) & 5.57 (0.36) & 2.77 (0.27) & 375.0 (14.3) & 196.1 (5.2) & 571.1 (19.4) \\ 
  ssJACA 0.9 & {\bf 1.37} (0.17) & {\bf 3.00} (0.26) & {\bf 1.80} (0.21) & 349.7 (11.8) & 202.5 (4.5) & 552.1 (16.2) \\ 
   \hline\hline
\end{tabular}}
\end{table}


\begin{figure}
\begin{center}
\centerline{\includegraphics[scale=.6]{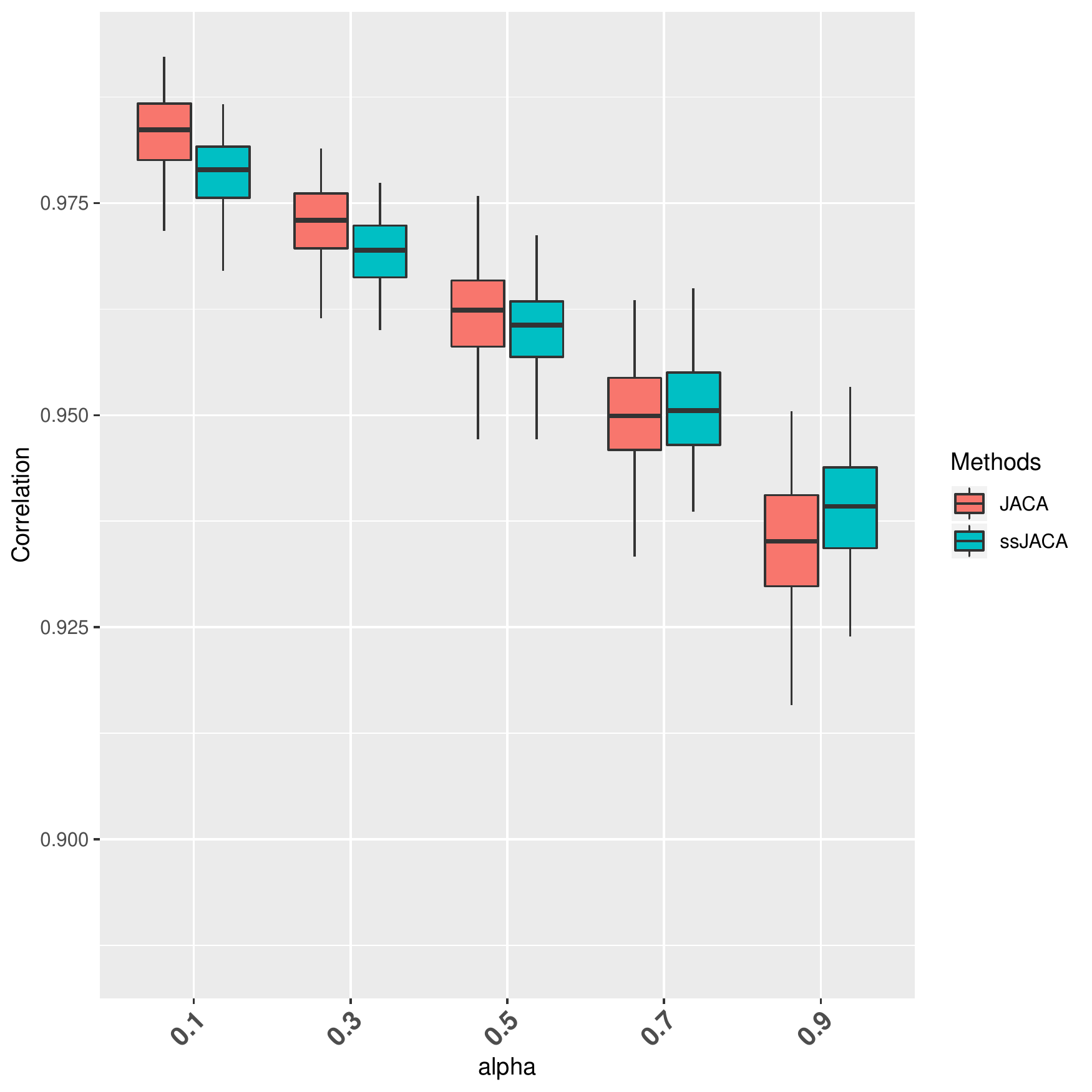} }
\end{center}
\caption{ Correlation analysis of  COAD data using JACA and ssJACA with different $\alpha$ based on 100 random splits. Correlation values between $\bX_1 \widehat \bW_1$ and $\bX_2 \widehat \bW_2$ where $\bX_1, \bX_2$ are samples from test data, and $\widehat \bW_1$, $\widehat \bW_2$ are estimated from the training data. \label{fig:COADcor2}}
\end{figure}

 The heatmaps of RNAseq and miRNA data with features selected by ssJACA are shown in Figure~\ref{fig:heatmap}, an enlarged version of this Figure as well as a projection of data onto the space spanned by the canonical vectors can be found in the Appendix.  Both views demonstrate different patterns across CMS classes, with the separation on RNASeq being visually much clearer. This is not surprising as CMS classes have been determined based on gene expression data only. Our analysis, however, also allows to determine co-varying patterns in miRNA, with subtype CMS4 being visually the most distinct in that view. 
\begin{figure}
\centering
\makebox{\includegraphics[scale=.42]{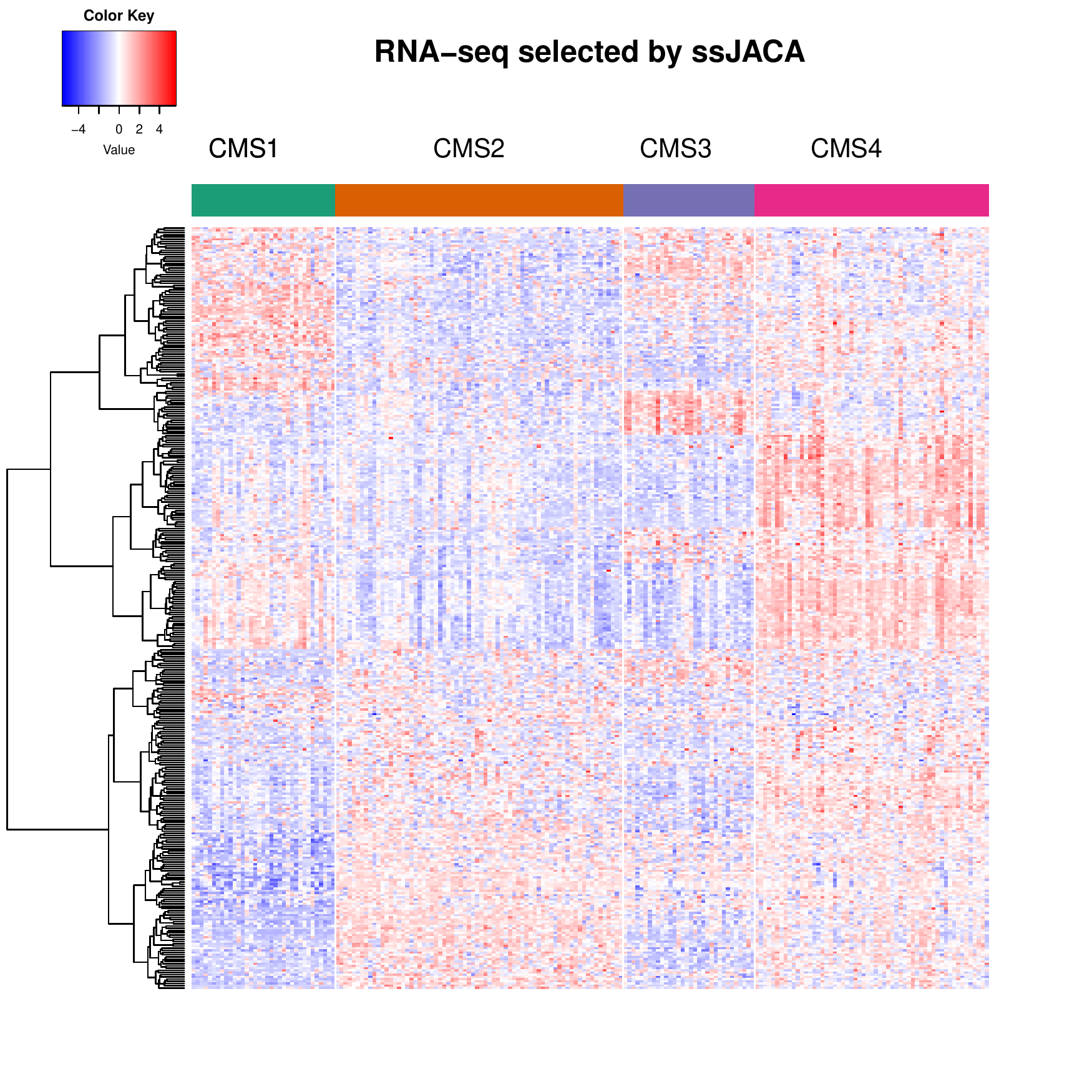} 
\includegraphics[scale=.42]{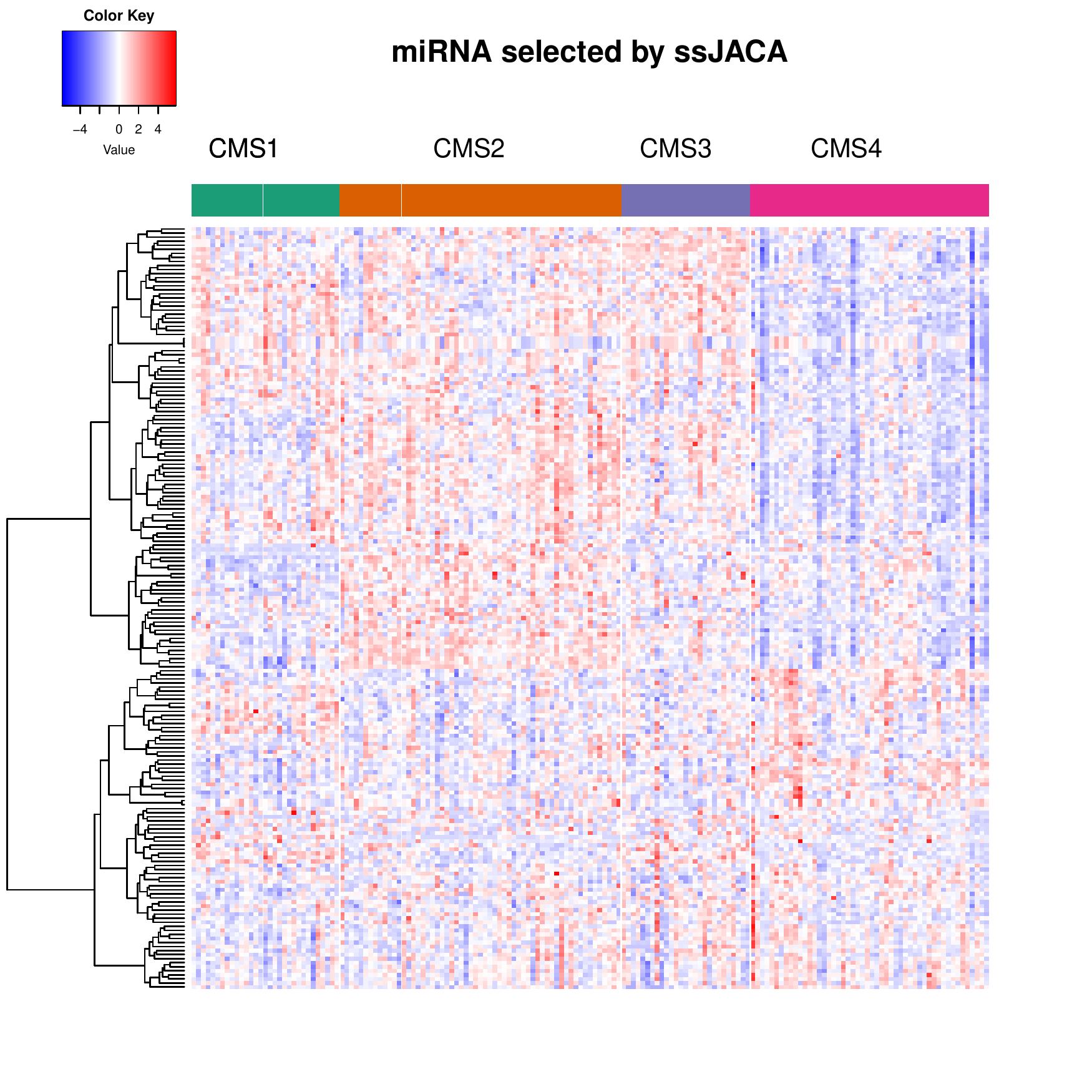} }
\caption{ \label{fig:heatmap}Heatmaps of RNAseq and miRNA views from COAD data based on features selected by ssJACA. We use Ward's linkage with euclidean distances for feature ordering.}
\end{figure}

\section{Discussion}
\label{sec:discussion}

In this work, we develop a joint framework for classification and association analysis of multi-view data by exploring the connections between linear discriminant analysis and canonical correlation analysis.
Our main objective is not to merely improve the prediction accuracy, but to find a low-dimensional representation of data that is coherent across the views and also relevant to the subtypes. A particular advantage of our approach is that it allows to use both samples with missing class labels and samples with missing subset of views. Nevertheless, there are several parts of the method that require further investigation. First, the trade off between classification and association criteria in~\eqref{eq:colP} is controlled by the parameter $\alpha$. We conducted  empirical studies that suggest the results are robust across a wide range of $\alpha$. While $\alpha\in [0.5, 0.7]$ leads to most favorable performance according to our experiments, it would be of interest to investigate whether there is an optimal value from the theoretical perspective. Secondly, we treat all views equally within our framework, however in practice some views may have stronger associations with class membership as well as with each other. This scenario can be addressed by adding view-specific weights within~\eqref{eq:colP}, however it is unclear how to choose the weights in practice. Finally, we focused on row-sparse structure via group-lasso penalty, however the method could be used with other structured penalties depending on the problem of interest.



\section*{Acknowledgements}
This work was supported by NSF DMS-1712943. The authors are grateful to Jeffrey Morris for the helpful discussion of consensus molecular subtypes of colorectal cancer. 

\begin{appendix}
\section{Proofs of the main results in the paper }\label{sec:proof}

\begin{proof}[Proof of Theorem~\ref{p:factor}] 1. Under the stated conditions, $\bx_d$ in~\eqref{eq:factor1} satisfies~\eqref{eq:condX12} by construction, therefore it remains to show the reverse. Consider~\eqref{eq:condX12} with $\bSigma_{ldy}={\bf{0}}$. Then
$$
\bx_d = \bmu_d + \sum_{k=1}^K(\bmu_{dk}-\bmu_d)\Ind\{y=k\} + \bSigma_{dy}^{1/2}\be_d,
$$
where $\be_d$ are independent from $y$. We next show that there exists function $f:\{1,2\dots,K\}\to \R^{K-1}$ such that $\bmu_d + \sum_{k=1}^K(\bmu_{dk}-\bmu_d)\Ind\{y=k\} = \bmu_d + \bDelta_d f(y) = \bmu_d + \bDelta_d \bu_y$ with $\bu_y$ and $\bDelta_d$ satisfying the stated conditions.

Consider $K=2$. Let $u_y = f(y) = \sqrt{\pi_2/\pi_1}\Ind\{y=1\} - \sqrt{\pi_1/\pi_2}\Ind\{y=2\}$, then $\E(u_y)=0$, $\cov(u_y)=1$.  Setting $\bDelta_d = \sqrt{\pi_1\pi_2}(\bmu_1-\bmu_2)$ gives the desired factor model since $$\E(\bmu_d + \bDelta_Du_y|y=1) = \pi_1\bmu_1 + \pi_2\bmu_2 + \sqrt{\pi_1\pi_2}(\bmu_1 - \bmu_2) \sqrt{\pi_2/\pi_1} = \bmu_1,$$ and similarly $\E(\bmu_d + \bDelta_Du_y|y=2) = \bmu_2$.

Consider $K\geq 2$. Let $\bTheta \in \R^{K \times (K-1)}$ have columns $\bTheta_l$ with
$$
\bTheta_l = \Bigg(\Bigg\{\sqrt{\frac{\pi_{l+1}}{\sum_{i=1}^l\pi_i\sum_{i=1}^{l+1}\pi_i}}\Bigg\}_l,-\sqrt{\frac{\sum_{i=1}^l\pi_i}{\pi_{l+1}\sum_{i=1}^{l+1}\pi_i}}, 0_{K-1-l}\Bigg)^{\top},
$$
and let $\bZ = g(y)\in \R^K$ be a unit norm class-indicator random vector with $z_{k}=1$ if observation belongs to class $k$. Consider $\widetilde \bu_y = \widetilde f(y) = \bTheta^{\top}g(y) = \bTheta^{\top}\bZ$ and let $\bpi = (\pi_1\dots \pi_K)^{
\top}$. Then
\begin{align*}
\E(\widetilde \bu_y) &= \bTheta^{\top}\E(\bZ) = \bTheta^{\top}\bpi = (\bTheta_1^{\top}\bpi\dots \bTheta_{K-1}^{\top}\bpi)={\bf0}_{K-1},\\
\cov(\widetilde \bu_y) &= \bTheta^{\top}\cov(\bZ)\bTheta = \bTheta^{\top}(\diag(\bpi) - \bpi\bpi^{\top})\bTheta =\bTheta^{\top}\diag(\bpi) \bTheta = \bI_{K}.
\end{align*}
Next define $\widetilde \bDelta_d\in \R^{p \times (K-1)}$ to have columns $\widetilde \bDelta_{dr}$ with
$$
\widetilde \bDelta_{dr} = \frac{\sqrt{\pi_{r+1}}\left\{\sum_{i=1}^r\pi_i(\bmu_{di} - \bmu_{d(r+1)})\right\}}{\sqrt{\sum_{i=1}^r\pi_i\sum_{i=1}^{r+1}\pi_i}}.
$$
Then
\begin{align*}
\E(\bmu_d + \widetilde \bDelta_d\widetilde f(y)|y = k)
=& \sum_{m=1}^K\pi_m\bmu_{dm} + \widetilde \bDelta_d\bTheta^{\top}g(k)
\\
=&\sum_{m=1}^K\pi_m\bmu_{dm} - \sqrt{\frac{\sum_{i=1}^{k-1}\pi_i}{\pi_{k}\sum_{i=1}^{k}\pi_i}}\widetilde \bDelta_{d(k-1)} + \sum_{l=k}^{K-1}\sqrt{\frac{\pi_{l+1}}{\sum_{i=1}^l\pi_i\sum_{i=1}^{l+1}\pi_i}}\widetilde \bDelta_{dl}\\
=&\bmu_{dk},
\end{align*}
where in the last step we used the properties of orthogonal group-mean contrasts for unbalanced data, see \citet{Searle:2006ww} and also Proposition~2 in \citet{Gaynanova:2016wk}.
Consider the eigendecomposition
$
\widetilde \bDelta_d^{\top}\bSigma_{dy}^{-1}\widetilde \bDelta_d = \bR_d\bLambda_d \bR_d^{\top}.
$
Setting $\bDelta_d = \widetilde \bDelta_d \bR_d$ and $\bu_y = \bR_d^{\top}\widetilde \bu_y$ leads to desired factor model.\\

2. Consider
$
\bSigma_{ld} = \bDelta_l\bDelta_d^{\top} = \bSigma_l \Big[ \bSigma_l^{-1}\bDelta_l\bDelta_d^{\top}\bSigma_d^{-1} \Big]\bSigma_d.
$ 
Let $\bLambda_d = \bDelta_d^{\top}\widetilde \bSigma_{d}^{-1}\bDelta_d$, where $\bLambda_d$ is diagonal by definition of factor model~\eqref{eq:factor2}. Using Woodbury matrix identity,
$$
\bDelta_d^{\top}\bSigma_d^{-1}\bDelta_d = \bDelta_d^{\top}(\widetilde \bSigma_d + \bDelta_d\bDelta_d^{\top})^{-1}\bDelta_d = \bLambda_d^{1/2} (\bI + \bLambda_d)^{-1}\bLambda_d^{1/2}.
$$
Let $\bTheta_d = \bSigma_d^{-1}\bDelta_d \bLambda_d^{-1/2}(\bI+\bLambda_d)^{1/2}$, then $\bTheta_d^{\top}\bSigma_d\bTheta_d = \bI$,
and 
$$
\bSigma_{ld} = \bSigma_l \left[ \bTheta_l(I+\bLambda_l)^{-1/2}\bLambda_l^{1/2}\bLambda_d^{1/2}(I+\bLambda_d)^{-1/2}\bTheta_d^{\top} \right]\bSigma_d = \bSigma_l\left(\sum_{k=1}^{K-1} \rho_k \btheta_{l}^{(k)}\btheta_{d}^{(k)\top}\right)\bSigma_d,
$$
where $\rho_k$ are the diagonal elements of matrix $(\bI+\bLambda_l)^{-1/2}\bLambda_l^{1/2}\bLambda_d^{1/2}(\bI+\bLambda_d)^{-1/2}$, and $\btheta_l^{(k)}$, $\btheta_d^{(k)}$ are corresponding columns of $\bTheta_l$, $\bTheta_d$.

\end{proof}

\begin{proof}[Proof of Theorem~\ref{t:sigma12}] Consider
\begin{align*}
\bSigma_{ld} &= \bA_l\bA_d^{\top}+ \bDelta_l\bDelta_d^{\top} \\
&=  \bSigma_l\left\{\bSigma_l^{-1/2}\Big(\bSigma_l^{-1/2}\bA_l\bA_d^{\top}\bSigma_d^{-1/2} + \bSigma_l^{-1/2}\bDelta_l\bDelta_d^{\top}\bSigma_d^{-1/2}\Big)\bSigma_d^{-1/2}\right\}\bSigma_d\\
& = \bSigma_l\left\{\bSigma_l^{-1/2}\Big(\bR_q\bD_q\bP_q^{\top} + \bR_{K-1}\bD_{K-1}\bP_{K-1}^{\top}\Big)\bSigma_d^{-1/2}\right\}\bSigma_d,
\end{align*}
where we used singular value decomposition $\bSigma_l^{-1/2}\bA_l\bA_d^{\top}\bSigma_d^{-1/2} = \bR_q\bD_q\bP_q^{\top}$ and $\bSigma_l^{-1/2}\bDelta_l\bDelta_d^{\top}\bSigma_d^{-1/2} =\bR_{K-1}\bD_{K-1}\bP_{K-1}^{\top}$. Since $\bA_d^{\top}\widetilde \bSigma_d^{-1}\bDelta_d = \bf0$, by Woodbury matrix identity  $\bA_d^{\top}\bSigma_d^{-1}\bDelta_d = \bf0$, and therefore $\bR_q^{\top}\bR_{K-1} = \bf0$ and $\bP_q^{\top}\bP_{k-1} = \bf0$. From the above display,
$$
\bSigma_{ld} = \bSigma_l\left\{\bSigma_l^{-1/2}\bR_{q+K-1}\bD_{q+K-1}\bP_{q+K-1}^{\top}\bSigma_d^{-1/2}\right\}\bSigma_d.
$$
The result follows by setting $\bTheta_d = \bSigma_d^{-1/2}\bP_{q+K-1}$, and using the results from the proof of Theorem~\ref{p:factor}.
\end{proof}

\begin{proof}[Proof of Lemma~\ref{l:theoreticalResults}]
Using the definition of augmented $\bX'$ and $\bY'$, 
$$ \bG = \E(\bX'^\top \bX')=
\bpm\bSigma_1 & -\frac{1-\alpha}{D-1}\bSigma_{12} & \cdots & -\frac{1-\alpha}{D-1}\bSigma_{1D}\\
-\frac{1-\alpha}{D-1}\bSigma_{21} & \bSigma_2 &\cdots&-\frac{1-\alpha}{D-1}\bSigma_{2D}\\
&\vdots&\\
-\frac{1-\alpha}{D-1}\bSigma_{D1}&-\frac{1-\alpha}{D-1}\bSigma_{D2}&\cdots&\bSigma_D\epm/D.$$

By multiplying the covariance matrix $\bG$ on both sides of $\bW_d^*\bR_d^\top\propto\bSigma_d^{-1}\bDelta_d$ (here $\propto$ means equal up to column-scaling), it remains to show that  for some orthogonal matrices $\bR_d$, $\widetilde\bDelta\diag(\bR_1\cdots,\bR_D)^\top\propto\bG\diag(\bSigma)^{-1}\bDelta$, where $\diag(\bSigma)^{-1}=\diag(\bSigma_1^{-1},\cdots,\bSigma_D^{-1})$. Expending the right hand side leads to
\begin{equation*}
\begin{split}
    \bG\diag(\bSigma)^{-1}\bDelta&=
    \bpm \bI&-\frac{1-\alpha}{D}\bSigma_{12}\bSigma_2^{-1}&\cdots&
    -\frac{1-\alpha}{D}\bSigma_{1D}\bSigma_D^{-1}\\
    &\vdots\\
    -\frac{1-\alpha}{D}\bSigma_{D1}\bSigma_1^{-1}&
    -\frac{1-\alpha}{D}\bSigma_{D2}\bSigma_2^{-1}&\cdots&
    \bI\epm\bpm\bDelta_1\\\vdots\\\bDelta_D\epm\\
    &=\bpm \bDelta_1-\frac{1-\alpha}{D}\sum_{d\neq 1}\bSigma_{1d}\bSigma_d^{-1}\bDelta_d\\ \vdots\\
    \bDelta_D-\frac{1-\alpha}{D}\sum_{d\neq D}\bSigma_{Dd}\bSigma_d^{-1}\bDelta_d\epm.
    \end{split}
\end{equation*}
From the factor model decomposition~\eqref{eq:factor2}, $\bA_d^\top\bSigma_d^{-1}\bDelta_d=0$ holds, and hence
\begin{equation*}
\bSigma_{ld}\bSigma_d^{-1}\bDelta_d=\bDelta_l\bDelta_d^\top\bSigma_d^{-1}\bDelta_d=\bDelta_l\bLambda_d(\bLambda_d+\bI)^{-1},
\end{equation*}
where $\bDelta_d^\top\bSigma_{dy}^{-1}\bDelta_d=\bLambda_d$. 
It follows that
$$\bDelta_l-\frac{1-\alpha}{D}\sum_{d\neq l}\bSigma_{ld}\bSigma_d^{-1}\bDelta_d = \bDelta_l-\frac{1-\alpha}{D}\sum_{d\neq l}\bDelta_l\bLambda_d(\bLambda_d+\bI)^{-1}\propto \bDelta_l.$$
Choosing $\bR_d$ as an orthogonal matrix such that $\widetilde\bDelta_d\bR_d^\top=\bDelta_d$ completes the proof.
\end{proof}

\begin{proof}[Proof of Theorem~\ref{t:fast_prob_p}]
Consider the concatenated
$\widetilde \bX = \bpm \bX_1&\bX_2&\cdots&\bX_D\epm.$
From  Lemmas~3 and 7 in \citet{Gaynanova:2019pe}, with probability at least $1-\eta$ and some constant $C$
$$
\|\frac1n\widetilde \bX^{\top}\widetilde \bX-\bSigma_T\|_{\infty} \leq C\tau^2\sqrt{\frac{\log(\sum_{d=1}^Dp_{d}\eta^{-1})}{n}},
$$
where $\tau=\max_j\sqrt{\sigma_j^2+\max_k\mu_{kj}^2}$, $\sigma_j$ are  diagonal elements of $\bSigma_{y}$ and $\mu_{k,j}$ are elements of $\bmu_k$. Therefore, with probability at least $1-\eta$
$$\|\bG-\bX'^\top \bX'\|_\infty\leq\frac1D\|\frac1n\widetilde \bX^{\top}\widetilde \bX-\bSigma_T\|_{\infty}\leq \frac{C\tau^2}{D}\sqrt{\frac{\log(\sum_{d=1}^Dp_{d}\eta^{-1})}{n}}. $$
From Lemma~\ref{l:REgroup}, if $s_d\leq\gamma\lambda^2_{min}(32D\lambda^2_d\|\bG - \bX'^{\top}\bX'\|_{\infty})^{-1}$, 
 then  $\bX'$ satisfies $RE(S,3, \blambda)$ and $\gamma \leq 2 \gamma$. Hence, using $\lambda_d = C\left(\tau\vee \tau^2\delta g\right)D^{-1}\sqrt{{(K-1)\log[(K-1)p_d ]}/{n}}$, Assumption~\ref{a:sample} and the condition $s_d^2{\log[(K-1)p_{d}]}=o(n)$ leads to $s_d\leq\gamma\lambda^2_{min}(32D\lambda^2_d\|\bG - \bX'^{\top}\bX'\|_{\infty})^{-1}$. Therefore, by Theorems~\ref{t:determine} and~\ref{t:Xtepsilon}
\begin{align*}
\|\widehat \bW - \bW ^*\|_F &= O_p\left(\left(\tau\vee \tau^2\delta g\right)\frac1{D\gamma}\sqrt{\frac{K-1}{n}\sum_{d=1}^Ds_d\log[(K-1)p_{d}]}\right).
\end{align*}
\end{proof}

\begin{proof}[Proof of Proposition~\ref{p:lambda_max_d}]
By the KKT conditions \eqref{eq:kkt}, $\bW_d=0$ leads to $\bX'^{\top}_{dj}\bY'= \lambda \bu_{dj}$, hence by the definition of subgradient
$\left\|\bX'^{\top}_{dj}\bY'\right\|_2=\left\|\left(\frac{\alpha \bX_d^\top\widetilde \bY}{nD}\right)_j\right\|_2= \lambda \|\bu_{dj}\|_2\leq  \lambda$.
This implies that $\bW_d = 0$ satisfies KKT conditions whenever $\lambda \geq \alpha(nD)^{-1}\|\bX_d^\top\widetilde \bY\|_{\infty, 2}$.
\end{proof}

\section{Supporting  Theorems and Lemmas}\label{sec:supThm}
\begin{lemma}\label{l:cone} Let $\bphi_d=\frac{\alpha}{nD}\bX_d^\top\left(\widetilde \bY - \bX_d\bW^*_d\right)+ \frac{1-\alpha}{nD(D-1)}\sum_{j\neq d}\left(\bX_j\bW^*_j-\bX_d\bW^*_d\right)$, and let $\widehat \bW$ be the solution to~\eqref{eq:augmented} with $\lambda_d \geq 2\|\bX_d^{\top}\bphi_d\|_{\infty,2}$. Let $\bH = \widehat \bW-\bW^*$, and  $S$ as defined in Assumption~\ref{a:sparsity}, then $\bH \in C(S, \lambda)$.
\end{lemma}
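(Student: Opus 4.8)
The plan is to run the standard ``basic inequality'' argument for penalized $M$-estimation, using that $\widehat\bW$ minimizes the augmented objective~\eqref{eq:augmented} and that the $\|\cdot\|_{1,2}$ penalty decomposes over the support. Write $\bPsi = \bY' - \bX'\bW^*$ and $\bH = \widehat\bW - \bW^*$. First I would expand $\|\bY'-\bX'\widehat\bW\|_F^2 = \|\bPsi\|_F^2 - 2\langle\bX'^\top\bPsi,\bH\rangle + \|\bX'\bH\|_F^2$, substitute into the optimality inequality $\frac12\|\bY'-\bX'\widehat\bW\|_F^2 + \sum_d\lambda_d\|\widehat\bW_d\|_{1,2} \le \frac12\|\bY'-\bX'\bW^*\|_F^2 + \sum_d\lambda_d\|\bW_d^*\|_{1,2}$, and drop the nonnegative term $\frac12\|\bX'\bH\|_F^2$ to obtain
\begin{equation*}
\sum_{d=1}^D\lambda_d\|\widehat\bW_d\|_{1,2} - \sum_{d=1}^D\lambda_d\|\bW_d^*\|_{1,2} \;\le\; \langle\bX'^\top\bPsi,\bH\rangle \;=\; \sum_{d=1}^D\langle\bX_d^\top\bphi_d,\bH_d\rangle ,
\end{equation*}
where the last equality follows by reading off, from the definitions of the augmented $\bX'$ and $\bY'$, that the $d$-th row-block of $\bX'^\top\bPsi$ is exactly $\bX_d^\top\bphi_d$ (equivalently, $-\bX_d^\top\bphi_d$ is the partial gradient of the smooth part of~\eqref{eq:colP} in $\bW_d$ evaluated at $\bW^*$).

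Next I would bound the right-hand side. Since the norms $\|\cdot\|_{\infty,2}$ and $\|\cdot\|_{1,2}$ are dual (a row-wise Cauchy--Schwarz gives $\langle\bM,\bN\rangle\le\|\bM\|_{\infty,2}\|\bN\|_{1,2}$), applying this block by block together with the hypothesis $\lambda_d \ge 2\|\bX_d^\top\bphi_d\|_{\infty,2}$ yields $\sum_d\lambda_d\|\widehat\bW_d\|_{1,2} - \sum_d\lambda_d\|\bW_d^*\|_{1,2} \le \frac12\sum_d\lambda_d\|\bH_d\|_{1,2}$. Finally I would use that $\bW_d^*$ is supported on $S_d$ (Assumption~\ref{a:sparsity}): writing $\widehat\bW_d = \bW_d^* + \bH_d$ and splitting the $\|\cdot\|_{1,2}$ norm over the rows in $S_d$ and in $S_d^c$ gives $\|\widehat\bW_d\|_{1,2} \ge \|\bW_d^*\|_{1,2} - \|\bH_{d,S_d}\|_{1,2} + \|\bH_{d,S_d^c}\|_{1,2}$, while $\|\bH_d\|_{1,2} = \|\bH_{d,S_d}\|_{1,2} + \|\bH_{d,S_d^c}\|_{1,2}$. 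Substituting both into the previous display and rearranging gives $\frac12\sum_d\lambda_d\|\bH_{d,S_d^c}\|_{1,2} \le \frac32\sum_d\lambda_d\|\bH_{d,S_d}\|_{1,2}$, i.e.\ $\sum_d\lambda_d\|\bH_{d,S_d^c}\|_{1,2} \le 3\sum_d\lambda_d\|\bH_{d,S_d}\|_{1,2}$, which is precisely $\bH\in C(S,\blambda)$.

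There is no essential difficulty here: this is the textbook cone inclusion, only adapted to the multi-view, matrix-valued-response setting. The two points that require care are (i) verifying that the zero-padding and the $\alpha$-dependent weights inside $\bX'$, $\bY'$ make the $d$-th gradient block of~\eqref{eq:augmented} coincide with $\bX_d^\top\bphi_d$ for $\bphi_d$ as in the statement, and (ii) tracking the constant, so that the factor $2$ in $\lambda_d \ge 2\|\bX_d^\top\bphi_d\|_{\infty,2}$ produces the factor $3$ in the definition of the weighted cone. Both are routine bookkeeping, and neither the dependence between $\bX'$ and $\bPsi$ nor the non-Gaussianity of $\bPsi$ plays any role at this stage --- those enter only later, when one has to show that the hypothesis $\lambda_d \ge 2\|\bX_d^\top\bphi_d\|_{\infty,2}$ holds with high probability.
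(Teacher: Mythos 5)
Your proof is correct and takes essentially the same route as the paper's: the identification of the $d$-th block of $\bX'^{\top}\bPsi$ with $\bX_d^{\top}\bphi_d$, the block-wise dual-norm (H\"older) bound combined with $\lambda_d\geq 2\|\bX_d^{\top}\bphi_d\|_{\infty,2}$, and the support-splitting triangle inequality yielding the factor $3$ are exactly the paper's steps. The only cosmetic difference is that the paper derives the intermediate inequality from the KKT conditions and subgradient monotonicity, keeping the term $\|\bX'(\widehat\bW-\bW^*)\|_F^2$ on the left (which it reuses later for the estimation bound), whereas you start from the basic optimality inequality and discard the quadratic term immediately --- an inessential variation for this lemma.
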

\begin{proof}
Consider the KKT conditions for~\eqref{eq:augmented}
$$
0 = -\bX'^{\top}(\bY'- \bX'\widehat \bW) + \widehat \bs,
$$
where $\widehat \bs_{dj} \in \partial (\lambda_d \|\bw_{dj}\|_2)$ evaluated at $\widehat \bW$. Multiplying $(\bW^*-\widehat \bW)^{\top}$ on both sides gives  
\begin{equation*}
    (\bW^*-\widehat \bW)^{\top}\left(\bX'^{\top}(\bY'- \bX'\widehat \bW) - \widehat \bs\right) = 0.
\end{equation*}
Let $\bPsi=\bY'-\bX'\bW^*$. Replacing $\bY'$ with $\bY' + \bX'\bW^*-\bX'\bW^*$ and using properties of subgradient of convex functions leads to
\begin{equation*}
    \|\bX'(\bW^*-\widehat \bW)\|_F^2\leq \langle \bX'^{\top}\bPsi, \widehat \bW-\bW^*\rangle  + \sum_{d=1}^D\lambda_d \|\bW_d^*\|_{1,2} - \sum_{d=1}^D\lambda_d \|\widehat \bW_d\|_{1,2}.
\end{equation*}
Since $\langle \bX'^{\top}\bPsi, \widehat \bW-\bW^*\rangle=\sum_{d=1}^D \langle \bX_d^{\top}\bphi_d, \widehat \bW_d - \widehat \bW^*_d\rangle$, applying  H\"older inequality twice and using conditions on $\lambda_d$ leads to
\begin{align*}
    \|\bX'(\bW^*-\widehat \bW)\|_F^2&\leq\sum_{d=1}^D \langle \bX_d^{\top}\bphi_d, \widehat \bW_d -  \bW^*_d\rangle+ \sum_{d=1}^D\lambda_d \|\bW_d^*\|_{1,2} - \sum_{d=1}^D\lambda_d \|\widehat \bW_d\|_{1,2}\\
    &\leq\sum_{d=1}^D \|\bX_d^{\top}\bphi_d\|_{\infty,2}\|\widehat \bW_d -  \bW^*_d\|_{1,2}+ \sum_{d=1}^D\lambda_d \|\bW_d^*\|_{1,2} - \sum_{d=1}^D\lambda_d \|\widehat \bW_d\|_{1,2}\\
    &\leq \sum_{d=1}^D\frac{{\lambda_d}}2 \left(\|\bH_{d,S_d}\|_{1,2}+\|\bH_{d,S^c_d}\|_{1,2}\right)+\sum_{d=1}^D\lambda_d \|\bW^*_d\|_{1,2} - \sum_{d=1}^D\lambda_d \|\widehat \bW_d\|_{1,2}.
\end{align*}
Since 
\begin{align*}
\|\widehat \bW_d\|_{1,2} &= \|\bW^*_d + \widehat \bW_d-\bW^*_d\|_{1,2} = \|\bW^*_{d,S_d} + \bH_{d,S_d}\|_{1,2} + \|\bH_{d, S_d^c}\|_{1,2}\\
&\geq \|\bW^*_{d,S_d}\|_{1,2} - \|\bH_{d,S_d}\|_{1,2} + \|\bH_{d, S_d^c}\|_{1,2},
\end{align*}
combining the above two displays gives

\begin{align}\label{eq:XWUpperBound}
    \|\bX'(\bW^*-\widehat \bW)\|_F^2 \leq \sum_{d=1}^D\frac{3}{2}\lambda_d\|\bH_{d,S_d}\|_{1,2} - \sum_{d=1}^D\frac1{2}\lambda_d\|\bH_{d,S_d^c}\|_{1,2}.
\end{align}
Since $\|\bX'(\bW^*-\widehat \bW)\|_F^2 \geq 0$, the statement follows.
\end{proof}

\begin{thm}\label{t:determine} Let $\widehat \bW$ be the solution to~\eqref{eq:augmented} with $\lambda_d \geq 2\|\bX_d^{\top}\bphi_d\|_{\infty,2}$, where $\bphi_d$ are defined in Lemma~\ref{l:cone}. Under Assumption~\ref{a:sparsity}, if $\bX'$ satisfies  $\textrm{RE}(S,  \blambda)$ with $\gamma = \gamma(S,  \blambda, \bX')$, then
$$\|\widehat \bW - \bW^*\|_F\leq\frac3{2\gamma}\sqrt{\sum_{d=1}^D\lambda_d^2s_d}.
$$

\end{thm}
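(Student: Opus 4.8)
The plan is to combine the basic inequality already derived in Lemma~\ref{l:cone} with two applications of Cauchy--Schwarz and the restricted eigenvalue condition; this is the standard deterministic step in group-lasso analysis. Write $\bH=\widehat\bW-\bW^*$. Inequality~\eqref{eq:XWUpperBound} gives
$$\|\bX'\bH\|_F^2 \le \sum_{d=1}^D \tfrac32 \lambda_d \|\bH_{d,S_d}\|_{1,2} - \sum_{d=1}^D \tfrac12 \lambda_d \|\bH_{d,S_d^c}\|_{1,2} \le \tfrac32 \sum_{d=1}^D \lambda_d \|\bH_{d,S_d}\|_{1,2},$$
where the last step just discards the nonnegative second sum.

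Next I would control $\sum_{d} \lambda_d \|\bH_{d,S_d}\|_{1,2}$ in terms of $\|\bH\|_F$. Since $\card(S_d)=s_d$, Cauchy--Schwarz over the rows of $\bH_d$ indexed by $S_d$ gives $\|\bH_{d,S_d}\|_{1,2}=\sum_{i\in S_d}\|\bh_{di}\|_2 \le \sqrt{s_d}\,\|\bH_{d,S_d}\|_F \le \sqrt{s_d}\,\|\bH_d\|_F$. Applying Cauchy--Schwarz a second time, now across the $D$ views,
$$\sum_{d=1}^D \lambda_d \|\bH_{d,S_d}\|_{1,2} \le \sum_{d=1}^D \lambda_d \sqrt{s_d}\,\|\bH_d\|_F \le \Big(\sum_{d=1}^D \lambda_d^2 s_d\Big)^{1/2}\Big(\sum_{d=1}^D \|\bH_d\|_F^2\Big)^{1/2} = \Big(\sum_{d=1}^D \lambda_d^2 s_d\Big)^{1/2}\|\bH\|_F .$$
Combining with the previous display yields $\|\bX'\bH\|_F^2 \le \tfrac32\big(\sum_{d}\lambda_d^2 s_d\big)^{1/2}\|\bH\|_F$.

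Finally I would invoke the restricted eigenvalue condition. The hypothesis $\lambda_d\ge 2\|\bX_d^\top\bphi_d\|_{\infty,2}$ together with Lemma~\ref{l:cone} forces $\bH\in C(S,\blambda)$, so Definition~\ref{d:REgroup} applied with $\bQ=\bX'$ gives $\|\bX'\bH\|_F^2\ge\gamma\|\bH\|_F^2$. Chaining the two bounds, $\gamma\|\bH\|_F^2 \le \tfrac32\big(\sum_{d}\lambda_d^2 s_d\big)^{1/2}\|\bH\|_F$; dividing by $\gamma\|\bH\|_F$ (the case $\bH=0$ being trivial) gives $\|\widehat\bW-\bW^*\|_F\le \tfrac{3}{2\gamma}\big(\sum_{d}\lambda_d^2 s_d\big)^{1/2}$, as claimed.

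There is no substantive obstacle in this particular statement: everything reduces to bookkeeping, and the only points requiring care are (i) that the $S_d^c$ term in~\eqref{eq:XWUpperBound} may be dropped because it enters with a minus sign, and (ii) that the norm estimate $\|\bH_{d,S_d}\|_{1,2}\le\sqrt{s_d}\,\|\bH_d\|_F$ is arranged so that the two Cauchy--Schwarz steps collapse to $\|\bH\|_F$. The genuinely hard work lies upstream and downstream of this lemma --- verifying that $\bH$ belongs to the cone (Lemma~\ref{l:cone}, which in turn needs the stochastic control on $\|\bX_d^\top\bphi_d\|_{\infty,2}$) and showing that the random matrix $\bX'$ inherits $\textrm{RE}(S,\blambda)$ from $\bG^{-1/2}$ --- and that is handled separately in the proof of Theorem~\ref{t:fast_prob_p}.
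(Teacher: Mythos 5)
Your proof is correct and follows essentially the same route as the paper's: start from the basic inequality~\eqref{eq:XWUpperBound} established in Lemma~\ref{l:cone}, drop the nonpositive $S_d^c$ term, apply Cauchy--Schwarz twice (within each support $S_d$ and then across the $D$ views), and close with the restricted eigenvalue bound on the cone, handling the trivial case $\bH=0$ before dividing. The only cosmetic difference is that the paper passes through $\|\bW^*_S-\widehat\bW_S\|_F$ before bounding it by the full Frobenius norm, a step you merge into one.
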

\begin{proof}
From equation~\eqref{eq:XWUpperBound}, using $\bH = \widehat \bW-\bW^*$, 
\begin{align*}
      \|\bX'(\bW^*-\widehat \bW)\|_F^2\leq \sum_{d=1}^D\frac{{3\lambda_d}}2 \|\bW^*_{d,S_d}-\widehat \bW_{d,S_d}\|_{1,2}\leq \frac32\sum_{d=1}^D\lambda_d\sqrt{s_d}\|\bW^*_{d,S_d}-\widehat \bW_{d,S_d}\|_F.
\end{align*}
Applying Cauchy-Schwartz inequality gives
\begin{align*}
    \|\bX'(\bW^*-\widehat \bW)\|_F^2 \leq \frac32\sqrt{\sum_{d=1}^D\lambda_d^2s_d}\|\bW_S^*-\widehat \bW_S\|_F.
\end{align*}
Since $\bX'$ satisfies  $\textrm{RE}(S,  \blambda)$ and $\bH\in C(S,  \blambda)$, by Lemma~\ref{l:cone}
\begin{align*}
   \|\bW^*-\widehat \bW\|^2_F &\leq \frac1{\gamma}\|\bX'(\bW^*-\widehat \bW)\|_2^2 \leq\frac1{\gamma}\frac32\sqrt{\sum_{d=1}^D\lambda_d^2s_d}\|\bW_S^*-\widehat \bW_S\|_F\\
   &\leq \frac1{\gamma}\frac32\sqrt{\sum_{d=1}^D\lambda_d^2s_d}\|\bW^*-\widehat \bW\|_F.
\end{align*}
If $\|\bW^*-\widehat \bW\|_F^2= 0$, the bound holds trivially. Otherwise, dividing by $\|\bW^*-\widehat \bW\|_F$ on both sides leads to the desired bound.
\end{proof}

\begin{thm}\label{t:Xtepsilon} Under Assumptions~\ref{a:p}--\ref{a:sample}, there exists $C>0$ such that
$$
\|\bX_d^{\top}\bphi_d\|_{\infty,2} \leq C\left(\tau\vee \tau^2\delta g\right)\frac1D\sqrt{\frac{(K-1)\log((K-1)p_d \eta^{-1})}{n}},\quad d=1,\dots,D,$$
with probability at least $1-\eta$, where $\bphi_d$ are from Lemma~\ref{l:cone}.
\end{thm}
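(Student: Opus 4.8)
The plan is to express $\bX_d^{\top}\bphi_d$ as a sum over the $n$ samples of products of (conditionally) sub-Gaussian quantities, bound each product's sub-exponential norm by a multiple of $\tau\vee\tau^2\delta g$, and then apply a Bernstein tail bound together with a union bound over the $(K-1)p_d$ entries; the expectation of $\bX_d^{\top}\bphi_d$ is first disposed of. Indeed, $\bphi_d$ is the $d$-th block of $\bPsi = \bY' - \bX'\bW^*$ with $\bW^* = \bG^{-1}\widetilde\bDelta$ a fixed population quantity, so $\E(\bX'^{\top}\bPsi) = \E(\bX'^{\top}\bY') - \E(\bX'^{\top}\bX')\bW^* = (\widetilde\bDelta + o(1)) - \bG\bG^{-1}\widetilde\bDelta = o(1)$, where the $o(1)$ stems only from replacing $n_k/n$ by $\pi_k$ (Lemma~8 of \citet{Gaynanova:2015km}) and is of order $n^{-1}$; hence $\|\E(\bX_d^{\top}\bphi_d)\|_{\infty,2}$ is dominated by the asserted bound and it suffices to control the centred quantity.

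A short computation shows that the $\bX_d$-dependent part of $\bphi_d$ collapses to $-\tfrac1{nD}\bX_d\bW_d^*$, so that for a variable $j$ and a score $r$,
\begin{equation*}
\bigl(\bX_d^{\top}\bphi_d\bigr)_{jr} = \frac1{nD}\sum_{i=1}^{n} x_{dij}\,v_{ir}, \qquad v_{ir} = \alpha\,\widetilde y_{ir} + \frac{1-\alpha}{D-1}\sum_{l\neq d}(\bX_l\bW_l^*)_{ir} - (\bX_d\bW_d^*)_{ir}.
\end{equation*}
Conditionally on the labels $y_1,\dots,y_n$ the pairs $(x_{dij},v_{ir})$ are independent across $i$ and, by Assumptions~\ref{a:p}--\ref{a:norm}, jointly sub-Gaussian: the entries of $\bX_d$ have sub-Gaussian parameter $O(\tau)$; the entries of $\widetilde\bY$ are $O(1)$; and the coordinates of $\bX_l\bW_l^*$, $\bX_d\bW_d^*$ are sub-Gaussian with parameter of order $\delta g$, controlled through $\bW^* = \bG^{-1}\widetilde\bDelta$ by the diagonal bound $g$ on $\bG^{-1}$ together with the structure of $\widetilde\bDelta$, while the $\tfrac1{D-1}$ prefactor on the CCA block absorbs the sum of $D-1$ such terms and keeps the $D$-dependence out of the parameter. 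Hence each product $x_{dij}v_{ir}$ is sub-exponential with $\psi_1$-norm of order $\tau\vee\tau^2\delta g$, uniformly in $i$; crucially, this bound does not require $x_{dij}$ and $v_{ir}$ to be independent, so the dependence created by the $-(\bX_d\bW_d^*)_{ir}$ term affects only the conditional mean.

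Applying Bernstein's inequality for sums of independent centred sub-exponentials to $\tfrac1{nD}\sum_i\{x_{dij}v_{ir}-\E(x_{dij}v_{ir}\mid y)\}$ gives, for each $(j,r)$, a deviation of order $\tfrac1D(\tau\vee\tau^2\delta g)(\sqrt{t/n}+t/n)$ with probability at least $1-2e^{-t}$. Choosing $t\asymp\log((K-1)p_d\eta^{-1})$, the term $t/n$ is of lower order by Assumption~\ref{a:sample}; a union bound over the $(K-1)p_d$ entries controls $\max_{j,r}|(\bX_d^{\top}\bphi_d)_{jr}|$, and since $\|\cdot\|_{\infty,2}$ of a matrix is at most $\sqrt{K-1}$ times its largest absolute entry, the stated bound follows. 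The label-conditional mean $\tfrac1{nD}\sum_i\E(x_{dij}v_{ir}\mid y)$ is handled, as in the proof of Theorem~\ref{t:fast_prob_p}, on the high-probability event that $\max_k|n_k/n-\pi_k|$ is small, where it stays within $O(n^{-1})$ of the negligible unconditional mean; a final union bound over $d=1,\dots,D$ (rescaling $\eta$ by $D$) yields the simultaneous statement.

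The delicate points are two. First, the dependence between the random design $\bX_d$ and the ``residual'' $\bphi_d$ prevents a direct appeal to off-the-shelf group-lasso noise bounds; the fix is the observation above that a product of jointly sub-Gaussian variables is sub-exponential with the expected $\psi_1$-norm whether or not the factors are independent, so only the (negligible) bias is affected. Second, and more laborious, one must verify that the sub-exponential parameter is of order exactly $\tau\vee\tau^2\delta g$, with no spurious operator-norm or dimension-dependent factors; this requires bounding the coordinates of $\bX_l\bW_l^*$ via the explicit representation $\bW^*=\bG^{-1}\widetilde\bDelta$ and the relation $\bW_d^*\propto\bSigma_d^{-1}\bDelta_d$ from Lemma~\ref{l:theoreticalResults}, exploiting the diagonal bound $g$ and the row-structure of $\widetilde\bDelta$ rather than crude norm inequalities.
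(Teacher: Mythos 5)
Your outline is essentially the paper's own argument: you collapse the $\bX_d$-coefficient in $\bX_d^{\top}\bphi_d$ to $-\tfrac1{nD}\bX_d^{\top}\bX_d\bW_d^*$ (correct, since $\tfrac{\alpha}{nD}+\tfrac{1-\alpha}{nD}=\tfrac1{nD}$), reduce everything to cross-products of the entries of $\bX_d$ with the entries of $\alpha\widetilde\bY$ and of $\bU=\bX_d\bW_d^*-\tfrac{1-\alpha}{D-1}\sum_{l\neq d}\bX_l\bW_l^*$, use that products of marginally sub-Gaussian (not necessarily independent) variables are sub-exponential, and finish with Bernstein, a union bound over the $(K-1)p_d$ entries, and $\|\cdot\|_{\infty,2}\leq\sqrt{K-1}\,\|\cdot\|_{\infty}$. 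This is exactly the route the paper takes through Lemma~\ref{l:vwbound} and Lemma~\ref{l:crossProdIndBound}, so the approach is sound.

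Two steps are mis-stated, though neither destroys the rate. First, the coordinates of $\bX_l\bW_l^*$ (equivalently of $\bU$) are sub-Gaussian with parameter of order $\tau\delta g$, not $\delta g$: writing $\bW^{*\top}\bv_i$ with $\bW^*=\bG^{-1}\widetilde\bDelta$ and splitting into the class-mean part and the Gaussian part, the mean part is bounded by $\|\widetilde\bDelta\|_{\infty,2}\|\bG^{-1}\|_{\infty}\|\bP\bM\|_{\infty}\leq\delta g\tau$ and the Gaussian part has covariance entries at most $\delta^2\tau^2g^2$; this is precisely the content of Lemma~\ref{l:vwbound}, which you defer rather than prove, and note that your own product bound $\tau\vee\tau^2\delta g$ is only consistent with the corrected parameter $\tau\delta g$, so your intermediate claim and your conclusion contradict each other as written. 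Second, the label-conditional mean is not within $O(n^{-1})$ of the unconditional one: it is a smooth function of the empirical proportions, so on the good event its deviation is of order $(\tau\vee\tau^2\delta g)\max_k|n_k/n-\pi_k|\asymp(\tau\vee\tau^2\delta g)\sqrt{\log(K\eta^{-1})/n}$, which still fits inside the asserted bound but is much larger than $O(n^{-1})$; the step as stated is false and needs this replacement. The paper avoids this bookkeeping altogether by observing that $\E\{(Dn)^{-1}\bX_d^{\top}\bU\}=\widetilde\bDelta_d$ exactly (first block row of $\bG$ times $\bG^{-1}\widetilde\bDelta$), so the $\bU$-term is centered with no approximation, and by absorbing the empirical-proportion fluctuation of the $\widetilde\bY$-term into the cited bound $\|\tfrac{\alpha}{nD}\bX_d^{\top}\widetilde\bY-\widetilde\bDelta_d\|_{\infty,2}\lesssim\tfrac\tau D\sqrt{(K-1)\log(p_d\eta^{-1})/n}$ from \citet{Gaynanova:2019pe}; if you keep the conditioning-on-labels variant, you must also verify the Lipschitz dependence of the $\bH$-entries on the class proportions under Assumption~\ref{a:p}.
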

\begin{proof}
Without loss of generality, consider $d=1$ and let $\widetilde\bDelta=\bpm \widetilde\bDelta_1^\top & \widetilde\bDelta_2^\top & \cdots& \widetilde\bDelta_D^\top\epm^\top$, where $\widetilde\bDelta_d\in\mathbb{R}^{p_d\times K-1}$. Applying the triangle inequality gives
\begin{align*}
\|&\bX_1^{\top}\bphi_1\|_{\infty,2} \\
&=   \|\frac{\alpha}{nD}\bX_{1}^{\top}(\widetilde \bY- \bX_1\bW_1^*) +  \frac{1-\alpha}{nD(D-1)}\sum_{l\neq 1} \bX_{1}^{\top}(\bX_{l}\bW_l^* - \bX_{1}\bW_1^*)\|_{\infty,2}\\
&=   \|\frac{\alpha}{nD}\bX_{1}^{\top}(\widetilde \bY- \bX_1\bW_1^*) -\widetilde\bDelta_1+\widetilde\bDelta_1+  \frac{1-\alpha}{nD(D-1)}\sum_{l\neq 1} \bX_{1}^{\top}(\bX_{l}\bW_l^* - \bX_{1}\bW_1^*)\|_{\infty,2}\\
&\leq  \underbrace{\|\frac{\alpha}{nD} \bX_{1}^{\top}\widetilde \bY-\widetilde\bDelta_1\|_{\infty,2}}_{:=I_1}\\
&~~+\underbrace{\|\widetilde\bDelta_1- \frac{\alpha}{nD} \bX_{1}^{\top}\bX_1\bW_1^*+  \frac{1-\alpha}{nD(D-1)} \sum_{l\neq 1} \bX_{1}^{\top}(\bX_{l}\bW_l^* - \bX_{1}\bW_1^*)\|_{\infty,2}}_{:=I_2}.
\end{align*}
Consider $I_1$. From Lemma 4 in \citet{Gaynanova:2019pe}, there exists  $C_1>0$ such that
$$\|\frac{\alpha}{nD}\bX_{1}^{\top}\widetilde \bY-\widetilde\bDelta_1\|_{\infty,2}\leq\frac{C_1}{D}\max_j\sigma_{1,j}\sqrt{\frac{(K-1)\log(p_1 \eta^{-1})}{n}}\leq\frac{C_1}{D}\tau\sqrt{\frac{(K-1)\log(p_1 \eta^{-1})}{n}}$$
 with probability at least $1-\eta$.

Consider $I_2$.
\begin{align*}
    I_2 
    &= \Big\|\widetilde\bDelta_1 - \frac1{n}\bX_1^{\top}\Big\{\alpha \frac1{D}\bX_1\bW_1^{*} + \frac{1-\alpha}{D}\bX_1\bW_1^*- \frac{1-\alpha}{D(D-1)}\sum_{l \neq 1}\bX_l\bW_l^*\Big\}\Big\|_{\infty,2}\\
    &=  \Big\|\widetilde\bDelta_1 - \frac1{Dn}\bX_1^{\top}\Big\{\bX_1\bW_1^{*}- \frac{1-\alpha}{D-1}\sum_{l \neq 1}\bX_l\bW_l^*\Big\}\Big\|_{\infty,2}\\
    &= \|\widetilde\bDelta_1 - \frac1{Dn}\bX_1^{\top}
\bU\|_{\infty,2},
\end{align*}
where $\bU=\bX_1\bW_1^{*}- \frac{1-\alpha}{D-1}\sum_{l \neq 1}\bX_l\bW_l^* \in \R^{n \times (K-1)}$. 
Since
the first $p_1$ rows of $\bG$ are $\bpm\bSigma_1 & -\frac{1-\alpha}{D-1}\bSigma_{12} & \cdots & -\frac{1-\alpha}{D-1}\bSigma_{1D}\epm/D$,
\begin{align*}
\E\left(\frac1{Dn}\bX_1^{\top}\bU\right)&=\frac1D\E\left(\frac1n\bX_1^\top\bpm \bX_1 &- \frac{1-\alpha}{D-1}\bX_2&\cdots&- \frac{1-\alpha}{D-1}\bX_D\epm \bW^*\right) \\&=\frac1D\bpm\bSigma_1 & -\frac{1-\alpha}{D-1}\bSigma_{12} & \cdots & -\frac{1-\alpha}{D-1}\bSigma_{1D}\epm \bG^{-1}\widetilde\bDelta \\
&=\bpm I_{p_1}\ {\bf 0}\epm\widetilde\bDelta = \widetilde\bDelta_1.
\end{align*}
Combining the above gives
\begin{align*} I_2&=\left\|\widetilde\bDelta_1 - \frac1{Dn}\bX_1^{\top}\bU\right\|_{\infty,2}\leq\sqrt{K-1}\left\|\widetilde\bDelta_1 - \frac1{Dn}\bX_1^{\top}\bU\right\|_{\infty}\\
&=\sqrt{K-1}\left\|\E\left(\frac1{Dn}\bX_1^{\top}\bU\right) - \frac1{Dn}\bX_1^{\top}\bU\right\|_{\infty}.
\end{align*}
From Lemma 3 in \citet{Gaynanova:2019pe}, all elements of $\bX_1$ are subgaussian with parameter at most $\tau$.
From Lemma~\ref{l:vwbound}, all elements of $\bU$ are subgaussian with parameter at most ${2\tau\delta g}$.
Therefore, by Lemma~\ref{l:crossProdIndBound}, there exist $C_2>0$ such that with probability at least $1-\eta$
\begin{align*}
    I_2\leq C_2\frac{\tau^2\delta g}{D}\sqrt{\frac{(K-1)\log((K-1)p_1 \eta^{-1})}{n}}.
\end{align*}
Combining the results for $I_1$ and $I_2$ leads to the desired bound.
\end{proof}

\begin{lemma}\label{l:vwbound}
Under Assumptions~\ref{a:p}--\ref{a:norm}, all elements of $\bU_d=\bX_d\bW_d^{*}- \frac{1-\alpha}{D-1}\sum_{l \neq d}\bX_l\bW_l^*$, $d=1,\dots,D$, are subgaussian with parameter ${2\tau\delta g}$.
\end{lemma}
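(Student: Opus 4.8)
The plan is to reduce the $(i,r)$ entry of $\bU_d$ to a scalar linear functional of the stacked vector $\widetilde\bx_i=(\bx_{i1}^\top,\dots,\bx_{iD}^\top)^\top$, recognize it (under Assumption~\ref{a:norm}) as a centered finite mixture of univariate Gaussians, apply the standard mixture‑to‑subgaussian bound, and then control the resulting conditional variance and conditional means by $\tau$, $\delta$ and $g$. Concretely, let $\bw^*_{lr}$ denote the $r$‑th column of $\bW_l^*$ and let $\bv_{dr}\in\R^{\sum_l p_l}$ be the vector whose $d$‑th block equals $\bw^*_{dr}$ and whose $l$‑th block ($l\neq d$) equals $\epsilon_l\bw^*_{lr}$ with $\epsilon_l=-\tfrac{1-\alpha}{D-1}$, so that $|\epsilon_l|<1$ for $\alpha\in(0,1]$ and the $(i,r)$ entry of $\bU_d$ equals $\bv_{dr}^\top\widetilde\bx_i$. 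Since $\E\bx_l={\bf 0}$ for every $l$, this entry has mean zero; and by Assumption~\ref{a:norm}, conditionally on $y_i=k$ we have $\widetilde\bx_i\sim\Ncal(\widetilde\bmu_k,\bSigma_y)$ with $\widetilde\bmu_k=(\bmu_{1k}^\top,\dots,\bmu_{Dk}^\top)^\top$, hence $\bv_{dr}^\top\widetilde\bx_i\mid(y_i=k)\sim\Ncal(\bv_{dr}^\top\widetilde\bmu_k,\ \bv_{dr}^\top\bSigma_y\bv_{dr})$. Thus the entry is a centered mixture of univariate Gaussians which, by the argument underlying Lemma~3 of \citet{Gaynanova:2019pe}, is subgaussian with parameter $\bigl(\bv_{dr}^\top\bSigma_y\bv_{dr}+\max_k(\bv_{dr}^\top\widetilde\bmu_k)^2\bigr)^{1/2}$; it then suffices to bound each of the two terms inside by $2\tau^2\delta^2g^2$.

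For the variance term I would exploit the factor model~\eqref{eq:factor2}. With $\bA=(\bA_1^\top,\dots,\bA_D^\top)^\top$ and $\widetilde\bSigma=\diag(\widetilde\bSigma_1,\dots,\widetilde\bSigma_D)$ one has $\bSigma_y=\bA\bA^\top+\widetilde\bSigma$. By Lemma~\ref{l:theoreticalResults} each column of $\bW_l^*$ lies in the column space of $\bSigma_l^{-1}\bDelta_l$, and the identifiability conditions of~\eqref{eq:factor2} give $\bA_l^\top\bSigma_l^{-1}\bDelta_l={\bf 0}$; hence $\bA_l^\top\bw^*_{lr}={\bf 0}$ for all $l$, so $\bA^\top\bv_{dr}={\bf 0}$ and $\bv_{dr}^\top\bSigma_y\bv_{dr}=\bv_{dr}^\top\widetilde\bSigma\bv_{dr}=\sum_l\epsilon_l^2\,\bw_{lr}^{*\top}\widetilde\bSigma_l\bw^*_{lr}$. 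Writing $\bW_l^*=\bSigma_l^{-1}\bDelta_l\bC_l$ with $\bC_l$ the column‑scaling‑times‑orthogonal factor identified in the proof of Lemma~\ref{l:theoreticalResults}, and using the Woodbury identity to obtain $\bDelta_l^\top\bSigma_l^{-1}\widetilde\bSigma_l\bSigma_l^{-1}\bDelta_l=\bLambda_l(\bI+\bLambda_l)^{-2}\preceq\tfrac14\bI$ (each eigenvalue is $\lambda/(1+\lambda)^2\le\tfrac14$), the variance collapses to a bounded quadratic form in the entries of $\bW^*=\bG^{-1}\widetilde\bDelta$, which I would finish bounding using $|(\bG^{-1})_{jj'}|\le\sqrt{(\bG^{-1})_{jj}(\bG^{-1})_{j'j'}}\le g$ together with $\|\widetilde\bDelta_{\cdot r}\|_\infty\le\delta$. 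For the conditional means I would argue analogously: $|\bv_{dr}^\top\widetilde\bmu_k|$ is bounded using $\|\widetilde\bmu_k\|_\infty\le\tau$ (immediate from $\tau^2=\max_j(\sigma_j^2+\max_k\mu_{k,j}^2)$) and the same structural control of $\bv_{dr}=\bP_d\bG^{-1}\widetilde\bDelta_{\cdot r}$ by $\delta$ and $g$, relying on the fact that $\widetilde\bDelta$ and the class means $\{\bmu_{dk}\}$ are built from the same orthogonal contrasts.

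The delicate point, and the main obstacle, is to make the variance and mean bounds genuinely dimension‑free and sparsity‑free, i.e. of order $\tau\delta g$ rather than carrying a spurious factor of $\sqrt{s_d}$, $\sqrt{p_d}$ or $\|\bG^{-1}\|_{\mathrm{op}}$: the crude estimate $\bv_{dr}^\top\bSigma_y\bv_{dr}\le\|\bSigma_y\|_{\mathrm{op}}\|\bv_{dr}\|_2^2$ and the H\"older splits $\|\bv_{dr}\|_1\|\widetilde\bmu_k\|_\infty$ all introduce such factors. The resolution is precisely the two structural inputs above — removing the (possibly large) common‑factor contribution to $\bSigma_y$ via $\bA_l^\top\bSigma_l^{-1}\bDelta_l={\bf 0}$, and substituting the explicit form $\bW_l^*=\bSigma_l^{-1}\bDelta_l\bC_l$ with the bounded scaling factor from Lemma~\ref{l:theoreticalResults} — so that the variance reduces to $\widetilde\bSigma_l$‑weighted inner products that are controlled purely by $\delta$ and $g$. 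Checking that the remaining numerical constants combine to at most $2$ (rather than a generic constant) is the final, and most bookkeeping‑heavy, part of the argument.
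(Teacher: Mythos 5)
Your framework (write each entry of $\bU_d$ as $\bv_{dr}^\top\widetilde\bx_i$, note that it is a centered Gaussian mixture, and invoke the mixture-to-subgaussian bound with parameter $\bigl(\bv_{dr}^\top\bSigma_y\bv_{dr}+\max_k(\bv_{dr}^\top\widetilde\bmu_k)^2\bigr)^{1/2}$) is sound, and your reduction of the variance term via $\bA^\top\bv_{dr}=\bf0$ and $\bDelta_l^\top\bSigma_l^{-1}\widetilde\bSigma_l\bSigma_l^{-1}\bDelta_l=\bLambda_l(\bI+\bLambda_l)^{-2}\preceq\tfrac14\bI$ is algebraically correct. But the proof is not complete where you yourself locate the difficulty: you never actually bound $\bv_{dr}^\top\bSigma_y\bv_{dr}$ and $\max_k(\bv_{dr}^\top\widetilde\bmu_k)^2$ by $O(\tau^2\delta^2g^2)$. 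The route through $\bW_l^*=\bSigma_l^{-1}\bDelta_l\bC_l$ requires a quantitative bound on the column-scaling factor $\bC_l$, and Lemma~\ref{l:theoreticalResults} only asserts existence of such a scaling, not its size in terms of $\tau$, $\delta$, $g$; the alternative route through entrywise bounds $|(\bG^{-1})_{jj'}|\le g$ and $\|\widetilde\bDelta\|_{\infty,2}\le\delta$ reintroduces sums over coordinates, i.e.\ exactly the dimension/sparsity factors you say must be avoided. As written, the "bookkeeping" you defer is the entire content of the lemma, so this is a genuine gap rather than a routine constant check.

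For comparison, the paper avoids both obstacles by never using the factor structure in this lemma. It writes the $i$-th row of $\bU_d$ as $\bW^{*\top}\bv_i$ with $\bW^*=\bG^{-1}\widetilde\bDelta$ and $\bv_i$ the $i$-th row of $\bpm\bX_1&-\tfrac{1-\alpha}{D-1}\bX_2&\cdots&-\tfrac{1-\alpha}{D-1}\bX_D\epm$, and splits it into two independent pieces: a class-mean part $\widetilde\bDelta^\top\bG^{-1}\bP\sum_k\bmu_k\Ind\{y_i=k\}$, which is a bounded random vector, and a Gaussian part $\widetilde\bDelta^\top\bG^{-1}\bar\bSigma_y^{1/2}\be_i$. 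Each piece is controlled directly at the matrix level via $\|\bA\bB\|_{\infty,2}\le\|\bA\|_{\infty}\|\bB\|_{\infty,2}$ together with $\|\widetilde\bDelta\|_{\infty,2}\le\delta$, $\|\bG^{-1}\|_\infty\le g$, $\|\bP\bM\|_\infty\le\tau$ and $\|\bSigma_y\|_\infty\le\tau^2$, giving subgaussian parameter $\tau\delta g$ for each; independence then yields the stated parameter $2\tau\delta g$. If you want to salvage your factor-model route, you would need to supply the missing ingredient — an explicit bound on the scaling in $\bW_l^*\propto\bSigma_l^{-1}\bDelta_l$ (equivalently, on $\bw_{lr}^{*\top}\widetilde\bSigma_l\bw_{lr}^*$) expressed through $\delta$ and $g$ — or adopt the paper's mean/noise decomposition, which delivers the dimension-free bound without it.
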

\begin{proof}
Without loss of generality, let $d=1$ and $\bV=\bpm \bX_1 &- \frac{1-\alpha}{D-1}\bX_2&\cdots&- \frac{1-\alpha}{D-1}\bX_D\epm \in \R^{n \times \sum_{i=1}^Dp_i}$ so that $\bU_1 = \bU =\bV\bW^*$.
Let $\bv_i $ 
be the $i^{th}$ row of $\bV$. Under Assumptions~\ref{a:p}--\ref{a:norm}, $\bv_i|\by_i=k$
follows normal distribution with
$$
\E \left[\bv_i \Big | \by_i = k \right]= \bP\bmu_k,\Cov\left[ \bv_i \Big | \by_i = k\right] = \bP\bSigma_{y}\bP=\bar\bSigma_y,$$where  $\bP=\diag(\bI_{p_1},- \frac{1-\alpha}{D-1}\bI_{p_2},\dots,- \frac{1-\alpha}{D-1}\bI_{p_D}).$ Therefore, 
\begin{align*}
   \bW^{*\top}\bv_i&=\widetilde\bDelta^\top \bG^{-1}\bv_i\\
   &=\widetilde\bDelta^\top \bG^{-1}(\bP\sum_{k=1}^K \bmu_{k}\Ind\{y_i=k\} + \bar\bSigma_{y}^{1/2}\be_i)\\
   &=\widetilde\bDelta^\top \bG^{-1}\bP\sum_{k=1}^K \bmu_{k}\Ind\{y_i=k\} + \widetilde\bDelta^\top \bG^{-1}\bar\bSigma_{y}^{1/2}\be_i \\
   &:= \bv_{1i} + \bv_{2i},
\end{align*}
where $\be_i\sim \Ncal(\bI)$ and $\bv_{1i}$, $\bv_{2i}$ are independent random vectors.

Let $\bM = (\bmu_1~ \bmu_2~ \cdots ~\bmu_K) \in\mathbb{R}^{\sum_{i=1}^Dp_i\times K}$. Since $\|\bG^{-1}\|_\infty\leq g$, 
\begin{align*}
    \|\bv_{1i}\|_\infty&=\|\widetilde\bDelta^\top \bG^{-1}\bP\sum_{k=1}^K \bmu_{k}\Ind\{y_i=k\} \|_\infty\leq\|\widetilde\bDelta^\top \bG^{-1}\bP\bM\|_{\infty,2}\\
    &\leq \|\widetilde\bDelta\|_{\infty,2}\| \bG^{-1}\|_\infty\|\bP\bM\|_{\infty}\leq {\delta \tau g},
\end{align*} 
where the second inequality is due to $\|\bA\bB\|_{\infty,2}\leq \|\bA\|_{\infty}\|\bB\|_{\infty,2}$ \citep[Lemma~8]{Obozinski:2011ho}. Hence all elements of $\bv_{1i}$  are subgaussian with parameter at most ${\delta \tau g}$.

On the other hand, $\bv_{2i}$ is a normally distributed vector with mean $\bf0$ and covariance $\Cov(\bv_{2i})=\widetilde\bDelta^\top \bG^{-1}\bar\bSigma_{y}\bG^{-1}\widetilde\bDelta.$ Since
\begin{align*}
    \|\Cov(\bv_{2i})\|_\infty &= \| \widetilde\bDelta^\top \bG^{-1}\bar\bSigma_{y}\bG^{-1}\widetilde\bDelta\|_\infty\\
    &\leq\|\widetilde\bDelta\|_\infty^2\|\bG^{-1}\|_\infty^2\|\bSigma_y\|_\infty\|\bP\|_\infty^2\\
    &\leq\|\widetilde\bDelta\|_{\infty,2}^2\|\bG^{-1}\|_\infty^2\|\bSigma_y\|_\infty\leq{\delta^2\tau^2 g^2},
\end{align*}
all elements of $\bv_{2i}$ are also subgaussian with parameter ${\delta\tau g}$. 

Combining the results for $\bv_{1i}$ and $\bv_{2i}$, 
\begin{align*}
\E(e^{\lambda u_{ij}}) = \E\{e^{\lambda(v_{1ij} + v_{2ij})}\} = \E(e^{\lambda v_{1ij}})\E(e^{\lambda v_{2ij}})\leq e^{\lambda^2\{{2\tau\delta g}\}/2}.
\end{align*}
This implies that all elements of $\bU_1$ are subgaussian with  parameter ${2\tau\delta g}$.
\end{proof}

\begin{lemma}\label{l:crossProdIndBound}
Let $(\bx_i, \by_i)\in\R^{p}\times \R^{q}$ be independent identically distributed pairs of mean zero random vectors with 
$\E(\bx_i\by_i^\top)=\bSigma_{xy}$, and let all elements of $\bx_i$ and $\by_i$ be sub-gaussian with parameters $\tau_1$ and $\tau_2$, respectively. Let $\bX = [\bx_1 \dots \bx_n]^{\top}$, $\bY = [\by_1 \dots \by_n]^{\top}$
If $\log(pq) = o(n)$, then with probability at least $1-\eta$ for some constant $C>0$ 
$$
\left\|\frac1n\bX^{\top}\bY - \bSigma_{xy}\right\|_{\infty}\leq C\tau_1\tau_2\sqrt{\frac{\log (pq/\eta)}{n}}.
$$
\end{lemma}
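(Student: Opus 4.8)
The plan is to prove Lemma~\ref{l:crossProdIndBound} by the textbook recipe for deviation bounds on an empirical cross-moment matrix: control each of the $pq$ entries of $\frac1n\bX^\top\bY-\bSigma_{xy}$ by a Bernstein-type tail bound, then take a union bound. Fix $j\in\{1,\dots,p\}$, $k\in\{1,\dots,q\}$ and set $Z_i=x_{ij}y_{ik}-\E(x_{ij}y_{ik})$, so the $(j,k)$ entry of $\frac1n\bX^\top\bY-\bSigma_{xy}$ is $\frac1n\sum_{i=1}^n Z_i$, a normalized sum of i.i.d.\ mean-zero variables. The first substantive step is to show that each $Z_i$ is sub-exponential with parameter of order $\tau_1\tau_2$. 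Since within a pair $(\bx_i,\by_i)$ the coordinates $x_{ij}$ and $y_{ik}$ may be dependent, I cannot factor a moment generating function; instead I would use Cauchy--Schwarz on moments, $\E|x_{ij}y_{ik}|^m\le(\E x_{ij}^{2m})^{1/2}(\E y_{ik}^{2m})^{1/2}$, and bound each factor using the moment characterization of sub-Gaussianity, $(\E x_{ij}^{2m})^{1/2}\le (C\tau_1)^m m^{m/2}$ and likewise for $y_{ik}$. This yields $\E|x_{ij}y_{ik}|^m\le (C'\tau_1\tau_2)^m m^{m}$, which by Stirling is $m!$-type growth, i.e.\ sub-exponential moment growth with scale $K\asymp\tau_1\tau_2$; centering changes $K$ only by a constant, so $Z_i$ is sub-exponential with parameter $c\,\tau_1\tau_2$.

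Next I would invoke Bernstein's inequality for i.i.d.\ centered sub-exponential variables: there is an absolute constant $c>0$ such that
\[
\P\left(\left|\tfrac1n\sum_{i=1}^n Z_i\right|>t\right)\le 2\exp\!\left(-c\,n\min\!\left\{\tfrac{t^2}{(\tau_1\tau_2)^2},\,\tfrac{t}{\tau_1\tau_2}\right\}\right).
\]
Applying a union bound over all $pq$ pairs $(j,k)$ and choosing $t=C\tau_1\tau_2\sqrt{\log(pq/\eta)/n}$ for a suitably large constant $C$, the hypothesis $\log(pq)=o(n)$ (more precisely $\log(pq/\eta)=o(n)$) guarantees $t\le\tau_1\tau_2$ for all large $n$, so the minimum in the exponent is realized by the quadratic term $t^2/(\tau_1\tau_2)^2=C^2\log(pq/\eta)/n$. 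The union bound then gives a failure probability of at most $2\,pq\exp(-c\,C^2\log(pq/\eta))\le\eta$ once $C$ is chosen large enough, which is exactly the claimed bound on $\|\frac1n\bX^\top\bY-\bSigma_{xy}\|_\infty$.

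The one point requiring care is the sub-exponential moment bound for the product $x_{ij}y_{ik}$: because the two factors are jointly distributed and only independence across the sample index $i$ is assumed, the standard "product of sub-Gaussians is sub-exponential" fact must be obtained through the Cauchy--Schwarz moment argument rather than through a product of MGFs. Apart from that, everything is routine. It is also worth noting where the scaling assumption enters: $\log(pq)=o(n)$ is precisely what keeps the analysis in the Gaussian regime of Bernstein's inequality, so the resulting rate is $\sqrt{\log(pq/\eta)/n}$ rather than the slower $\log(pq/\eta)/n$ that would dominate in the heavy-tail regime.
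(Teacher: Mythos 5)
Your proposal is correct and follows essentially the same route as the paper: show each entry $x_{ij}y_{ik}-\sigma_{jk}$ is sub-exponential with parameter of order $\tau_1\tau_2$, apply Bernstein's inequality for sums of i.i.d.\ sub-exponential variables, union bound over the $pq$ entries, and pick $t \asymp \tau_1\tau_2\sqrt{\log(pq/\eta)/n}$ so that $\log(pq)=o(n)$ keeps you in the quadratic regime. The only difference is cosmetic: you derive the sub-exponentiality of the (possibly dependent) product via the Cauchy--Schwarz moment argument, whereas the paper simply cites the corresponding lemma of Vershynin (2010) before invoking the same Bernstein bound.
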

\begin{proof}
Let $u_{ikj}=x_{ji}y_{jk}$, then $u_{ikj}$ is sub-exponential with parameter $2\tau_1\tau_2$ \citep[Lemma~5.14]{Vershynin:2010vk}. Let $\sigma_{ik}$ be elements of $\bSigma_{xy}$, then $u_{ikj}-\sigma_{ik}$ are sub-exponential with parameter $4\tau_1\tau_2$, and using Bernstein's bound \citep[Proposition~5.16]{Vershynin:2010vk}
 $$ \pr\Big(\Big\|\frac1n\sum_{j=1}^nu_{ikj}- \sigma_{ik}\Big\|_{\infty} \geq \varepsilon\Big)\leq 2\exp\Big\{-C\min(\frac{\varepsilon^2}{16\tau_1^2\tau_2^2}, \frac{\varepsilon}{4\tau_1\tau_2})n\Big\}$$
for some $C>0$. By union bound
$$ \pr(\|\bX^{\top}\bY/n - \bSigma\|_{\infty} \geq \varepsilon) \leq pq\pr\Big(\Big\|\frac1n\sum_{j=1}^nu_{ikj}- \sigma_{ik}\Big\|_{\infty} \geq \varepsilon\Big).$$
Setting $\varepsilon = C_1\tau_1\tau_2\sqrt{\frac{\log (pq/\eta)}{n}}$ and using $\log(pq)=o(n)$ completes the proof.
\end{proof}

\begin{lemma}\label{l:REgroup} Let $\bG^{1/2}$ satisfy $\textrm{RE}(S,  \blambda)$ with $\gamma = \gamma(S,\blambda,\bG^{1/2})$, and let $\lambda_{min}:=\min_{d=1,\dots,D}\lambda_d.$ If $s_d \leq \gamma\lambda^2_{min}(32D\lambda^2_d\|\bG - \bX'^{\top}\bX'\|_{\infty})^{-1}$ , then $\bX'$ satisfies $RE(S, \blambda)$ and 
$$
0 < \gamma(S,\blambda, \bX')\leq 2\gamma(S,\blambda, \bG^{1/2}).
$$
\end{lemma}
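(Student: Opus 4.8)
The plan is to show that the restricted eigenvalue constant transfers from $\bG^{1/2}$ to $\bX'$ by controlling the quadratic form $\|\bX'\bA\|_F^2$ through the population version $\|\bG^{1/2}\bA\|_F^2 = \langle \bA, \bG\bA\rangle$ plus an error term governed by $\|\bG - \bX'^\top\bX'\|_\infty$. Concretely, for any $\bA \in \mathcal{C}(S,\blambda)$, write
\begin{equation*}
\|\bX'\bA\|_F^2 = \langle \bA, \bX'^\top\bX'\bA\rangle = \langle \bA, \bG\bA\rangle + \langle \bA, (\bX'^\top\bX' - \bG)\bA\rangle \geq \gamma\|\bA\|_F^2 - |\langle \bA, (\bX'^\top\bX' - \bG)\bA\rangle|,
\end{equation*}
using the assumed $\textrm{RE}(S,\blambda)$ property of $\bG^{1/2}$. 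The task then reduces to bounding the perturbation term by $\tfrac{\gamma}{2}\|\bA\|_F^2$ on the cone, which would yield $\|\bX'\bA\|_F^2 \geq \tfrac{\gamma}{2}\|\bA\|_F^2$, i.e. $\gamma(S,\blambda,\bX') \geq \gamma/2 > 0$; the stated inequality $\gamma(S,\blambda,\bX') \leq 2\gamma$ is the symmetric half, obtained the same way after noting $\|\bX'\bA\|_F^2 \leq \langle\bA,\bG\bA\rangle + |\langle \bA, (\bX'^\top\bX'-\bG)\bA\rangle|$ together with the upper bound $\langle\bA,\bG\bA\rangle \le \|\bG^{1/2}\|_{\mathrm{op}}^2\|\bA\|_F^2$ absorbed into the constant, or more carefully by a matching RE-type upper bound.

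First I would bound the perturbation via the elementwise infinity norm: $|\langle \bA, (\bX'^\top\bX' - \bG)\bA\rangle| \leq \|\bX'^\top\bX' - \bG\|_\infty \|\bA\|_{1,2}^2$ after a suitable grouping of the $(K-1)$ columns (using $|\langle \bA, \bN\bA\rangle| \le \|\bN\|_\infty \big(\sum_j \|\ba_j\|_1\big)^2$ or, column by column, $\le \|\bN\|_\infty\sum_{r}\|\bA_{\cdot r}\|_1^2$ and then Cauchy--Schwarz across the $(K-1)$ columns; one extra factor of $K-1$ may appear and gets absorbed). Next, on the cone $\mathcal{C}(S,\blambda)$ the block structure of the penalty gives $\sum_d \lambda_d\|\bA_{d,S_d^c}\|_{1,2} \le 3\sum_d \lambda_d\|\bA_{d,S_d}\|_{1,2}$, hence $\sum_d \lambda_d\|\bA_{d}\|_{1,2} \le 4\sum_d \lambda_d\|\bA_{d,S_d}\|_{1,2}$. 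Bounding $\|\bA_{d,S_d}\|_{1,2} \le \sqrt{s_d}\|\bA_{d,S_d}\|_F \le \sqrt{s_d}\|\bA\|_F$ and using $\lambda_d \ge \lambda_{\min}$ on the left, one gets $\lambda_{\min}\sum_d\|\bA_d\|_{1,2} \le 4\big(\sum_d \lambda_d\sqrt{s_d}\big)\|\bA\|_F$, so that $\|\bA\|_{1,2}^2 = \big(\sum_d\|\bA_d\|_{1,2}\big)^2 \le 16\lambda_{\min}^{-2}\big(\sum_d\lambda_d\sqrt{s_d}\big)^2\|\bA\|_F^2$. Combining with Cauchy--Schwarz $\big(\sum_d \lambda_d\sqrt{s_d}\big)^2 \le D\sum_d \lambda_d^2 s_d$ and the hypothesis $s_d \le \gamma\lambda_{\min}^2(32 D\lambda_d^2\|\bG-\bX'^\top\bX'\|_\infty)^{-1}$, which rearranges to $32D\lambda_d^2 s_d\|\bG-\bX'^\top\bX'\|_\infty \le \gamma\lambda_{\min}^2$, summing over $d$ yields $16 D\|\bG-\bX'^\top\bX'\|_\infty\lambda_{\min}^{-2}\sum_d\lambda_d^2 s_d \le \gamma/2$, i.e. exactly $\|\bX'^\top\bX' - \bG\|_\infty\|\bA\|_{1,2}^2 \le \tfrac{\gamma}{2}\|\bA\|_F^2$. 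This closes the lower bound; the upper bound on $\gamma(S,\blambda,\bX')$ follows the same chain.

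The main obstacle I anticipate is the bookkeeping in the matrix (as opposed to vector) setting — getting the dependence on $K-1$ right when passing from $\langle\bA,\bN\bA\rangle$ to $\|\bN\|_\infty\|\bA\|_{1,2}^2$, since $\|\bA\|_{1,2}$ is the sum of row $\ell_2$-norms while the natural bound on a bilinear form produces sums of $\ell_1$-norms of columns; reconciling these requires either a careful application of $\|\bA\bB\|_{\infty,2}\le\|\bA\|_\infty\|\bB\|_{\infty,2}$ (as used elsewhere in the paper, e.g. in the proof of Lemma~\ref{l:vwbound}) or an explicit per-column argument, and the factor $(K-1)$ it generates must be tracked so that it is consistent with the constants appearing in the hypothesis $s_d \le \gamma\lambda_{\min}^2(32D\lambda_d^2\|\bG-\bX'^\top\bX'\|_\infty)^{-1}$. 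A secondary point is that the statement asserts the two-sided bound $0 < \gamma(S,\blambda,\bX')\le 2\gamma$, so one must also record that $\gamma(S,\blambda,\bX')$ is well-defined and finite; this is immediate since $\|\bX'\bA\|_F^2 \le \|\bX'\|_{\mathrm{op}}^2\|\bA\|_F^2$ always, but to get the clean constant $2\gamma$ one should run the perturbation bound in the other direction using a companion restricted upper-eigenvalue bound for $\bG^{1/2}$ (or simply note the factor-$2$ slack is what the hypothesis on $s_d$ buys in both directions).
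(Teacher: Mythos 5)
Your strategy is the same as the paper's: expand $\|\bX'\bA\|_F^2=\Tr(\bA^\top\bG\bA)+\Tr\{\bA^\top(\bX'^\top\bX'-\bG)\bA\}$, bound the perturbation term by $\|\bG-\bX'^\top\bX'\|_\infty\|\bA\|_{1,2}^2$, and use the cone condition to compare $\|\bA\|_{1,2}$ with $\|\bA\|_F$. However, your constant bookkeeping does not close. In the cone step you bound $\|\bA_{d,S_d}\|_F\le\|\bA\|_F$ termwise and then apply $(\sum_d\lambda_d\sqrt{s_d})^2\le D\sum_d\lambda_d^2 s_d$, which gives $\|\bA\|_{1,2}^2\le 16D\lambda_{\min}^{-2}\big(\sum_d\lambda_d^2 s_d\big)\|\bA\|_F^2$, i.e.\ an extra factor of $D$. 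The hypothesis gives, per view, $\lambda_d^2 s_d\|\bG-\bX'^\top\bX'\|_\infty\le\gamma\lambda_{\min}^2/(32D)$, so summing over the $D$ views yields $16\|\bG-\bX'^\top\bX'\|_\infty\lambda_{\min}^{-2}\sum_d\lambda_d^2 s_d\le\gamma/2$ --- \emph{not} the display you wrote, $16D\|\bG-\bX'^\top\bX'\|_\infty\lambda_{\min}^{-2}\sum_d\lambda_d^2 s_d\le\gamma/2$, which is off by exactly that factor of $D$. With your version of the cone bound the perturbation is only controlled by $(D\gamma/2)\|\bA\|_F^2$, which already for $D=2$ wipes out the whole $\gamma\|\bA\|_F^2$ term. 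The repair is what the paper does: keep $\|\bA_{d,S_d}\|_F$ inside the Cauchy--Schwarz across views, $\sum_d\lambda_d\sqrt{s_d}\,\|\bA_{d,S_d}\|_F\le\big(\sum_d\lambda_d^2 s_d\big)^{1/2}\big(\sum_d\|\bA_{d,S_d}\|_F^2\big)^{1/2}=\big(\sum_d\lambda_d^2 s_d\big)^{1/2}\|\bA_S\|_F\le\big(\sum_d\lambda_d^2 s_d\big)^{1/2}\|\bA\|_F$, giving $\|\bA\|_{1,2}^2\le 16\lambda_{\min}^{-2}\big(\sum_d\lambda_d^2 s_d\big)\|\bA\|_F^2$ with no $D$, after which the stated condition on $s_d$ delivers the bound $\|\bX'\bA\|_F^2\ge(\gamma/2)\|\bA\|_F^2$ exactly.

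Two smaller points. Your worry about a $(K-1)$ factor in the perturbation bound, and the suggestion that it ``gets absorbed,'' should be dropped: the constant $32$ in the hypothesis is fixed, so nothing can be absorbed, but fortunately no such factor arises. Writing $\ba_i$ for the rows of $\bA$, one has $\Tr(\bA^\top\bN\bA)=\sum_{i,j}N_{ij}\langle\ba_i,\ba_j\rangle$, hence $|\Tr(\bA^\top\bN\bA)|\le\|\bN\|_\infty\sum_{i,j}\|\ba_i\|_2\|\ba_j\|_2=\|\bN\|_\infty\|\bA\|_{1,2}^2$, which is precisely the inequality the paper uses. Also, the paper's own proof only establishes the lower bound $\|\bX'\bA\|_F^2\ge(\gamma/2)\|\bA\|_F^2$ on the cone; the ``$\le 2\gamma$'' in the statement is used downstream only through $1/\gamma(S,\blambda,\bX')\le 2/\gamma$, so your proposed matching restricted upper-eigenvalue bound for $\bG^{1/2}$ is not needed (and an operator-norm bound would not produce the constant $2\gamma$ in any case).
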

\begin{proof}[Proof of Lemma~\ref{l:REgroup}] Since $\bG^{1/2}$ satisfies $\textrm{RE}(S,  \blambda)$, for all $\bA\in \mathcal{C}(S,\blambda)$ 
\begin{align*}
   \Tr(\bA^{\top}\bX'^{\top}\bX'\bA) = \Tr(\bA^{\top}\bG\bA) + \Tr\{\bA^{\top}(\bG-\bX'^{\top}\bX')\bA\} \geq {\gamma}\|\bA\|^2_F - \|\bA\|_{1,2}^2\|\bG-\bX'^{\top}\bX'\|_{\infty}.
\end{align*}
Since $\bA\in \mathcal{C}(S,\blambda)$, we have
\begin{align*}
\|\bA\|_{1,2}&\leq\sum_{d=1}^D\frac{\lambda_d}{\lambda_{min}} (\|\bA_{d,S_d}\|_{1,2} + \|\bA_{d,S_d^c}\|_{1,2})\\
    &\leq4\sum_{d=1}^D\frac{\lambda_d}{\lambda_{min}} \|\bA_{d,S_d}\|_{1,2} \leq 4\sum_{d=1}^D\sqrt{s_d}\frac{\lambda_d}{\lambda_{min}}\|\bA_{d,S_d}\|_{F} \\&
    \leq 4\sqrt{\frac{\sum_{d=1}^D\lambda^2_ds_d}{\lambda^2_{min}}} \|\bA_{S}\|_F \leq 4\sqrt{\frac{\sum_{d=1}^D\lambda^2_ds_d}{\lambda^2_{min}}} \|\bA\|_F, 
\end{align*}

Therefore
\begin{align*}
\Tr(\bA^{\top}\bX'^{\top}\bX'\bA) 
 &\geq {\gamma}\|\bA\|^2_F - 16\frac{{\sum_{d=1}^D\lambda^2_ds_d}}{\lambda^2_{min}} \|\bA\|_F^2 \|\bG-\bX'^{\top}\bX'\|_{\infty} \\
 &\geq {\gamma}\|\bA\|^2_F - \frac\gamma{2}\|\bA\|^2_F = \frac\gamma{2}\|\bA\|_F^2,
\end{align*}
where  the last inequality holds because of the condition on $s_d$. 
\end{proof}

\section{Regression formulation via augmented data approach}\label{sec:FullAugObj}
In this section, we reformulate~\eqref{eq:colP} as a regression problem using augmented data approach. Let $
\bW = ( \bW_1^\top,\dots, \bW_D^\top)^\top,$
\begin{align*}
\bX' &= \bpm
\sqrt{\alpha}\bX_1 & \bf{0}&\bf{0}&\dots&\bf{0}\\
\bf{0} & \sqrt{\alpha}\bX_2&\bf{0}&\dots&\bf{0}\\
&&\vdots\\
\bf{0} & \bf{0}&\bf{0}&\dots&\sqrt{\alpha}\bX_D\\
\sqrt{\frac{1-\alpha}{D-1}}\bX_1 & -\sqrt{\frac{1-\alpha}{D-1}}\bX_2&\bf{0}&\dots&\bf{0}\\
\sqrt{\frac{1-\alpha}{D-1}}\bX_1 & \bf{0}&-\sqrt{\frac{1-\alpha}{D-1}}\bX_3&\dots&\bf{0}\\
\sqrt{\frac{1-\alpha}{D-1}}\bX_1 & \bf{0}&\bf{0}&\dots&-\sqrt{\frac{1-\alpha}{D-1}}\bX_D\\
&&\vdots\\
\bf{0} & \bf{0}&\dots&\sqrt{\frac{1-\alpha}{D-1}}\bX_{D-1}&-\sqrt{\frac{1-\alpha}{D-1}}\bX_D\\
\epm/\sqrt{nD},\\
\bY'&=\sqrt{\frac{\alpha}{nD}}\left (
\begin{array}{rrrrrr}
\undermat{D}{\widetilde \bY^\top& \dots& \widetilde \bY^\top }& \bf{0}&\dots&\bf{0}  \\
\end{array}
\right )^\top.
\end{align*}
Then~\eqref{eq:colP} is equivalent to
\begin{equation*}
\minimize_W\Big\{2^{-1}\|\bY'-\bX'\bW\|^2_F  +\sum_{d=1}^D\lambda_d\Pen(\bW_d)\Big\}.
\end{equation*}

\section{Simulation studies}\label{sec:simu}

We compare the performance of different methods from Section~\ref{sec:COAD} and also consider \textsf{CVR}: Canonical Variate Regression by \citet{Luo:2016tb} as implemented in the corresponding R package \citep{CVR}. For JACA and ssJACA, we set $\alpha = 0.5$ since we aim to improve both classification and association analysis results.

\subsection{Data generation}\label{s:datageneration}
 
We generate the data using factor model~\eqref{eq:factor2}. 
Specifically, given $\widetilde \bSigma_d$, $d=1,\dots,D$, we generate the factor loadings in \eqref{eq:factor2} as follows
\begin{enumerate}
    \item Generate row-sparse matrix $\bB_d\in \R^{p_d \times K-1}$ with $s=10$ non-zero rows. Draw nonzero elements from uniform distribution on $[-2,-1] \cup [1,2]$. Given $c_{d}>0$, 
    rotate and scale $\bB_d$ so that $\bB_d^{\top}\widetilde \bSigma_d \bB_d = \diag(c_{d}^2)$,
    and set $\bDelta_d = \widetilde \bSigma_d \bB_d$. According to Theorem~\ref{t:sigma12}, this sets $K-1$ canonical correlations $\rho_k$ between datasets $d$ and $l$ to be equal to $\rho_k = (c_{d}c_l)/\sqrt{(1+c_d^2)(1+c_l^2)}.$
    \item If $q\neq 0$,  generate $\bM_d\in \R^{p_d\times q}$ with elements from $N(0,1)$, orthogonalize $\bM_d$ with respect to $\bDelta_d$ as
    $
    \bM_d = (\bI -\bP_{\bDelta_d})\bM_d,
    $
    where $\bP_{\bDelta_d}$ is the projection matrix onto column space of $\bDelta_d$.
    For canonical correlation $\rho_k \in (0,1)$, set $c_k = \sqrt{\rho_k/(1-\rho_k)}$, and
   rotate and scale $\bM_d$ so that $\bM_d^{\top}\widetilde \bSigma_d \bM_d = \diag(c_k^2)$. Set $\bA_d = \widetilde \bSigma_d \bM_d $.
\end{enumerate}

We further draw $n$ independent $y$ with $P(y=k) =\pi_k$, $n$ independent $\bu_q$ from $N(0, \bI_q)$, and $n$ independent $\be_1,\dots, \be_d$, each from $N(0, \bI_{p_d})$. We get $n$ replicas $\bX_1, \dots, \bX_d$ according to~\eqref{eq:factor2} with given $\bDelta_d$, $\bA_d$ and $\bmu_d = 0$, $d=1,\dots,D$. By construction, the population discriminant vectors are proportional to $\bB_d$ with corresponding row-sparsity pattern.

\subsection{Evaluation criteria}\label{sec:eval}

We compare the methods in terms of misclassification rate,  strength of association between the views, estimation consistency and variable selection. To compare the classification accuracy, we consider two prediction approaches for each method: prediction based on one view alone out of ($\bX_1,\dots,\bX_d$) using the corresponding subset of canonical vectors, and prediction based on the full concatenated dataset. All predictions are made by linear discriminant analysis model. The misclassification rate of each classifier is calculated as
$$ 
\frac1m\sum_{i=1}^m  \mathds{1}\left\{\text{label}(\bx_i)\neq \text{pred}(\bx_i)  \right\},
$$
where $\bx_is$ are $m$ new samples, $\text{label}(\bx_i)$ denotes the corresponding class membership and $\text{pred}(\bx_i)$ denotes the predicted class membership.

To evaluate the strength of found association between the views, we consider 
\begin{equation*}
\mbox{Sum correlation}(\bW_1,\dots, \bW_d) = \sum_{d=1}^{D-1}\sum_{l=d+1}^D\text{Cor}_{\bSigma}(\bW_d,\bW_l),
\end{equation*}
where 
$$\text{Cor}_{\bSigma}(\bW_d,\bW_l) =  \left(\frac{\Tr( \bW^\top_d\bSigma_{dl}\bW_l\bW_l^\top \bSigma_{dl} \bW_d) }{\sqrt{\Tr(\bW_d^\top  \bSigma_{d}\bW_d)^2}\sqrt{\Tr( \bW_l^\top \bSigma_{l}\bW_l)^2}}\right)^{\frac12},$$
$\bSigma_d$ is the marginal covariance matrix of view $d$, and $\bSigma_{dl}$ is the  marginal cross-covariance matrix of view $d$ and $l$ as in Section~\ref{sec:sCCALDA}.
This criterion is similar to sum correlation in \citet{Gross:2014ux}, however our definition uses population covariance matrices rather than the sample counterparts.

Let $\bTheta_d\propto \widetilde \bSigma_d^{-1}\bDelta_d\in \R^{p_d\times (K-1)}$ be the population matrix of class-specific canonical vectors for view $d$ with $\widetilde\bSigma_d$ as in~\eqref{eq:factor2}, and  $\bW_d$ be the estimated matrix. To evaluate estimation performance, we consider
$$ 
\text{Cor}_{\bSigma}(\bW_d,\bTheta_d)=\left(\frac{\Tr(\bW^\top_d\widetilde\bSigma_{d}\bTheta_d\bTheta_d^\top\widetilde\bSigma_{d}\bW_d)}{\sqrt{\Tr(\bW_d^\top \widetilde\bSigma_{d}\bW_d)^2}\sqrt{\Tr(\bTheta_d^\top\widetilde\bSigma_{d}\bTheta_d)^2}}\right)^{\frac12}
$$
as a measure of similarity between $\bW_d$ and $\bTheta_d$ 
with $\text{Cor}_{\bSigma}(\bW_d,\bTheta_d) = 1$ if and only if $\bW_d$ is equal to $\bTheta_d$ up to scaling and orthogonal transformation, and $\text{Cor}_{\bSigma}(\bW_d,\bTheta_d) = {\bf 0}$ if $\bW_d^{\top}\widetilde\bSigma_{d}\bTheta_d = 0$. We do not use the Frobenius norm considered in Theorem~\ref{t:fast_prob_p} since it is not invariant to column scaling and orthogonal transformation, and hence will make the evaluation positively biased towards our proposed method.

We use precision and recall to compare the methods in terms of variable selection. Let $\bA_d$ be the set of nonzero rows of $\bTheta_d$, and let $\widehat{\bA_d}$ be the set of nonzero rows in $\widehat \bW_d$. Let $\#\{\bA_d\}$ denote the cardinality of $\bA_d$. We define the precision and recall as
$$ \mbox{Precision}(\bW_d) = \frac{\#\{\bA_d\cap \widehat{\bA_d}\}}{\#\{\widehat{\bA_d}\}}\quad\mbox{and}\quad
\mbox{Recall}(\bW_d) = \frac{\#\{\bA_d\cap \widehat{\bA_d}\}}{\#\{\bA_d\}}. $$

\subsection{Two datasets, two groups}\label{sec:2groups}
We set $n_1=160$, $K = 2$, and generate $n_1$ independent  $y\in \{1,2\}$ with $\pi_1 = 0.4$, and pairs $(\bx_1, \bx_2)\in \R^{p_1}\times \R^{p_2}$ with $(p_1, p_2) \in  \{(100,100), (100, 500), (500,500)\}$ following Section~\ref{s:datageneration}. We additionally generate $n_2=100$ samples for ssJACA, and set corresponding class information as missing, so that $n_1 = 160$ samples have complete view and class information, whereas the remaining $n_2 = 100$ samples have information on both views but no class assignment. We train ssJACA on all $n_1+n_2=200$ samples and train other methods on $n_1=160$ complete samples.  We consider autocorrelation structures $\widetilde\bSigma_{1}=(0.8^{|i-j|})_{ij}$, $\widetilde \bSigma_{2}=(0.5^{|i-j|})_{ij}$, 
and set the value of canonical correlation due to shared class as $\rho = 0.8$ by letting $c_1 =c_2 = \sqrt{\rho/(1-\rho)}$ in generating $B_d$ in Section~\ref{s:datageneration}.  We consider the following cases for other shared factors:
\begin{description}
    \item[Case 1:] $q=0$, no shared factors except class $y$;
    \item[Case 2:] $q = 2$ with corresponding values for canonical correlations being $0.6$ and $0.5$;
    \item[Case 3:] $q = 2$ with corresponding values for canonical correlations being $0.9$ and $0.5$.
\end{description}
In Case 2, the leading canonical correlation between the views is due to shared class membership despite the presence of other shared factors, whereas in Case 3 the leading canonical correlation is due to factors independent from class membership. In order to evaluate the misclassification rates, we further generate $10,000$ new samples as test data, and consider 100 replications for each case. 

The results are presented in Tables~\ref{tab:simu_corcase1}--\ref{tab:simu2_corcase3} and Figure~\ref{fig:simu_prerec}. 
Overall, ssJACA gives the best classification and discriminant vectors estimation results. Compared to JACA, it has lower variability across the replications, confirming the advantage of incorporating samples with missing class information in the analysis. Since classification is not the only goal in our project, but rather finding the structures that are coherent across the views and also relevant to the subtypes, we use $\alpha$ to balance the classification and association tasks. Therefore, in some cases, ssJACA compromises the prediction accuracy to significantly improve the sum correlation.  ssJACA also performs the best in terms of sum correlation except for Case 3, where sum correlation for Sparse CCA is stronger. This is not surprising, since in Case 3 the largest canonical correlation is due to the factors independent from class membership. Therefore, the loadings estimated from sparse CCA are almost orthogonal to the true discriminant vectors $\bTheta_d$ as demonstrated by low values of $\Cor_{\bSigma}(\bW_d,\bTheta_d)$. This explanation is also supported by the poor classification results for Sparse CCA in Case 3. In Table~\ref{tab:simu_corcase3}, Sparse CCA achieves around $40\%$ misclassification rate, which is no better than random guessing. ssJACA also achieves the best trade off between precision and recall. ssJACA's precision is second best to SLDA\_joint, but SLDA\_joint has the lowest  recall. ssJACA's recall is comparable to JACA and worse than the recall of sparse CCA methods, but the latter has low values of precision. Finally, CVR is slightly better than Sparse CCA in Case 2 and worse than Sparse CCA in Case 3 in terms of misclassification rates. However, CVR performs worse than JACA and SLDA methods. We conjecture this is likely due to CVR using the logistic model for estimation rather than the factor model~\eqref{eq:factor2}.

\begin{table}[!t]
\center
 \caption{ \label{tab:simu_corcase1}Comparison of misclassification rates of Case $1$ over 100 replications when $D=2$, $K=2$. Standard errors are given in the brackets and the lowest values are highlighted in bold.}

\resizebox{\textwidth}{!}{
 \begin{tabular}{clccccccc}
  \hline
  \hline$(p_1, p_2)$ & Error rate ($\%$) & JACA & ssJACA &  SLDA  sep &SLDA joint&Sparse CCA&Sparse sCCA & CVR \\
  \midrule
(100,100) & $(W_1)$ & 4.496 (0.037) & {\bf 3.255} (0.026) & 4.809 (0.070) & 4.582 (0.044) & 6.376 (0.048) & 6.675 (0.043) & 6.434 (0.189) \\ 
   & $(W_2)$ & 3.168 (0.040) & {\bf 3.111} (0.033) & 3.533 (0.090) & 4.552 (0.127) & 4.069 (0.051) & 4.415 (0.052) & 7.860 (0.415) \\ 
   & $(W_1,W_2)$ & 0.594 (0.011) & {\bf 0.388} (0.007) & 0.729 (0.024) & 0.934 (0.030) & 1.708 (0.016) & 1.862 (0.019) & 2.197 (0.118) \\ 
  (100,500) & $(W_1)$ & 4.299 (0.036) & {\bf 3.363} (0.032) & 4.593 (0.075) & 4.418 (0.042) & 6.286 (0.045) & 6.612 (0.043) & 6.485 (0.220) \\ 
   & $(W_2)$ & {\bf 3.103} (0.050) & 3.729 (0.046) & 3.279 (0.041) & 4.519 (0.107) & 3.955 (0.061) & 6.445 (0.080) & 8.883 (0.418) \\ 
   & $(W_1,W_2)$ & 0.548 (0.013) & {\bf 0.542} (0.009) & 0.695 (0.028) & 0.879 (0.027) & 1.644 (0.019) & 2.283 (0.025) & 2.417 (0.118) \\ 
  (500,500) & $(W_1)$ & 4.513 (0.035) & {\bf 3.127} (0.028) & 4.498 (0.033) & 4.675 (0.041) & 6.044 (0.040) & 7.250 (0.060) & 6.634 (0.167) \\ 
   & $(W_2)$ & {\bf 3.537} (0.042) & {\bf 3.579} (0.045) & 3.764 (0.049) & 4.938 (0.121) & 4.546 (0.047) & 6.713 (0.076) & 8.732 (0.326) \\ 
   & $(W_1,W_2)$ & 0.629 (0.010) & {\bf 0.462} (0.010) & 0.670 (0.012) & 0.953 (0.024) & 1.447 (0.015) & 2.353 (0.029) & 2.408 (0.104) \\ 
   \hline
   \hline
\end{tabular}}
 \end{table} 
 
 \begin{table}[!t]
 \center
 \caption{ \label{tab:simu_corcase2}Comparison of misclassification rates of Case $2$ over 100 replications when $D=2$, $K=2$. Standard errors are given in the brackets and the lowest values are highlighted in bold.}

\resizebox{\textwidth}{!}{
 \begin{tabular}{clccccccc}
  \hline  \hline
  $(p_1, p_2)$ & Error rate ($\%$) & JACA & ssJACA & SLDA  sep &SLDA joint&Sparse CCA&Sparse sCCA & CVR \\
  \midrule
(100,100) & $(W_1)$ & 4.479 (0.038) & {\bf 3.256} (0.026) & 4.785 (0.064) & 4.571 (0.039) & 9.992 (0.798) & 6.920 (0.077) & 8.004 (0.418) \\ 
   & $(W_2)$ & 3.224 (0.041) & {\bf 3.142} (0.034) & 3.687 (0.127) & 4.915 (0.146) & 8.062 (0.873) & 4.675 (0.071) & 8.468 (0.364) \\ 
   & $(W_1,W_2)$ & 0.601 (0.011) & {\bf 0.397} (0.007) & 0.779 (0.041) & 0.972 (0.027) & 5.489 (0.895) & 2.010 (0.037) & 2.558 (0.114) \\ 
  (100,500) & $(W_1)$ & 4.289 (0.034) & {\bf 3.345} (0.034) & 4.616 (0.084) & 4.425 (0.044) & 9.096 (0.831) & 6.990 (0.079) & 8.126 (0.440) \\ 
   & $(W_2)$ & {\bf 3.088} (0.048) & 3.741 (0.046) & 3.264 (0.040) & 4.473 (0.094) & 6.552 (0.890) & 6.542 (0.081) & 9.760 (0.392) \\ 
   & $(W_1,W_2)$ & {\bf 0.543} (0.011) & 0.560 (0.009) & 0.687 (0.025) & 0.876 (0.024) & 4.476 (0.945) & 2.495 (0.045) & 3.029 (0.147) \\ 
  (500,500) & $(W_1)$ & 4.541 (0.040) & {\bf 3.131} (0.027) & 4.493 (0.040) & 4.675 (0.039) & 13.489 (1.364) & 7.307 (0.059) & 7.575 (0.252) \\ 
   & $(W_2)$ & {\bf 3.572} (0.042) & {\bf 3.581} (0.046) & 3.785 (0.043) & 4.975 (0.126) & 12.166 (1.433) & 6.804 (0.079) & 10.151 (0.416) \\ 
   & $(W_1,W_2)$ & 0.632 (0.011) & {\bf 0.465} (0.009) & 0.671 (0.015) & 0.956 (0.026) & 9.658 (1.557) & 2.391 (0.030) & 2.952 (0.135) \\ 
    \hline\hline
  \end{tabular}}
 \end{table}
\begin{table}[!t]
\center
 \caption{ \label{tab:simu_corcase3}Comparison of misclassification rates of Case $3$ over 100 replications when $D=2$, $K=2$. Standard errors are given in the brackets and the lowest values are highlighted in bold.}

\resizebox{\textwidth}{!}{
 \begin{tabular}{clccccccc}
  \hline\hline
 $(p_1, p_2)$ & Error rate ($\%$) & JACA & ssJACA & SLDA  sep &SLDA joint&Sparse CCA&Sparse sCCA & CVR \\
  \hline
(100,100) & $(W_1)$ & 4.428 (0.034) & {\bf 3.293} (0.030) & 4.647 (0.060) & 4.544 (0.040) & 40.189 (0.197) & 12.862 (1.020) & 10.062 (0.511) \\ 
   & $(W_2)$ & 3.295 (0.041) & {\bf 3.237} (0.035) & 3.606 (0.099) & 5.278 (0.154) & 40.446 (0.257) & 11.296 (1.036) & 9.638 (0.424) \\ 
   & $(W_1,W_2)$ & 0.609 (0.011) & {\bf 0.422} (0.010) & 0.746 (0.034) & 1.030 (0.030) & 40.306 (0.217) & 8.862 (1.148) & 2.538 (0.105) \\ 
  (100,500) & $(W_1)$ & 4.298 (0.034) & {\bf 3.406} (0.038) & 4.686 (0.085) & 4.441 (0.044) & 40.268 (0.188) & 10.256 (0.612) & 9.232 (0.448) \\ 
   & $(W_2)$ & {\bf 3.091} (0.049) & 3.821 (0.048) & 3.274 (0.041) & 4.456 (0.102) & 40.453 (0.236) & 8.911 (0.426) & 11.364 (0.454) \\ 
   & $(W_1,W_2)$ & {\bf 0.544} (0.013) & 0.608 (0.012) & 0.723 (0.024) & 0.872 (0.024) & 40.362 (0.201) & 5.908 (0.619) & 3.029 (0.119) \\ 
  (500,500) & $(W_1)$ & 4.537 (0.039) & {\bf 3.131} (0.027) & 4.471 (0.030) & 4.664 (0.039) & 40.583 (0.262) & 8.947 (0.340) & 9.216 (0.483) \\ 
   & $(W_2)$ & {\bf 3.577} (0.042) & {\bf 3.592} (0.046) & 3.799 (0.057) & 5.017 (0.125) & 40.566 (0.255) & 8.404 (0.303) & 10.944 (0.372) \\ 
   & $(W_1,W_2)$ & 0.626 (0.011) & {\bf 0.463} (0.009) & 0.657 (0.011) & 0.960 (0.024) & 40.575 (0.261) & 4.118 (0.346) & 3.067 (0.118) \\ 
    \hline\hline
 \end{tabular}}
 \end{table}
 
 \begin{table}[!t]
 \center
\caption{ \label{tab:simu_sumcor} Comparison of sum correlation over 100 replications when $D=2$, $K=2$. Standard errors are given in the brackets and the highest values are highlighted in bold.}

\resizebox{\textwidth}{!}{
 \begin{tabular}{ccccccccc}
  \hline\hline
 Case & $(p_1, p_2)$  & JACA & ssJACA& SLDA  sep &SLDA joint&Sparse CCA&Sparse sCCA & CVR \\
  \hline
Case 1 & (100,100) & 0.752 (0.001) & {\bf 0.768} (0.001) & 0.744 (0.001) & 0.732 (0.002) & 0.715 (0.001) & 0.708 (0.001) & 0.670 (0.006) \\ 
  &(100,500) & 0.750 (0.001) & {\bf 0.760} (0.001) & 0.743 (0.001) & 0.730 (0.001) & 0.717 (0.001) & 0.685 (0.001) & 0.656 (0.006) \\ 
 & (500,500) & 0.750 (0.001) & {\bf 0.761} (0.001) & 0.747 (0.001) & 0.729 (0.002) & 0.716 (0.001) & 0.677 (0.001) & 0.661 (0.005) \\ 
Case 2&(100,100) & 0.752 (0.001) & {\bf 0.768} (0.001) & 0.742 (0.002) & 0.728 (0.002) & 0.686 (0.006) & 0.704 (0.001) & 0.641 (0.009) \\ 
  &(100,500) & 0.751 (0.001) & {\bf 0.761} (0.001) & 0.743 (0.001) & 0.731 (0.001) & 0.681 (0.011) & 0.682 (0.001) & 0.623 (0.008) \\ 
  &(500,500) & 0.750 (0.001) & {\bf 0.761} (0.001) & 0.748 (0.001) & 0.729 (0.002) & 0.604 (0.021) & 0.676 (0.001) & 0.632 (0.006) \\  
Case 3 &(100,100) & 0.751 (0.001) & 0.768 (0.001) & 0.744 (0.002) & 0.724 (0.002) & {\bf 0.874} (0.000) & 0.715 (0.003) & 0.549 (0.017) \\ 
 & (100,500) & 0.751 (0.001) & 0.761 (0.001) & 0.742 (0.001) & 0.731 (0.001) & {\bf 0.861} (0.000) & 0.684 (0.002) & 0.553 (0.014) \\ 
 & (500,500) & 0.750 (0.001) & 0.761 (0.001) & 0.748 (0.001) & 0.729 (0.002) & {\bf 0.854} (0.000) & 0.675 (0.001) & 0.573 (0.012) \\  
   \hline\hline
\end{tabular}}
\end{table}

\begin{table}[!t]
\center
 \caption{\label{tab:simu2_corcase1}Comparison of estimation correlation of Case $1$ over 100 replications when $D=2$, $K=2$. Standard errors are given in the brackets and the highest values are highlighted in bold.}
\resizebox{\textwidth}{!}{
 \begin{tabular}{ccccccccc}
 \hline\hline
      $(p_1, p_2)$ & $\Cor_{\bSigma}$ & JACA & ssJACA & SLDA  sep &SLDA joint&Sparse CCA&Sparse sCCA & CVR \\
  \hline
(100,100) & $(\bW_1,\bTheta_1)$ & 0.839 (0.021) & {\bf 0.910} (0.015) & 0.823 (0.027) & 0.835 (0.010) & 0.752 (0.013) & 0.740 (0.007) & 0.756 (0.028) \\ 
   & $(\bW_2,\bTheta_2)$ & 0.907 (0.012) & {\bf 0.911} (0.011) & 0.889 (0.018) & 0.825 (0.008) & 0.841 (0.000) & 0.825 (0.000) & 0.704 (0.022) \\ 
  (100,500) & $(\bW_1,\bTheta_1)$ & 0.842 (0.016) & {\bf 0.906} (0.014) & 0.824 (0.032) & 0.833 (0.011) & 0.755 (0.015) & 0.742 (0.008) & 0.751 (0.028) \\ 
   & $(\bW_2,\bTheta_2)$ & {\bf 0.893} (0.013) & 0.876 (0.011) & 0.882 (0.018) & 0.816 (0.008) & 0.844 (0.000) & 0.734 (0.000) & 0.666 (0.026) \\ 
  (500,500) & $(\bW_1,\bTheta_1)$ & 0.839 (0.022) & {\bf 0.900} (0.019) & 0.839 (0.026) & 0.830 (0.014) & 0.758 (0.016) & 0.711 (0.003) & 0.745 (0.032) \\ 
   & $(\bW_2,\bTheta_2)$ & {\bf 0.897} (0.013) & 0.886 (0.011) & 0.883 (0.008) & 0.817 (0.008) & 0.836 (0.000) & 0.738 (0.000) & 0.674 (0.019) \\  
   \hline\hline
 \end{tabular}}
 \end{table}
 \begin{table}[!t]
 \center
  \caption{\label{tab:simu2_corcase2}Comparison of estimation correlation of Case $2$ over 100 replications when $D=2$, $K=2$. Standard errors are given in the brackets and the highest values are highlighted in bold.}
\resizebox{\textwidth}{!}{
 \begin{tabular}{ccccccccc}
  \hline\hline
      $(p_1, p_2)$ & $\Cor_{\bSigma}$ & JACA & ssJACA & SLDA  sep &SLDA joint&Sparse CCA&Sparse sCCA & CVR \\
  \hline
(100,100) & $(\bW_1,\bTheta_1)$ & 0.840 (0.021) & {\bf 0.912} (0.015) & 0.825 (0.031) & 0.836 (0.010) & 0.687 (0.017) & 0.744 (0.008) & 0.726 (0.028) \\ 
   & $(\bW_2,\bTheta_2)$ & 0.907 (0.011) & {\bf 0.912} (0.010) & 0.883 (0.019) & 0.816 (0.008) & 0.755 (0.009) & 0.823 (0.000) & 0.697 (0.021) \\ 
  (100,500) & $(\bW_1,\bTheta_1)$ & 0.844 (0.015) & {\bf 0.908} (0.014) & 0.825 (0.030) & 0.834 (0.010) & 0.704 (0.017) & 0.745 (0.009) & 0.718 (0.027) \\ 
   & $(\bW_2,\bTheta_2)$ & {\bf 0.895} (0.013) & 0.877 (0.010) & 0.883 (0.017) & 0.818 (0.008) & 0.780 (0.010) & 0.732 (0.000) & 0.640 (0.024) \\ 
  (500,500) & $(\bW_1,\bTheta_1)$ & 0.838 (0.022) & {\bf 0.900} (0.020) & 0.840 (0.025) & 0.831 (0.014) & 0.592 (0.018) & 0.711 (0.003) & 0.718 (0.028) \\ 
   & $(\bW_2,\bTheta_2)$ & {\bf 0.898} (0.013) & 0.886 (0.011) & 0.884 (0.009) & 0.817 (0.008) & 0.657 (0.033) & 0.738 (0.000) & 0.637 (0.018) \\ 
    \hline\hline
  \end{tabular}}
 \end{table}
\begin{table}[!t]
\center
 \caption{\label{tab:simu2_corcase3}Comparison of estimation correlation of Case $3$ over 100 replications when $D=2$, $K=2$. Standard errors are given in the brackets and the highest values are highlighted in bold.}
\resizebox{\textwidth}{!}{
 \begin{tabular}{llccccccc}
  \hline\hline
      $(p_1, p_2)$ & $\Cor_{\bSigma}$ & JACA & ssJACA & SLDA  sep &SLDA joint&Sparse CCA&Sparse sCCA & CVR \\
  \hline
(100,100) & $(\bW_1,\bTheta_1)$ & 0.843 (0.021) & {\bf 0.914} (0.017) & 0.830 (0.028) & 0.837 (0.009) & 0.098 (0.001) & 0.682 (0.012) & 0.727 (0.010) \\ 
   & $(\bW_2,\bTheta_2)$ & 0.904 (0.010) & {\bf 0.908} (0.011) & 0.888 (0.019) & 0.805 (0.008) & 0.040 (0.016) & 0.720 (0.000) & 0.714 (0.020) \\ 
  (100,500) & $(\bW_1,\bTheta_1)$ & 0.844 (0.016) & {\bf 0.909} (0.014) & 0.822 (0.030) & 0.833 (0.006) & 0.117 (0.001) & 0.728 (0.013) & 0.731 (0.013) \\ 
   & $(\bW_2,\bTheta_2)$ & {\bf 0.896} (0.013) & 0.876 (0.011) & 0.884 (0.020) & 0.820 (0.007) & 0.043 (0.015) & 0.700 (0.000) & 0.625 (0.027) \\ 
  (500,500) & $(\bW_1,\bTheta_1)$ & 0.839 (0.022) & {\bf 0.900} (0.020) & 0.842 (0.025) & 0.831 (0.015) & 0.033 (0.000) & 0.694 (0.003) & 0.714 (0.019) \\ 
   & $(\bW_2,\bTheta_2)$ & {\bf 0.898} (0.013) & 0.887 (0.012) & 0.885 (0.008) & 0.817 (0.008) & 0.033 (0.008) & 0.716 (0.000) & 0.636 (0.019) \\ 
   \hline\hline
 \end{tabular}}
 \end{table}
   \begin{figure}[!t]
  \centering
  \includegraphics[scale=.35]{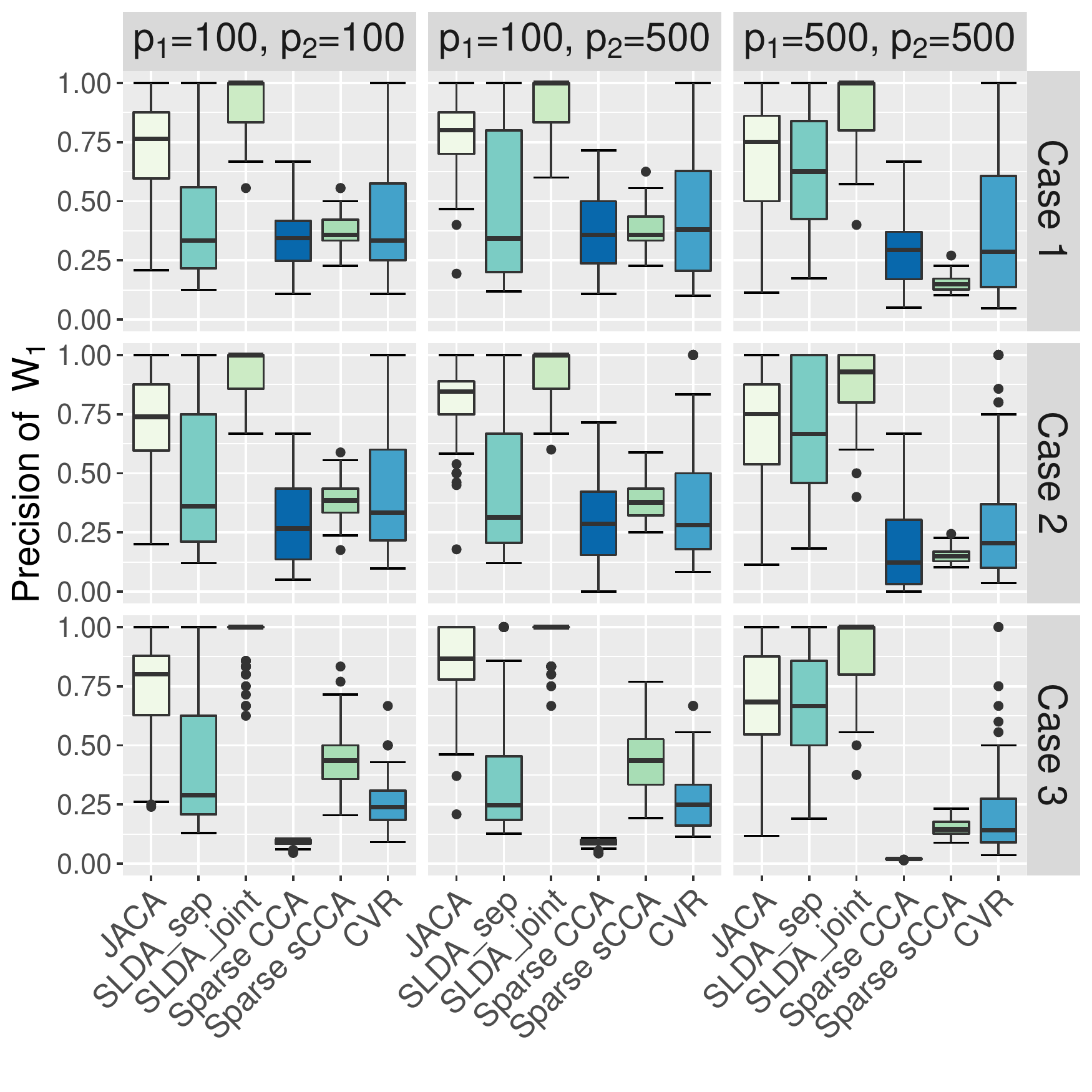} 
  \includegraphics[scale=.35]{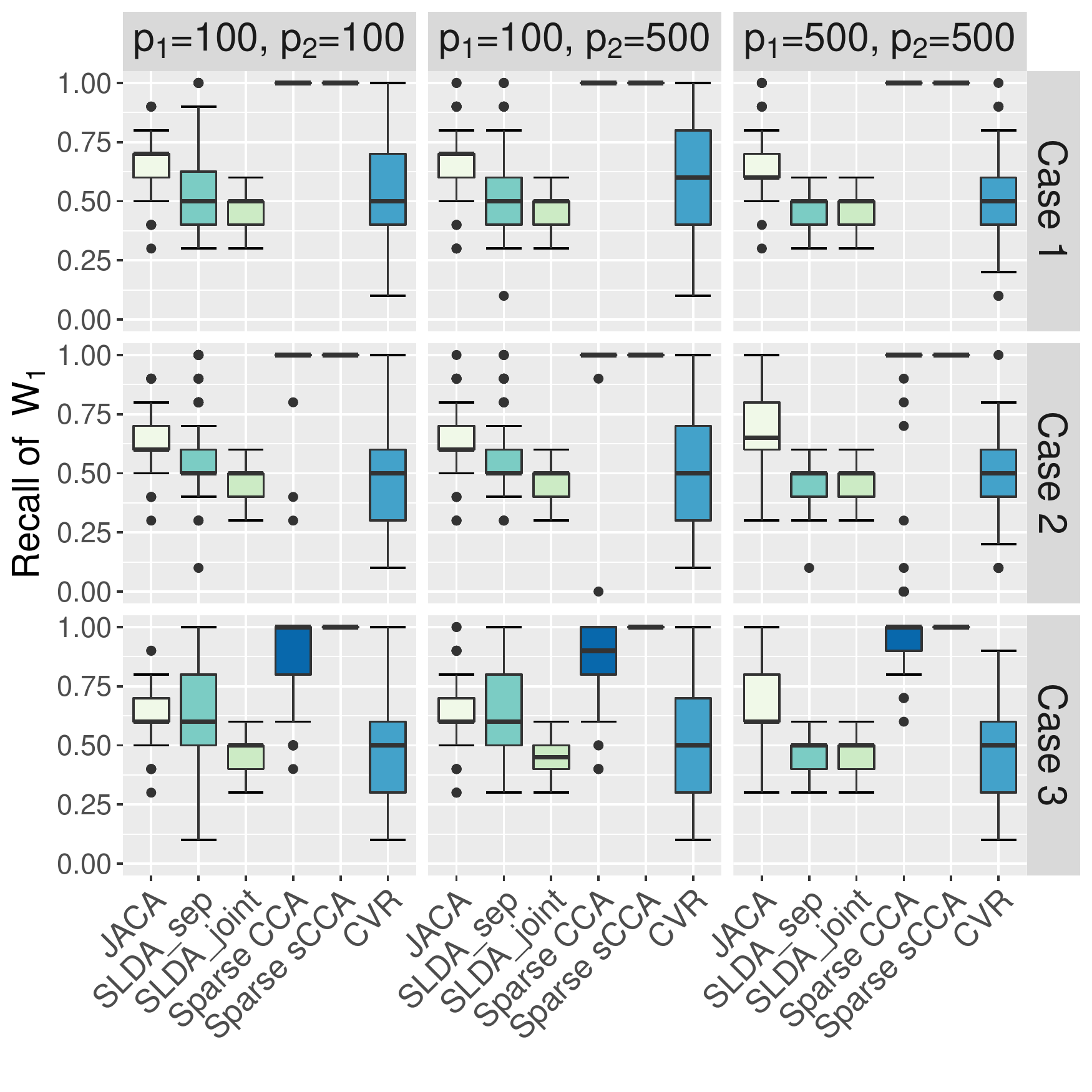}
  \includegraphics[scale=.35]{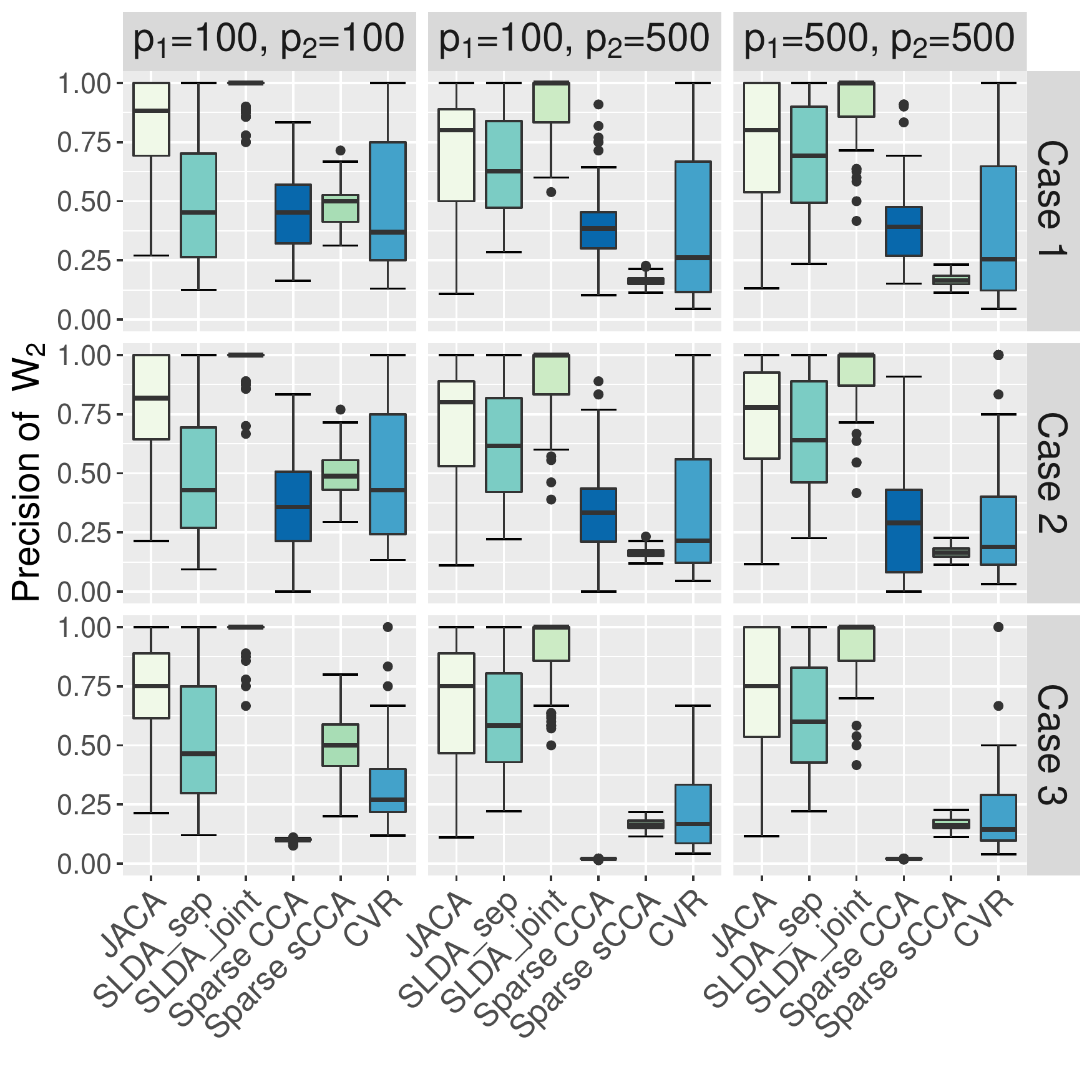} 
  \includegraphics[scale=.35]{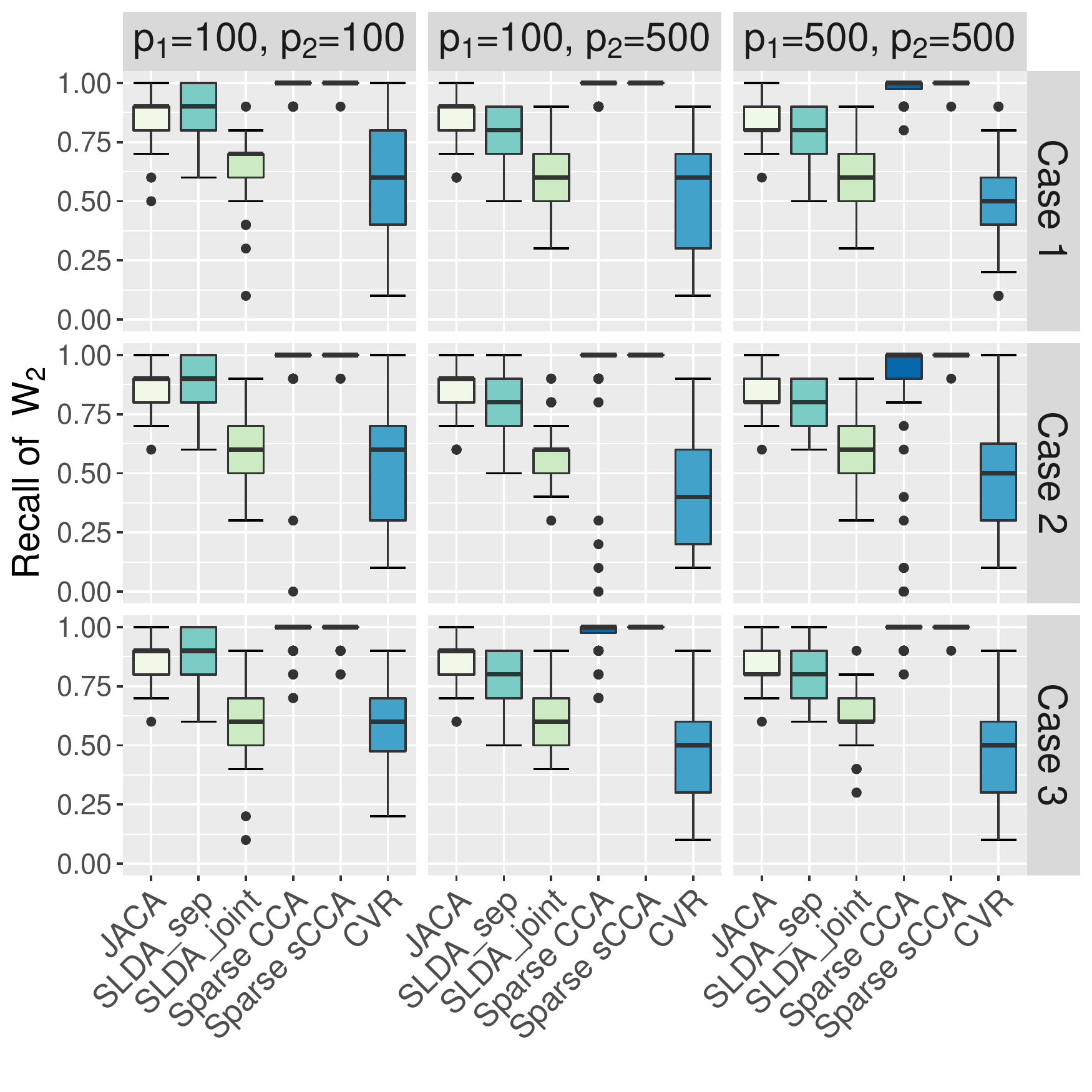} 
  \caption{Precision and Recall over 100 replications when $D=2$ and $K=2$.}
  \label{fig:simu_prerec}
  \end{figure}

\newpage
  \subsection{Multiple datasets, multiple groups}\label{sec:multiple}

We set $n_1=240$, $K=3$, and generate $n_1$ independent $y\in\{1,2,3\}$ with $\pi_1=0.4$, $\pi_2=\pi_3=0.3$. We also generate $n_1    $ tuples $(\bx_1, \bx_2, \bx_3)\in \R^{p_1}\times \R^{p_2}\times \R^{p_3}$ with $p_1 = p_2 = p_3\in \{100, 500\}$ following Section~\ref{s:datageneration}. Next, we generate $n_2=100$ samples and set   corresponding  class  information  as  missing. We train ssJACA on all $n_1+n_2=340$ samples and train other methods on $n_1=240$ complete samples.
We set $\widetilde \bSigma_{1}=(0.8^{|i-j|})_{ij}$, $\widetilde \bSigma_{2}=(0.5^{|i-j|})_{ij}$ and $\widetilde \bSigma_{3} = \bI$. 
We let canonical correlations due to class membership be $\rho_1 = \rho_2 = 0.8$, and consider the following cases for other shared factors:
\begin{description}
\item[Case 1:] $q=0$, no shared factors except class $y$;
\item[Case 2:] $q = 3$ with $\rho_3 = \rho_4 = \rho_5 = 0.6$;
\item[Case 3:] $q = 3$ with $\rho_3=0.9$, $\rho_4 = 0.9$, $\rho_5 = 0.5$. 
\end{description}
Similar to Section~\ref{sec:2groups}, the  misclassification rates are evaluated on $10,000$ independently generated test samples.  We do not consider Sparse CCA methods because they are not directly applicable to the case of more than two views and more than two classes.  While the issue of more than two views can be addressed by Multi~CCA generalization \citep{Witten:2009wa}, both Sparse~CCA and Multi~CCA find $K-1$ pairs of canonical vectors sequentially. As a result, one also needs to tune sparsity parameters sequentially leading to computationally expensive procedure with different sparsity patterns across canonical vector pairs. We also do not consider CVR as it is only implemented for binary classification problem. 

The results for JACA, ssJACA and SLDA methods are reported in Tables~\ref{tab:simu_multicorcase}--\ref{tab:simu_multi_corr_case} and Figure~\ref{fig:multi_prerec}.   ssJACA performs the best in terms of misclassification rates and estimation consistency in most scenarios, and always performs the best in terms of sum correlation. It also achieves the best trade off between precision and recall.  When predicted based on $\bX_1$ alone, JACA and ssJACA have similar performance with SLDA\_sep, but SLDA\_sep's performance decreases significantly as $p$ increases. On the other hand, SLDA\_joint performs poorly in most cases.

   \begin{table}[!t]
   \center
    \caption{ \label{tab:simu_multicorcase}Comparison of misclassification  rates  over 100 replication when $D=3$, $K=3$.
 Standard errors are given in the brackets and the lowest values are highlighted in bold.}
\resizebox{\textwidth}{!}{ \begin{tabular}{llcccccccc}
 \hline\hline
  & &\multicolumn{4}{c}{$p_1=p_2=p_3=100$}&\multicolumn{4}{c}{$p_1=p_2=p_3=500$}\\
\cmidrule(lr){3-6} \cmidrule(lr){7-10}
      &Error rate ($\%$) & JACA & ssJACA & SLDA sep &SLDA joint& JACA & ssJACA & SLDA sep &SLDA joint \\
  \hline
Case 1 & $(\bX_1)$ & 2.632 (0.051) & {\bf 2.268} (0.019) & 2.511 (0.056) & 7.182 (0.155) & 4.555 (0.076) & {\bf 2.583} (0.022) & 5.398 (0.105) & 7.014 (0.150) \\ 
   & $(\bX_2)$ & 2.112 (0.017) & {\bf 1.610} (0.014) & 2.350 (0.046) & 4.746 (0.241) & 1.988 (0.013) & {\bf 1.917} (0.020) & 2.343 (0.062) & 4.328 (0.160) \\ 
   & $(\bX_3)$ & 1.750 (0.016) & {\bf 1.556} (0.013) & 1.802 (0.035) & 22.344 (0.957) & {\bf 1.450} (0.016) & 1.577 (0.014) & 1.581 (0.041) & 22.041 (1.044) \\ 
   & $(\bX_1,\bX_2, \bX_3)$ & 0.010 (0.001) & {\bf 0.005} (0.001) & 0.015 (0.001) & 0.389 (0.034) & 0.025 (0.001) & {\bf 0.011} (0.001) & 0.051 (0.003) & 0.430 (0.036) \\ 
  Case 2 & $(\bX_1)$ & 2.545 (0.049) & {\bf 2.279} (0.019) & 2.370 (0.040) & 7.389 (0.166) & 4.524 (0.077) & {\bf 2.580} (0.023) & 5.361 (0.100) & 7.245 (0.188) \\ 
   & $(\bX_2)$ & 2.127 (0.017) & {\bf 1.615} (0.014) & 2.363 (0.043) & 4.596 (0.151) & 1.994 (0.013) & {\bf 1.940} (0.020) & 2.225 (0.042) & 4.441 (0.165) \\ 
   & $(\bX_3)$ & 1.770 (0.016) & {\bf 1.555} (0.012) & 1.790 (0.032) & 22.771 (0.935) & {\bf 1.458} (0.017) & 1.581 (0.013) & 1.550 (0.037) & 22.573 (0.992) \\ 
   & $(\bX_1,\bX_2, \bX_3)$ & 0.009 (0.001) & {\bf 0.005} (0.001) & 0.015 (0.001) & 0.363 (0.027) & 0.025 (0.001) & {\bf 0.012} (0.001) & 0.048 (0.003) & 0.449 (0.036) \\ 
  Case 3 & $(\bX_1)$ & 2.384 (0.039) & {\bf 2.266} (0.019) & 2.289 (0.039) & 7.355 (0.140) & 4.391 (0.080) & {\bf 2.550} (0.021) & 5.351 (0.105) & 7.259 (0.182) \\ 
   & $(\bX_2)$ & 2.139 (0.018) & {\bf 1.633} (0.015) & 2.393 (0.049) & 4.534 (0.147) & 2.006 (0.015) & {\bf 1.925} (0.019) & 2.337 (0.059) & 4.536 (0.190) \\ 
   & $(\bX_3)$ & 1.818 (0.017) & {\bf 1.573} (0.012) & 1.809 (0.033) & 23.773 (0.980) & {\bf 1.485} (0.018) & 1.577 (0.013) & 1.589 (0.036) & 22.821 (1.004) \\ 
   & $(\bX_1,\bX_2, \bX_3)$ & 0.010 (0.001) & {\bf 0.005} (0.001) & 0.016 (0.001) & 0.364 (0.029) & 0.025 (0.001) & {\bf 0.012} (0.001) & 0.052 (0.004) & 0.472 (0.038) \\ 
  \hline\hline
 \end{tabular}}
 \end{table}

\begin{table}[!t]
\center
   \caption{ \label{tab:simu_multisumcor}Comparison of sum correlation  over 100 replication when $D=3$, $K=3$.
 Standard errors are given in the brackets and the highest values are highlighted in bold.}
\resizebox{\textwidth}{!}{ \begin{tabular}{ccccccccc}
 \hline\hline
   &\multicolumn{4}{c}{$p_1=p_2=p_3=100$}&\multicolumn{4}{c}{$p_1=p_2=p_3=500$}\\
\cmidrule(lr){2-5} \cmidrule(lr){6-9}
     & JACA & ssJACA & SLDA sep &SLDA joint& JACA &ssJACA & SLDA sep &SLDA joint \\
  \hline
Case 1 & 2.321 (0.001) & {\bf 2.344} (0.001) & 2.309 (0.004) & 1.196 (0.021) & 2.282 (0.001) & {\bf 2.336} (0.001) & 2.186 (0.010) & 1.213 (0.023) \\ 
  Case 2 & 2.322 (0.001) & {\bf 2.344} (0.001) & 2.314 (0.002) & 1.190 (0.020) & 2.282 (0.001) & {\bf 2.336} (0.001) & 2.186 (0.010) & 1.213 (0.023) \\ 
  Case 3 & 2.326 (0.001) & {\bf 2.346} (0.001) & 2.316 (0.003) & 1.196 (0.020) & 2.284 (0.002) & {\bf 2.337} (0.001) & 2.183 (0.011) & 1.205 (0.022) \\ 
\hline
\end{tabular}}
\end{table}

\begin{table}[!t]
 \caption{\label{tab:simu_multi_corr_case}Comparison of estimation correlation  over 100 replication when $D=3$, $K=3$. 
 Standard errors are given in the brackets and the highest values are highlighted in bold.}
\resizebox{\textwidth}{!}{ \begin{tabular}{cccccccccc}
 \hline\hline
  & &\multicolumn{4}{c}{$p_1=p_2=p_3=100$}&\multicolumn{4}{c}{$p_1=p_2=p_3=500$}\\
\cmidrule(lr){3-6} \cmidrule(lr){7-10}
      &$\Cor_{\bSigma}$ & JACA & ssJACA & SLDA sep &SLDA joint& JACA & ssJACA & SLDA sep &SLDA joint \\
  \hline
Case 1 & $(\bW_1,\bTheta_1)$ & 0.903 (0.002) & {\bf 0.913} (0.001) & 0.906 (0.002) & 0.795 (0.002) & 0.848 (0.002) & {\bf 0.901} (0.001) & 0.825 (0.003) & 0.800 (0.002) \\ 
   & $(\bW_2,\bTheta_2)$ & 0.945 (0.001) & {\bf 0.957} (0.001) & 0.929 (0.003) & 0.794 (0.008) & 0.937 (0.001) & {\bf 0.948} (0.001) & 0.913 (0.004) & 0.801 (0.008) \\ 
   & $(\bW_3,\bTheta_3)$ & 0.959 (0.001) & {\bf 0.975} (0.001) & 0.960 (0.002) & 0.710 (0.010) & 0.969 (0.001) & {\bf 0.977} (0.001) & 0.961 (0.003) & 0.726 (0.011) \\ 
  Case 2 & $(\bW_1,\bTheta_1)$ & 0.908 (0.002) & {\bf 0.914} (0.001) & 0.914 (0.002) & 0.795 (0.002) & 0.850 (0.002) & {\bf 0.902} (0.001) & 0.827 (0.003) & 0.798 (0.002) \\ 
   & $(\bW_2,\bTheta_2)$ & 0.946 (0.001) & {\bf 0.958} (0.001) & 0.931 (0.003) & 0.799 (0.007) & 0.937 (0.001) & {\bf 0.948} (0.001) & 0.921 (0.003) & 0.797 (0.007) \\ 
   & $(\bW_3,\bTheta_3)$ & 0.959 (0.001) & {\bf 0.976} (0.001) & 0.962 (0.002) & 0.709 (0.010) & 0.969 (0.001) & {\bf 0.976} (0.001) & 0.963 (0.002) & 0.726 (0.010) \\ 
  Case 2 & $(\bW_1,\bTheta_1)$ & 0.917 (0.002) & 0.917 (0.001) & {\bf 0.921} (0.002) & 0.802 (0.002) & 0.855 (0.002) & {\bf 0.903} (0.001) & 0.828 (0.003) & 0.799 (0.002) \\ 
   & $(\bW_2,\bTheta_2)$ & 0.948 (0.001) & {\bf 0.961} (0.001) & 0.930 (0.003) & 0.805 (0.007) & 0.937 (0.001) & {\bf 0.949} (0.001) & 0.914 (0.004) & 0.794 (0.008) \\ 
   & $(\bW_3,\bTheta_3)$ & 0.957 (0.001) & {\bf 0.976} (0.001) & 0.963 (0.002) & 0.702 (0.010) & 0.967 (0.001) & {\bf 0.977} (0.001) & 0.960 (0.003) & 0.719 (0.010) \\ 
   \hline\hline
 \end{tabular}}
 \end{table}
 
   \begin{figure}[!t]
  \centering
  \includegraphics[scale=.25]{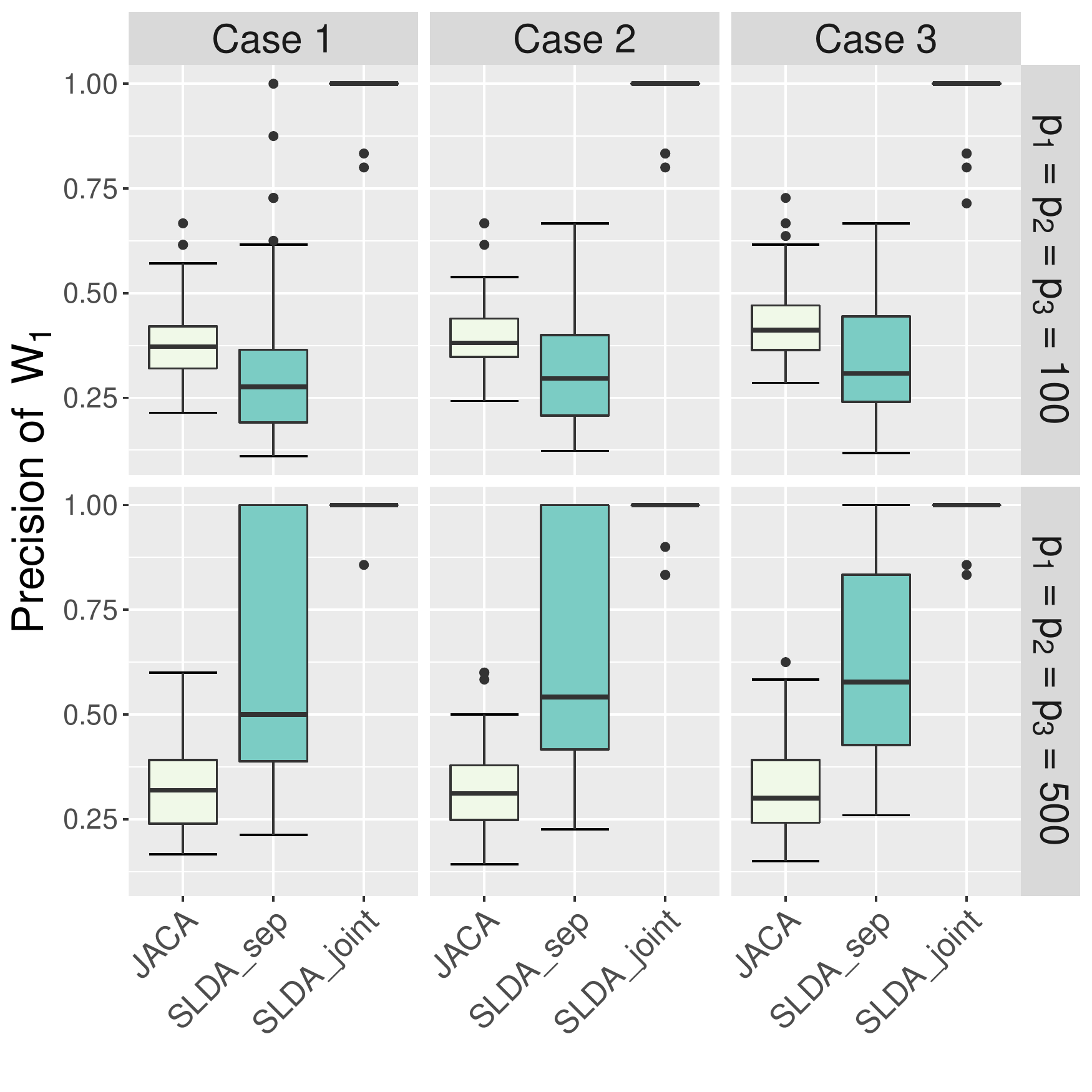} 
  \includegraphics[scale=.25]{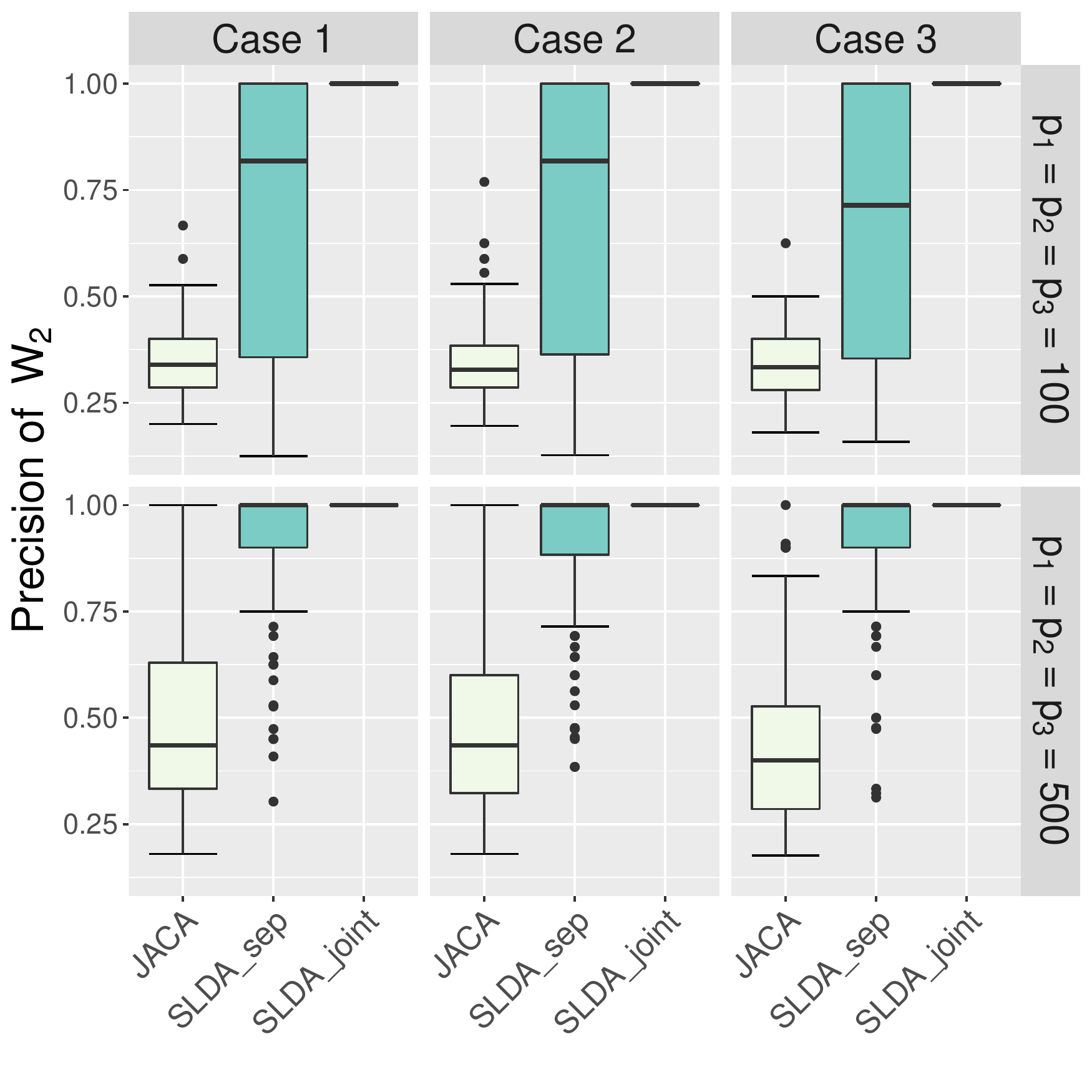} 
  \includegraphics[scale=.25]{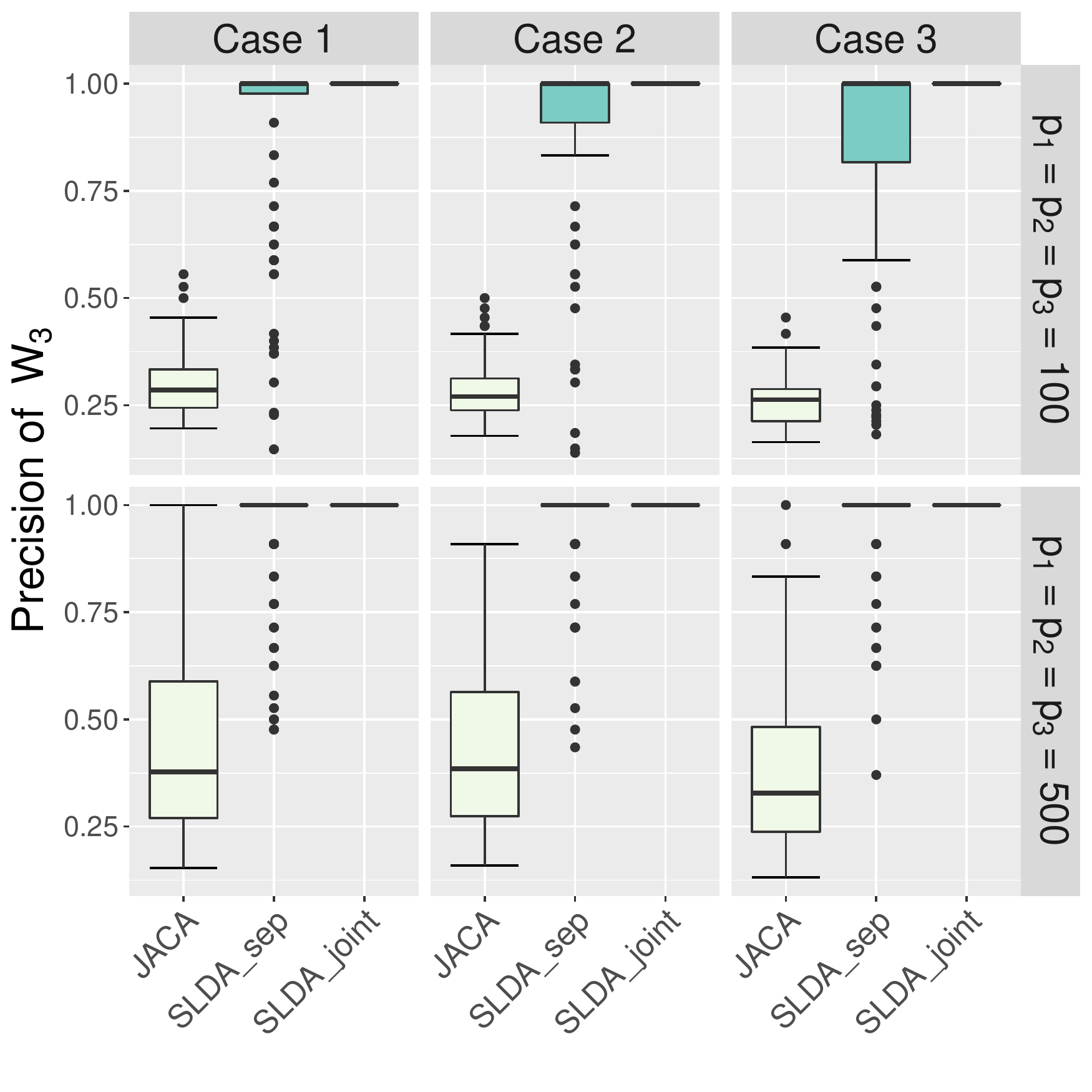} 
  \includegraphics[scale=.25]{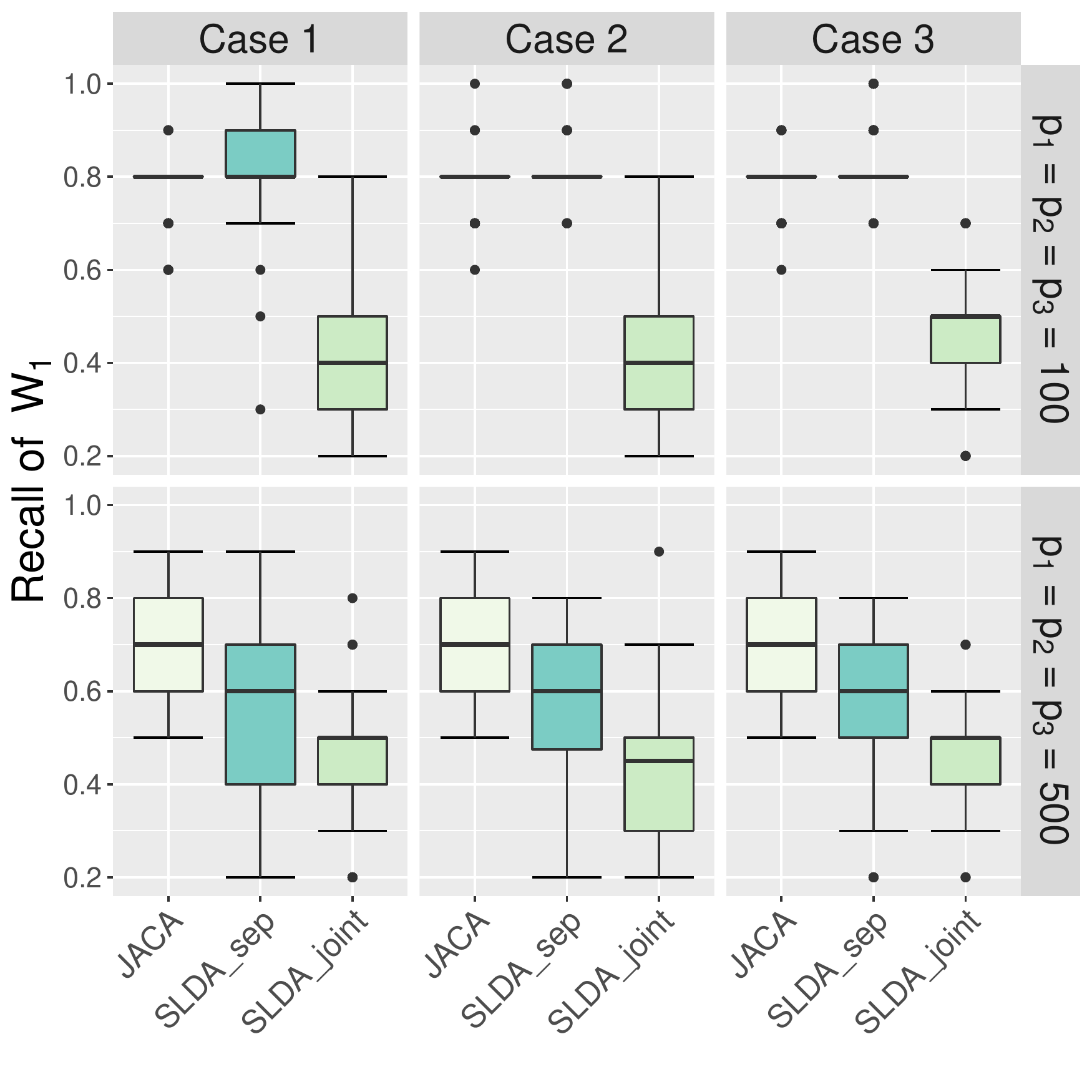}
  \includegraphics[scale=.25]{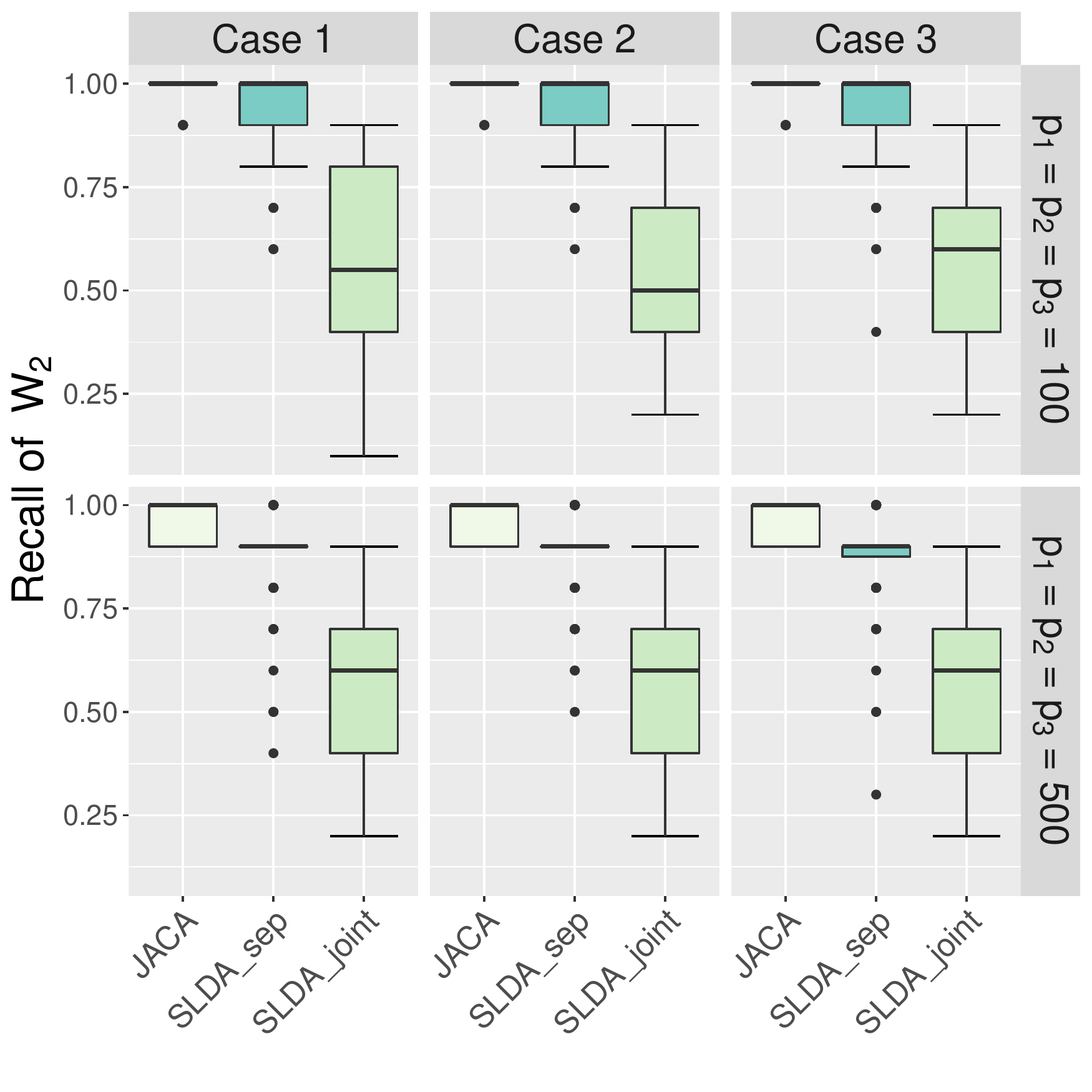} 
  \includegraphics[scale=.25]{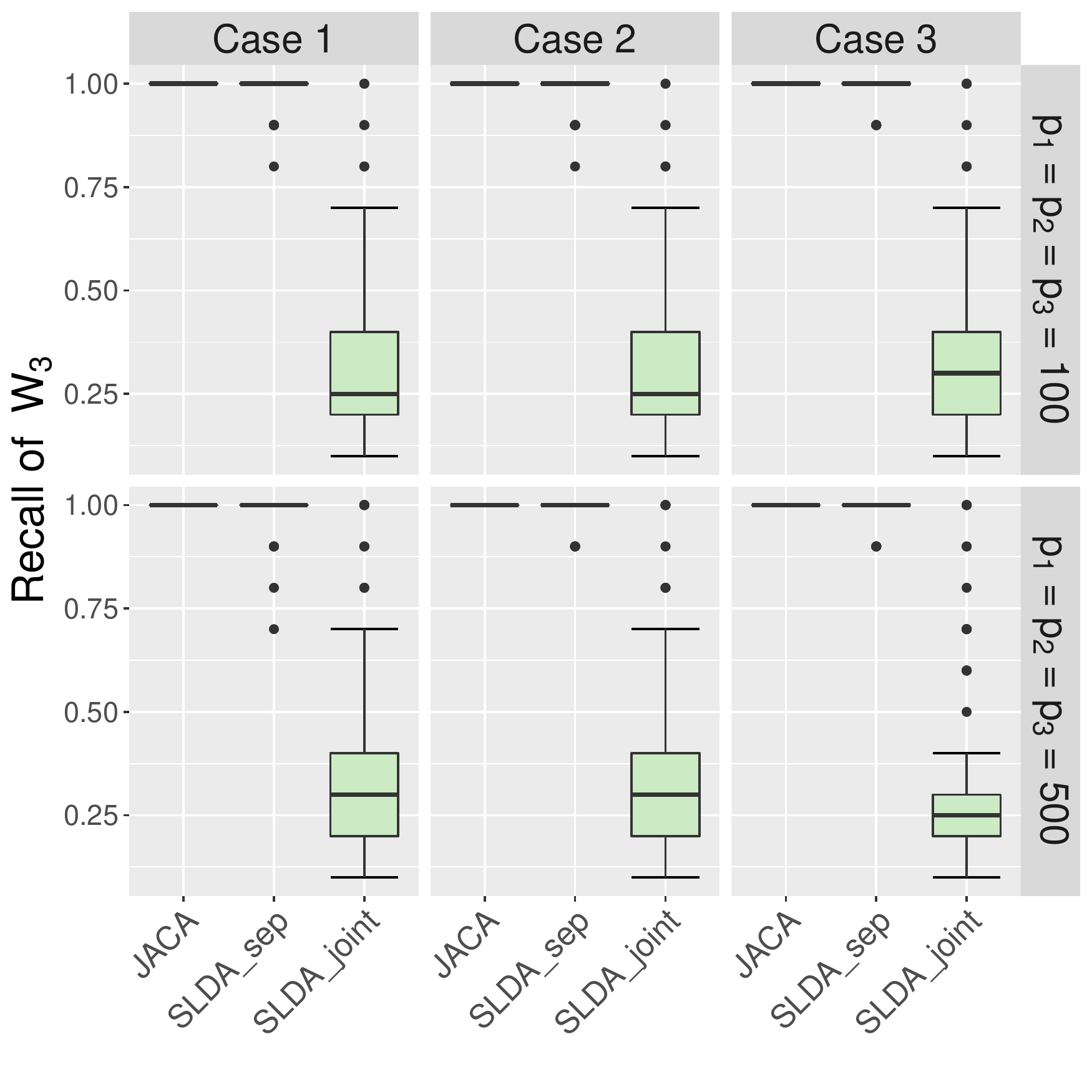} 
  \caption{Precision and Recall over 100 replications when $D = 3$ and $K=3$.}
  \label{fig:multi_prerec}
  \end{figure}



\section{Additional data analysis}\label{sec:addiData}

\subsection{TCGA-COAD data}\label{sec:addiCOAD}
In this section we present additional results from the analysis of COAD data from Section~\ref{sec:COAD}. The enlarged heatmaps from Figure~\ref{fig:heatmap} are displayed in Figure~\ref{fig:ssJACAheatmap}.


 \begin{figure}[!t]
  \centering
  \includegraphics[scale=.45]{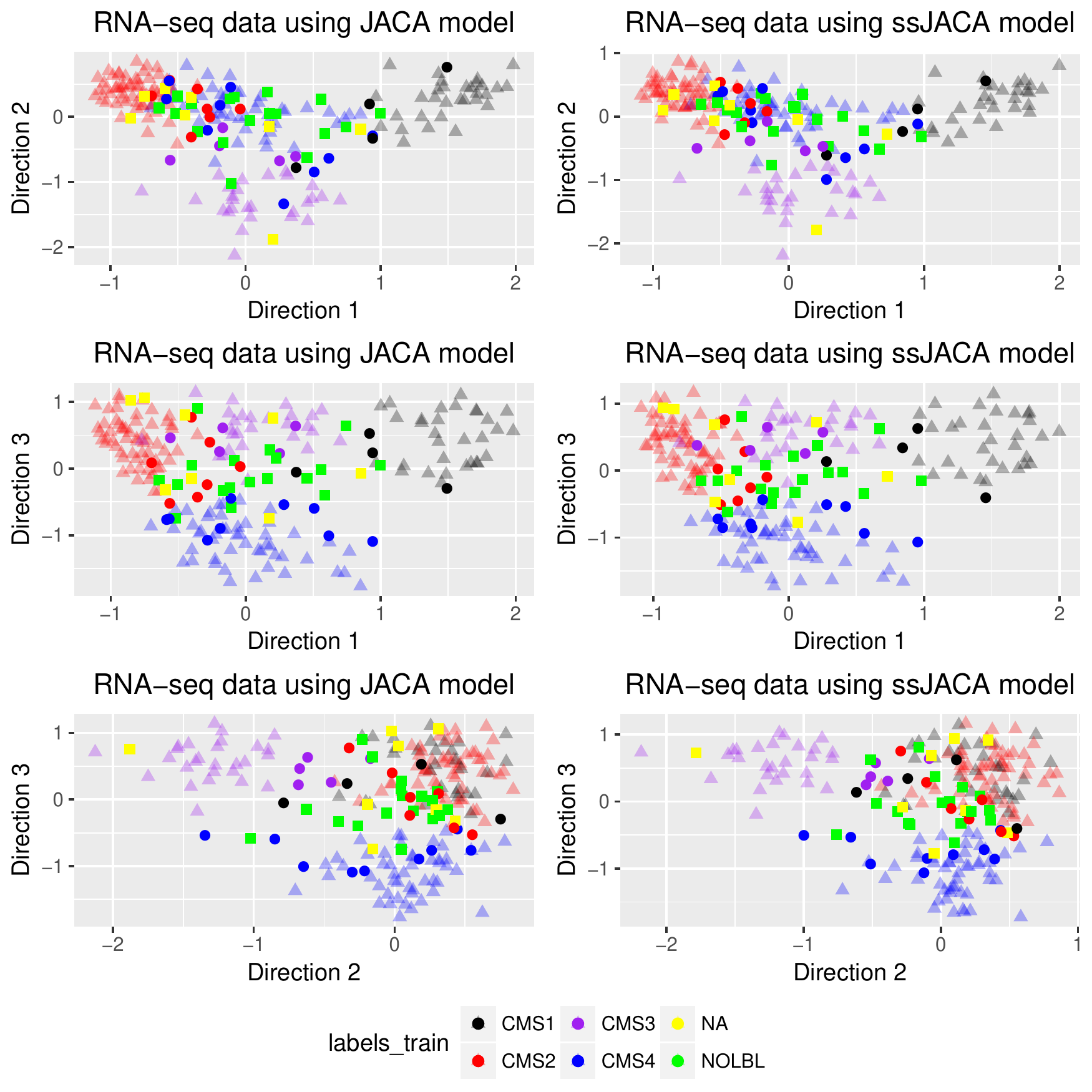} 
  \includegraphics[scale=.45]{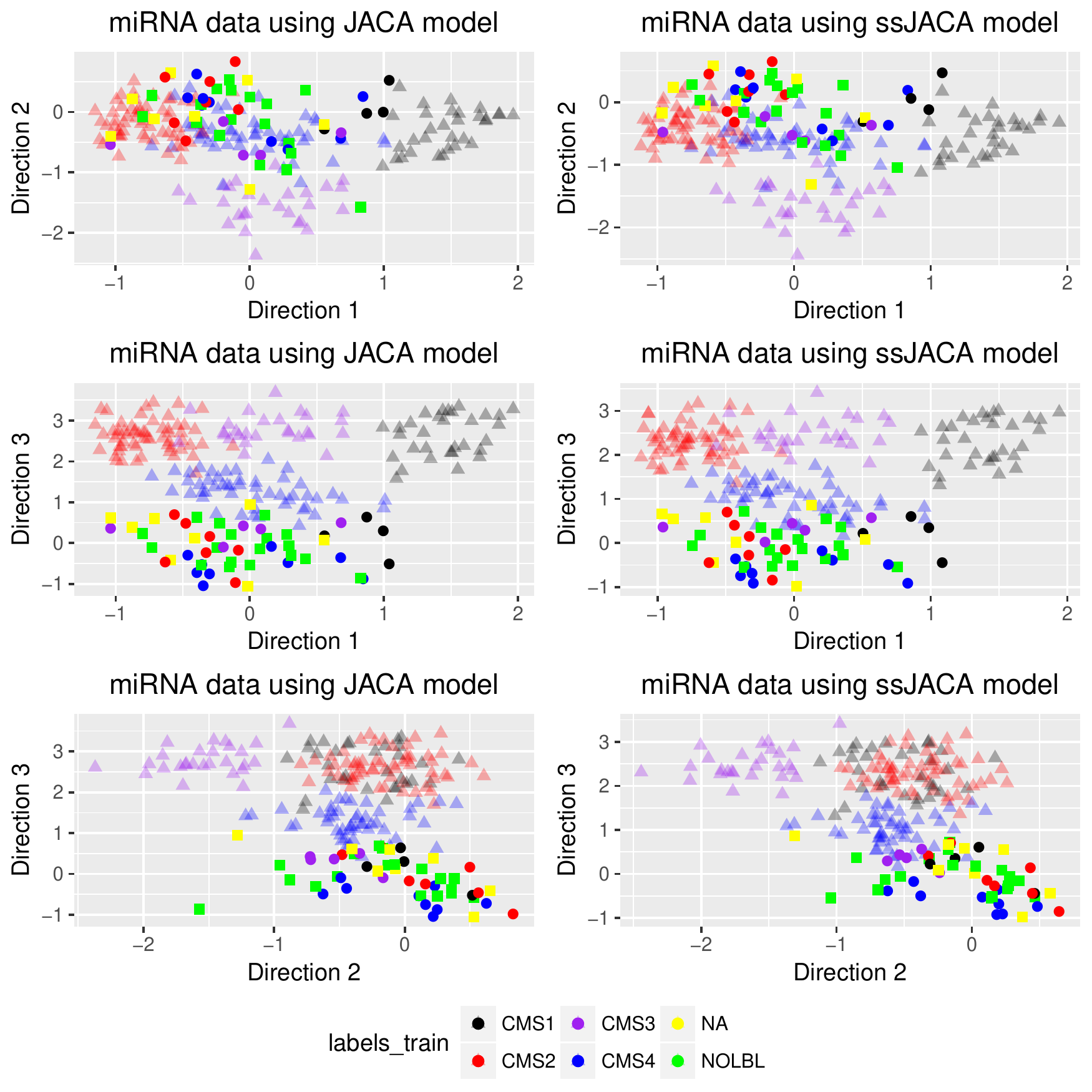} 

  \caption{Projection of RNAseq and miRNA views from COAD data onto discriminant directions found by JACA and ssJACA.}
  \label{fig:projection}
  \end{figure}  

     \begin{figure}[!t]
  \centering
  \includegraphics[scale=.56]{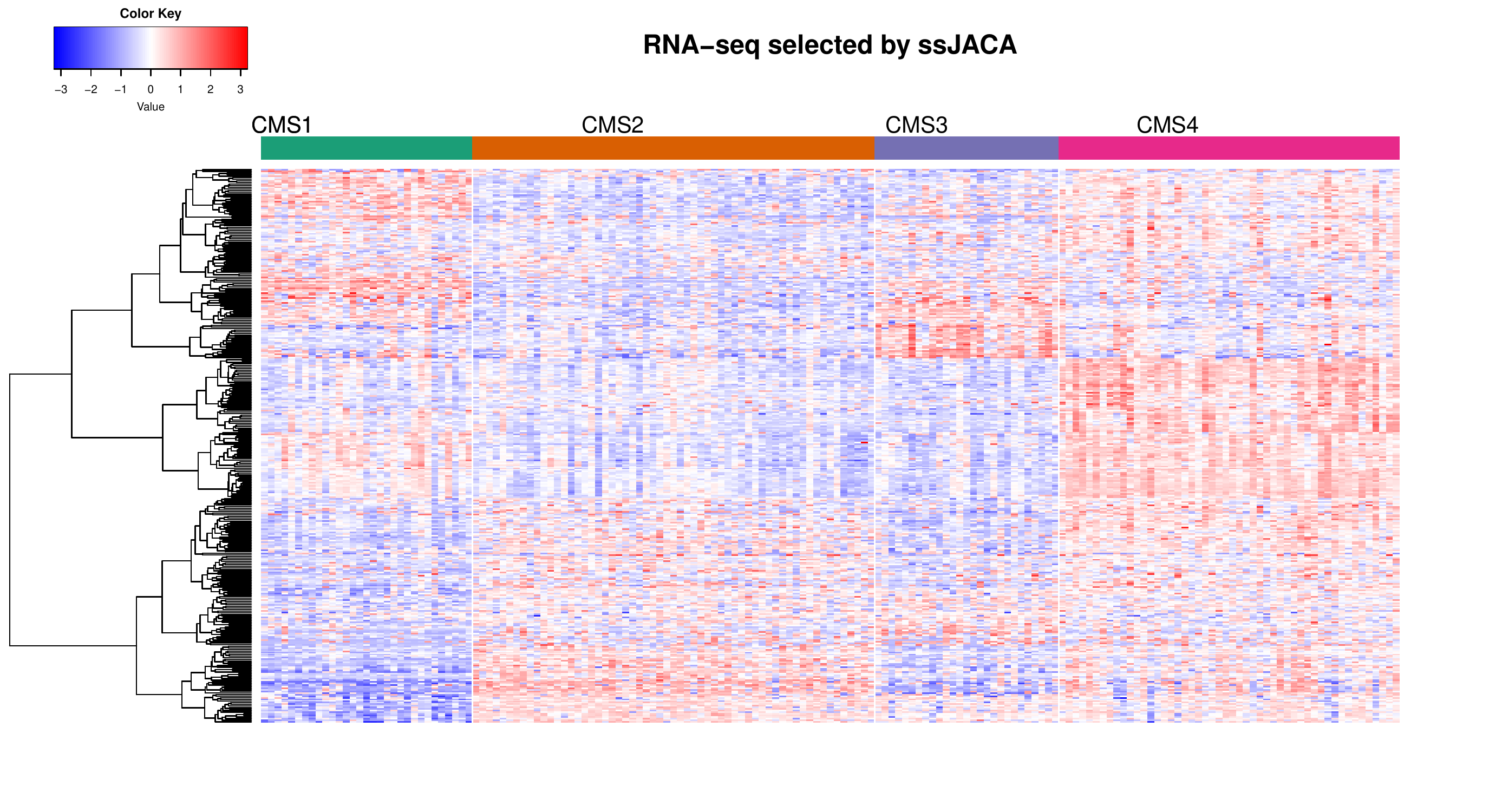} 
  \includegraphics[scale=.56]{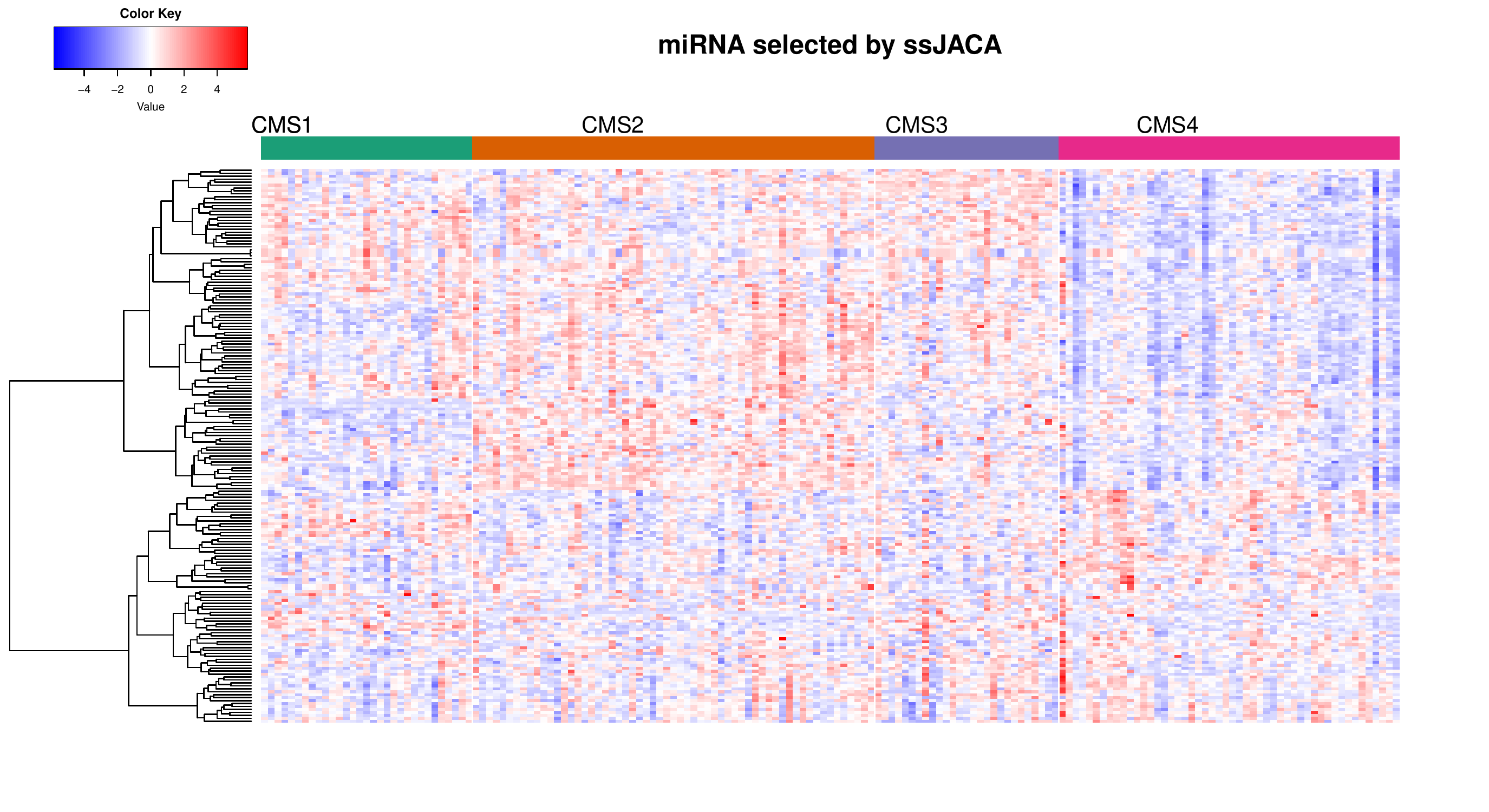} 
  \caption{Heatmaps of RNAseq and miRNA views from COAD data based on features selected by ssJACA. We use Ward's linkage with euclidean distances for feature ordering.}
  \label{fig:ssJACAheatmap}
  \end{figure} 
  
  We also consider the visual separation of subtypes based on the projection of RNAseq and miRNA data using discriminant directions found by JACA and ssJACA (Figure \ref{fig:projection}).  The triangular points in transparent colors indicate $167$ subjects with complete view and subtype information. The round points in solid colors are subjects who have missing subtypes, but for whom the subtypes have been previously predicted using random forest classifier \citep{Guinney:2015dm}. We treat these predictions as the gold standard. The square points in solid colors are subjects with no assigned subtype, which are deemed to have mixed subtype membership \citep{Guinney:2015dm}. The subtype separation is clear based on the projected values, with square points being often in the middle of other subtypes, thus confirming the possibility of mixed subtype membership for those subjects.
  

\subsection{TCGA-BRCA dataset}\label{sec:BRCA}

We consider breast cancer data from The Cancer Genome Atlas project with 4 views: gene expression (GE), DNA methylation (ME), miRNA expression (miRNA), and reverse phase protein array (RPPA). The samples are separated into 4 breast cancer subtypes: Basal, LumA, LumB and Her2 \citep{Network:2012fy}. Five samples are labelled as Normal-like, and we exclude them from the analyses. \citet{Li:2016cf} incorporate subtypes into supervised singular value decomposition, however only GE view is considered. \citet{Lock:2013vy} and \citet{Gaynanova:2017te} jointly analyze all views, however do not take advantage of the subtypes. In this section, we apply JACA to understand the subtype-driven relationships between the views. We use data from \url{https://gdc.cancer.gov/about-data/publications/brca_2012} and the same data-processing as in \citet{Lock:2013vy}. While the combined number of subjects is 792, only 377 have complete view/subtype information (see Table~\ref{tab:BRCAdata}). 

\begin{table}[!t]
\center
\caption{\label{tab:BRCAdata}Number of samples in BRCA data with different missing patterns of views and cancer subtype. There are only 377 samples with complete information, whereas semi-supervised JACA approach allows to use 708 (all except the last row).}
\begin{tabular}{rrrrrr}
  \hline\hline
GE & ME & miRNA & RPPA & Cancer type & Count \\ 
  \hline  
yes & yes & yes & yes & yes & 377 \\ 
  yes & yes & yes & no & yes & 114 \\ 
  yes & yes & no & yes & yes & 19 \\ 
  yes & yes & no & no & yes & 3 \\ 
  yes & no & yes & yes & yes & 1 \\ 
  no & yes & yes & yes & no & 1 \\ 
  no & yes & yes & no & no & 193 \\ 
  no & yes & no & no & no & 84 \\ 
  \hline
  &  &  &  &  & Total = 792 \\ 
  \hline  \hline
\end{tabular}
\end{table}

We compare JACA and ssJACA with SLDA\_sep and SLDA\_joint on the 377 subjects with complete view/subtype information following the same strategy as in Section~\ref{sec:COAD}. We randomly select $299$ samples for training and the rest for testing. For ssJACA, we additionally add $331$ subjects (at least two views available) into the training set.  We set $\alpha$ in JACA and ssJACA to be $0.7$ as in Section~\ref{sec:COAD}. We do not consider CVR due to $K>2$ and $D>2$, and we do not consider Sparse CCA or Sparse sCCA 
due to their poor performance on COAD data. 

Tables~\ref{tab:error_BRCA} and \ref{tab:card_BRCA} display the mean misclassification error rates and the number of selected variables for each view, where the predictions are made either separately on each view, or jointly using all views. The results are similar to Section~\ref{sec:COAD}. The misclassification rates are higher when using ME, miRNA or RPPA compared to GE, which is not surprising since BRCA subtypes are originally determined based on gene expression. When predicting based on ME, JACA outperforms ssJACA in terms of misclassification rates, but it has a lower sum correlation in the meantime. Again, this is the result of the trade-off between the classification and association tasks. If the classification is the sole goal, then we recommend to accordingly modify the parameter selection criterion in Section~\ref{sec:tuning}. 
SLDA\_joint has the worst error rates, especially when using other views than GE. This is because SLDA\_joint selects very few variables from other views since the subtype-specific signal is the strongest in GE view. JACA and ssJACA have slightly better performance than SLDA\_sep using GE, and significantly better performance on other views, which suggests the advantage of taking into account the associations between the views. JACA and ssJACA also have higher cardinality, which is consistent with simulation results in Section~\ref{sec:simu}. Table~\ref{tab:correlation_BRCA} displays the sum correlation, with ssJACA performing the best compared to other methods.

\begin{table}[!t]
\center
\caption{\label{tab:error_BRCA}Mean misclassification error rates over 100 splits of 377 samples from BRCA data, standard errors are given in brackets and the lowest values are highlighted in bold.} 
\begin{tabular}{lccccc}
  \hline\hline
  & \multicolumn{5}{c}{Misclassification Rate  (\%)}\\
  \cmidrule(lr){2-6} 
 Method & GE & ME & miRNA & RPPA & All \\
 \hline
JACA & {\bf 4.41} (0.17) & {\bf10.5} (0.31) & 10.58 (0.27) & 14.22 (0.14) & {\bf7.23} (0.2) \\ 
  ssJACA & {\bf4.35} (0.16) & 14.79 (0.28) & {\bf9.72} (0.24) & {\bf13.76} (0.2) & 8.12 (0.18) \\ 
  SLDA\_sep & 6.76 (0.18) & 20 (0.46) & 17.9 (0.39) & 18.99 (0.3) & 12.01 (0.16) \\ 
  SLDA\_joint & 10.94 (0.21) & 55.08 (0.96) & 57.44 (1.29) & 45.01 (1.1) & 11.92 (0.24) \\ 
  \hline\hline
\end{tabular}
\end{table}

\begin{table}[!t]
\center
\caption{\label{tab:card_BRCA}Mean numbers of selected features over 100 splits of 377 samples from BRCA data, standard errors are given in brackets and the lowest values are highlighted in bold. }
\begin{tabular}{lccccc}
  \hline\hline
  & \multicolumn{5}{c}{Cardinality} \\
  \cmidrule(lr){2-6}
 Method & GE & ME & miRNA & RPPA & All   \\
 \hline 
JACA & 388 (15.7) & 321.1 (7.9) & 233.8 (6.4) & 114 (2.2) & 709.1 (23.4) \\ 
  ssJACA & 482.4 (11.6) & 397.6 (5.6) & 284.4 (4.4) & 136.9 (1.5) & 880 (17.1) \\ 
  SLDA\_sep & 65.6 (3.2) & 80.9 (4.1) & 56.1 (2.5) & 28.1 (2.1) & 146.5 (5.4) \\ 
  SLDA\_joint & {\bf48.9} (2.5) & {\bf2.6} (0.3) & {\bf2.2} (0.3) & 3{\bf.1 (0.2)} & {\bf51.5} (2.8) \\ 
  \hline \hline
\end{tabular}
\end{table}

\begin{table}[!t]
\center
\caption{\label{tab:correlation_BRCA}Analysis based on 377 samples from BRCA data with complete view and subtype information based on 100 random splits. Mean of sum correlation $\sum_{l\neq d}\Cor(\bX_l\widehat\bW_l, \bX_d\widehat\bW_d)$, where $\bX_d$ are samples from test data and $\widehat \bW_d$ are estimated from training data. Standard errors are given in brackets and the highest value is highlighted in bold.}
\begin{tabular}{lcccc}
  \hline\hline
 & JACA & ssJACA & SLDA\_sep & SLDA\_joint \\ 
  \hline
Correlation & 5.23 (0.004) & {\bf 5.46} (0.004) & 4.99 (0.006) & 1.1 (0.052) \\ 
  \hline\hline
\end{tabular}
\end{table}

\end{appendix}

\bibliographystyle{apalike} 
\bibliography{Main}

\end{document}